\newtheorem{theorem}{Theorem}[section]
\newtheorem{lemma}[theorem]{Lemma} 
\newtheorem{proposition}[theorem]{Proposition} 
\newtheorem{corollary}[theorem]{Corollary}
\theoremstyle{definition}
\newtheorem{assumption}[theorem]{Assumption}
\theoremstyle{remark}
\def\var{\mbox{Var}} 
\def\E{\mathbb{E}} 
\def\exp{\mbox{exp}}
\def\pa{\mbox{pa}}
\def\nd{\mbox{nd}}
\def\S{S}
\def\T{S}
\def\M{M}
\def\d{d}
\DeclareMathAlphabet\mathbfcal{OMS}{cmsy}{b}{n}
\begin{document}

\begin{center}
	{\bf{\LARGE{Learning Quadratic Variance Function (QVF) DAG models via OverDispersion Scoring (ODS)}}}
	
	\vspace*{.1in}
	\begin{tabular}{cc}
		Gunwoong Park$^1$\;\;Garvesh Raskutti$^{2,3,4}$\\
	\end{tabular}
	
	\vspace*{.1in}
	
	\begin{tabular}{c}
		$^1$ Department of Statistics, University of Michigan-Ann Arbor \\
		$^2$ Department of Statistics, University of Wisconsin-Madison \\
		$^3$ Department of Computer Science\\
		$^4$ Wisconsin Institute for Discovery, Optimization Group \\
	\end{tabular}
	
	\vspace*{.1in}
	
	
\end{center}
\begin{abstract}
Learning DAG or Bayesian network models is an important problem in multi-variate causal inference. However, a number of challenges arises in learning large-scale DAG models including model identifiability and computational complexity since the space of directed graphs is huge. In this paper, we address these issues in a number of steps for a broad class of DAG models where the noise or variance is signal-dependent. Firstly we introduce a new class of identifiable DAG models, where each node has a distribution where the variance is a quadratic function of the mean (QVF DAG models). Our QVF DAG models include many interesting classes of distributions such as Poisson, Binomial, Geometric, Exponential, Gamma and many other distributions in which the noise variance depends on the mean. We prove that this class of QVF DAG models is identifiable, and introduce a new algorithm, the OverDispersion Scoring (ODS) algorithm, for learning large-scale QVF DAG models. Our algorithm is based on firstly learning the moralized or undirected graphical model representation of the DAG to reduce the DAG search-space, and then exploiting the quadratic variance property to learn the causal ordering. We show through theoretical results and simulations that our algorithm is statistically consistent in the high-dimensional $p>n$ setting provided that the degree of the moralized graph is bounded and performs well compared to state-of-the-art DAG-learning algorithms.
\end{abstract}

\section{Introduction}

Probabilistic directed acyclic graphical (DAG) models or Bayesian networks provide a widely used framework for representing causal or directional dependence relationships amongst multiple variables. DAG models have applications in various areas including genomics, neuroimaging, statistical physics, spatial statistics and many others (see e.g.,~\cite{doya2007bayesian, friedman2000using, kephart1991directed}). One of the fundamental problems associated with DAG models or Bayesian networks is structure learning from observational data.

If the number of variables is large, a number of challenges arise that make learning large-scale DAG models extremely difficult even when variables have a natural causal or directional structure. These challenges include: (1) identifiability since inferring causal directions from only observational data is in general not possible in the absence of additional assumptions; (2) computational complexity since it is NP-hard to search over the space of DAGs~\cite{Chickering1996}; (3) providing sample size guarantee in the setting where the number of nodes $p$ is large. In this paper we develop a general framework and algorithm for learning large-scale DAG models that addresses these challenges in a number of steps: Firstly, we introduce a new class of provably identifiable DAG models where each node has a conditional distribution where the variance is a quadratic function of the mean, which we refer to as QVF (quadratic variance function) distributions; secondly, we introduce a general OverDispersion Scoring (ODS) algorithm for learning large-scale QVF DAG models; thirdly, we provide theoretical guarantees for our ODS algorithm which proves that our algorithm is consistent in the high-dimensional setting $p > n$ provided that the moralized graph of the DAG is sparse; and finally, we show through a simulation study that our ODS algorithm supports our theoretical result has favorable performance to a number of state-of-the-art algorithms for learning both low-dimensional and high-dimensional DAG models. 

Our algorithm is based on combining two ideas: \emph{overdisperson} and \emph{moralization}. Overdispersion is a property of Poisson and other random variables where the variance depends on the mean and we use overdispersion to address the identifiability issue. While overdispersion is a known phenomena used and exploited in many applications (see e.g.,~\cite{Dean1992, Zheng2006}), overdispersion has never been exploited for learning DAG models aside from our prior work~\cite{park2015learning} which focuses on Poisson DAG models. In this paper, we show that overdispersion applies much more broadly and is used to prove identifiability for a broad class of DAG models. To provide a scalable algorithm with statistical guarantees, even in the high-dimensional setting, we exploit the moralized graph, that is the undirected representation of the DAG. Learning the moralized graph allows us to exploit sparsity and considerably reduces the DAG search-space which has both computational and statistical benefits. Furthermore, moralization allows us to use existing scalable algorithms and theoretical guarantees for learning large-scale undirected graphical models (e.g.,~\cite{Friedman2009,Yang2012}). 

A number of approaches have been used to address the identifiabilty challenge by imposing additional assumptions. For example ICA-based methods for learning causal ordering requires independent noise and non-Gaussianity (see e.g.,~\cite{Shimizu2006}), structural equation models with Gaussian noise with equal or known variances~\cite{Peters2012}, and non-parametric structural equation models with independent noise (see e.g.,~\cite{Peters2013}). These approaches are summarized elegantly in an information-theoretic framework in~\cite{JanzingScholkopf}. Our approach is along similar lines in that we impose overdispersion as an additional assumption which induces asymmetry and guarantees identifiability. However by exploiting overdispersion, our approach applies when the noise distribution of each node depends on its mean whereas prior approaches apply when the additive noise variance is independent of the mean. Additionally, we exploit graph sparsity which has also been exploited in prior work by~\cite{loh2014high,Raskutti2013,geerpb12} for various DAG models with independent additive noise components. Furthermore, sparsity allows us to develop a tractable algorithm where we reduce the DAG space by learning the moralized graph, an idea which has been used in prior work in~\cite{Tsamardinos2003}.

The remainder of the paper is organized as follows: In Section~\ref{SecClass}, we define QVF DAG models and prove identifiability for this class of models. In Section~\ref{SecAlg}, we introduce our polynomial-time DAG learning algorithm which we refer to as the generalized OverDispersion Scoring (ODS). Statistical guarantees for learning QVF DAG models using our ODS algorithm are provided in Section~\ref{SecStat}, and we provide numerical experiments on both small DAGs and large-scale DAGs with node-size up to $5000$ nodes in Section~\ref{SecNum}. Our theoretical guarantees in Section~\ref{SecStat} prove that even in the setting where the number of nodes $p$ is larger than the sample size $n$, it is possible to learn the DAG structure under the assumption that the degree $d$ of the so-called moralized graph of the DAG is small. Our numerical experiments in Section~\ref{SecNum} support the theoretical results and show that our algorithm performs well compared to other state-of-the-art DAG learning methods. Our numerical experiments confirm that our algorithm is one of the few DAG-learning algorithms that performs well in terms of statistical and computational complexity in high-dimensional $p > n$ settings, provided that the degree of the moralized graph $d$ is bounded. 

\section{Quadratic Variance Function (QVF) DAG models and Identifiability}

\label{SecClass}

A DAG $G = (V, E)$ consists of a set of nodes $V$ and a set of directed edges $E \in V \times V$ with no directed cycle. We set $V = \{1, 2, \cdots, p\}$ and associate a random vector $X := (X_1, X_2, \cdots, X_p)$ with probability distribution $\mathbb{P}$ over the nodes in $G$. A directed edge from node $j$ to $k$ is denoted by $(j,k)$ or $j \rightarrow k$. The set of \emph{parents} of node $k$ denoted by $\pa(k)$ consists of all nodes $j$ such that $(j,k) \in E$. If there is a directed path $j\to \cdots \to k$, then $k$ is called a \emph{descendant} of $j$ and $j$ is an \emph{ancestor} of $k$. The set $\mbox{de}(k)$ denotes the set of all descendants of node $k$. The \emph{non-descendants} of node $k$ are $\nd(k) := V \setminus (\{k\} \cup \mbox{de}(k))$. An important property of DAGs is that there exists a (possibly non-unique) \emph{causal ordering} $\pi^*$ of a directed graph that represents directions of edges such that for every directed edge $(j, k) \in E$, $j$ comes before $k$ in the causal ordering. Without loss of generality, we assume the true causal ordering is $\pi^* = (1, 2 ,\cdots, p)$ for $G$.

Suppose that $X$ is a $p$-variate random vector with joint probability density $f_{G}(X)$. Then, a probabilistic DAG model has the following factorization~\cite{Lauritzen1996}:
\begin{equation}
\label{EqnFactorization}
	f_{G}(X) = \prod_{j=1}^{p} f_j(X_j \mid X_{\pa(j)}), 
\end{equation}
where $f_j(X_j \mid X_{\pa(j)})$ refers to the conditional distribution of a random variable $X_j$ in terms of its parents $X_{\pa(j)} :=\{X_s : s \in \pa(j) \}$. 

A core concept in this paper is identifiability for a family of probability distributions defined by the DAG factorization provided above. Let $\mathcal{G}_p$ denote the set of $p$-node DAGs and let $ \mathcal{F}_p(\mathcal{P}) := \{f_G\;:\; f_j \in \mathcal{P}\; \;G \in \mathcal{G}_p \}$ be a family of $p$-variate distributions where each $f_G$ factorizes according to $G$ through \eqref{EqnFactorization} and each conditional distribution $f_j$ lies in a family of distribution $\mathcal{P}$. A family of distributions $\mathcal{F}_p$ is \emph{identifiable} if there exist functions $F_p : \mathcal{F}_p \rightarrow \mathcal{G}_p$ where $F_p(f_G) = G$ for all $f_G \in \mathcal{F}_p$ for all $p \geq 2$.

In our setting $\mathcal{P}$ is a setting where the variance is a linear function of the mean so we deal with signal-dependent noise or variance. Prior work has considered classes of distribution $\mathcal{P}$. For example ICA-based methods make the assumption that $\mathcal{P}$ is independent error with non-Gaussian components~\cite{Shimizu2006}, non-parametric regression assumes $\mathcal{P}$ is a non-parametric model with additive independent noise~\cite{Peters2013}, and in~\cite{Peters2012}, $\mathcal{P}$ represents linear Gaussian relationships with equal or known variances. On the other hand, general Gaussian DAG models do not belong to QVF DAG models because means and covariance function for Gaussian distributions are unrelated. Hence Gaussian DAG models can only be learnt up to Markov equivalence~\cite{Heckerman1995}. We define $\mathcal{P}$ more precisely in the next section.

\subsection{Quadratic Variance Function (QVF) DAG models}

Firstly, we define quadratic variance function (QVF) DAG models. For QVF DAG models each node has a conditional distribution $\mathcal{P}$ given its parents with the property that the variance is a quadratic function of the mean. More precisely, there exist constants $\beta_{j0}, \beta_{j1} \in \mathbb{R}$ for all $j \in V$ such that:
\begin{equation}
	\label{eq:Quad}
	\var(X_j \mid X_{\pa(j)})	= \beta_{j0} \E(X_j \mid X_{\pa(j)}) + \beta_{j1} \E(X_j \mid X_{\pa(j)})^2.
\end{equation}  
 
To the best of our knowledge, quadratic variance function (QVF) probability distributions were first introduced in the context of natural parameter exponential families (NEF)~\cite{morris1982natural} which include Poisson, Binomial, Negative Binomial and Gamma distributions.

For natural exponential families with quadratic variance functions (NEF-QVF), the conditional distribution of each node given its parents takes the simple form:
\begin{equation*}
	\label{eq:NEF-QVF}
	P(X_j \mid X_{\pa(j)}) = \exp \left( \theta_{jj} X_j + \sum_{(k,j)\in E} \theta_{jk} X_k X_j - B_j( X_j ) - A_j \left(\theta_{jj} + \sum_{(k,j)\in E} \theta_{jk} X_k \right) \right)
\end{equation*}
where $A_j(\cdot)$ is the log-partition function, $B_j(\cdot)$ is determined by a chosen exponential family, and $\theta_{jk} \in \mathbb{R}$ is a parameter corresponding to a node $j$. By the factorization property~\eqref{EqnFactorization}, the joint distribution of a NEF-QVF DAG model takes the following form:
\begin{equation}
\label{DAGGLM}
P(X) = \exp \left( \sum_{j \in V} \theta_{jj} X_j + \sum_{(k,j)\in E} \theta_{jk} X_k X_j - \sum_{j \in V} B_j( X_j ) - \sum_{j \in V} A_j \left(\theta_{jj} + \sum_{(k,j)\in E} \theta_{jk} X_k \right) \right). 
\end{equation}

From Equation~\eqref{DAGGLM}, we provide examples of classes of NEF-QVF DAG models. For Poisson DAG models studied in~\cite{park2015learning} the log-partition function $A_j(\cdot) = \exp(\cdot)$, and $B_j(\cdot) = \log(\cdot!)$.  Similarly, Binomial DAG models can be derived as an example of QVF DAG models where the conditional distribution for each node is binomial with known parameter $N_j$ and the log-partition function $A_j(\cdot) = N_j \log(1+\exp(\cdot))$, and $B_j(\cdot) = -\log\binom{N_j}{\cdot}$. Another interesting instance is Exponential DAG models where each node conditional distribution given its parents is Exponential. Then, $A_j(\cdot) = -\log(-\cdot)$ and $B_j(\cdot) = 0$. Our framework also naturally extends to mixed DAG models, where the conditional distributions have different distributions which incorporates different data types. In Section~\ref{SecNum}, we will provide numerical experiments on Poisson and Binomial DAG models.

\subsection{Identifiability of QVF DAG models}

In this section we prove that QVF DAG models are identifiable. To provide intuition, we prove identifiability for the two-node Poisson DAG model. Consider all three models illustrated in Figure~\ref{figure1}: $\mathcal{M}_1: X_1 \sim \mbox{Poisson}(\lambda_1),\;\; X_2 \sim \mbox{Poisson}(\lambda_2)$, where $X_1$ and $X_2$ are independent;
$\mathcal{M}_2: X_1 \sim \mbox{Poisson}(\lambda_1)$ and $X_2 \mid X_1 \sim \mbox{Poisson}(g_2(X_1))$;
and $\mathcal{M}_3: X_2 \sim \mbox{Poisson}(\lambda_2)$ and $X_1 \mid X_2 \sim \mbox{Poisson}(g_1(X_2))$ for arbitrary positive functions $g_1, g_2: \mathbb{R} \to \mathbb{R}^{+}$. Our goal is to determine whether the underlying DAG model is $\mathcal{M}_1, \mathcal{M}_2$ or $\mathcal{M}_3$.

\begin{figure}[!t]
	\centering
	\begin {tikzpicture}[ -latex ,auto,
	state/.style={circle, draw=black, fill= white, thick, minimum size= 2mm},
	label/.style={thick, minimum size= 2mm}
	]
	\node[state] (X1)  at (0,0)   {\small{$X_1$} }; \node[state] (X2)  at (2,0)   {\small{$X_2$}}; \node[label] (X3) at (1,-.7) {$\mathcal{M}_1$};
	\node[state] (Y1)  at (5,0)   {\small{$X_1$}}; \node[state] (Y2)  at (7,0)   {\small{$X_2$}}; \node[label] (Y3) at (6,-.7) {$\mathcal{M}_2$};
	\node[state] (Z1)  at (10,0)   {\small{$X_1$}}; \node[state] (Z2)  at (12,0)   {\small{$X_2$}}; \node[label] (Z3) at (11,-.7) {$\mathcal{M}_3$};
	\path (Y1) edge [shorten <= 2pt, shorten >= 2pt] node[above]  { } (Y2); 
	\path (Z2) edge [shorten <= 2pt, shorten >= 2pt] node[above]  { } (Z1);
\end{tikzpicture}
\caption{Directed graphical models of $\mathcal{M}_1$, $\mathcal{M}_2$ and $\mathcal{M}_3$  }
\label{figure1}
\end{figure}
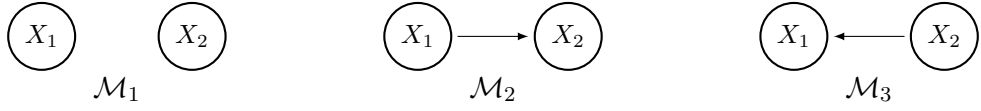

We exploit the fact that for a Poisson random variable $X$, $\var(X) = \E(X)$, while for a distribution which is conditionally Poisson, the marginal variance is overdispersed relative to the marginal expectation, $\var(X) > \E(X)$. Hence for $\mathcal{M}_1$, $\var(X_1) = \E(X_1)$ and $\var(X_2) = \E(X_2)$. For $\mathcal{M}_2$, $\var(X_1) = \E(X_1)$, while
\begin{equation*}
\var(X_2) = \E(\var(X_2 \mid X_1)) + \var(\E(X_2 \mid X_1)) = \E(\E(X_2 \mid X_1)) + \var(g_2(X_1)) > \E(X_2),
\end{equation*}
as long as $\var(g_2(X_1)) > 0$. 

Similarly under $\mathcal{M}_3$, $\mbox{Var}(X_2) = \mathbb{E}(X_2)$ and $\mbox{Var}(X_1) > \mathbb{E}(X_1)$ as long as $\mbox{Var}(g_1(X_2)) > 0$. Hence we can distinguish models $\mathcal{M}_1$, $\mathcal{M}_2$, and $\mathcal{M}_3$ by testing whether the variance is greater than or equal to the expectation. With finite samples, the quantities $\E(\cdot)$ and $\var(\cdot)$ can be estimated from data and we describe this more precisely in Sections~\ref{SecAlg} and~\ref{SecStat}. 

For general QVF DAG models, the variance for each node distribution is not necessarily equal to the mean. Hence we introduce a linear transformation $T_j(X_j) = \omega_j X_j$ such that $\var(T_j(X_j) \mid X_{\pa(j)}) = \E(T_j(X_j) \mid X_{\pa(j)})$ in Proposition~\ref{prop:a}. This transformation enables us to use the notion of \emph{overdispersion} for recovering general QVF DAG models. We present examples of distributions for QVF DAG models with the triple $(\beta_{0}, \beta_{1}, \omega)$ in the following Table~\ref{Example}.

\begin{proposition}
	\label{prop:a}
	Let $X =(X_1, X_2, \cdots, X_p)$ be a random vector associated with a QVF DAG model with quadratic variance coefficients $(\beta_{j0}, \beta_{j1})_{j = 1}^{p}$ specified in~\eqref{eq:Quad}. Then, there exists a transformation $T_j(X_j) = \omega_j X_j$ for any node $j\in V$ where $\omega_j = ( \beta_{j0} + \beta_{j1} \E(X_j \mid X_{\pa(j)} ) )^{-1}$ such that 
	$$\var(T_j(X_j) \mid X_{\pa(j)}) = \E(T_j(X_j) \mid X_{\pa(j)}).$$
\end{proposition}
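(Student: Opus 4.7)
The plan is straightforward since the claim reduces to direct algebraic manipulation of the conditional first and second moments under the QVF assumption. First I would observe that, once we condition on $X_{\pa(j)}$, the coefficient $\omega_j = (\beta_{j0} + \beta_{j1}\E(X_j \mid X_{\pa(j)}))^{-1}$ is a deterministic function of $X_{\pa(j)}$ and hence behaves as a constant for the purposes of computing the conditional mean and variance of $T_j(X_j) = \omega_j X_j$. This lets me pull $\omega_j$ out of both operators without any measurability subtleties.

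Next I would compute the conditional mean by linearity, obtaining $\E(T_j(X_j) \mid X_{\pa(j)}) = \omega_j \E(X_j \mid X_{\pa(j)})$. For the conditional variance I would use the scalar scaling rule and then substitute the QVF relation~\eqref{eq:Quad} to obtain
$$\var(T_j(X_j) \mid X_{\pa(j)}) = \omega_j^2 \var(X_j \mid X_{\pa(j)}) = \omega_j^2 \left[ \beta_{j0}\E(X_j \mid X_{\pa(j)}) + \beta_{j1}\E(X_j \mid X_{\pa(j)})^2 \right].$$
Factoring $\E(X_j \mid X_{\pa(j)})$ out of the bracket gives $\omega_j^2 \cdot \E(X_j \mid X_{\pa(j)}) \cdot (\beta_{j0} + \beta_{j1}\E(X_j \mid X_{\pa(j)}))$, and the last factor is exactly $\omega_j^{-1}$ by definition. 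Hence the expression collapses to $\omega_j \E(X_j \mid X_{\pa(j)})$, which coincides with the conditional mean computed above.

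There is really no hard step here; the only issue worth flagging is the implicit nondegeneracy assumption $\beta_{j0} + \beta_{j1}\E(X_j \mid X_{\pa(j)}) \neq 0$ needed for $\omega_j$ to be well-defined. This is a natural regularity condition on the QVF family, since a zero value would force $\var(X_j \mid X_{\pa(j)}) = 0$ and make $X_j$ degenerate given its parents; the examples tabulated in Table~\ref{Example} all satisfy it automatically on the relevant support. I would state this condition explicitly at the start of the proof, after which the two-line computation above closes the argument.
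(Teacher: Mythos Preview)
Your proposal is correct and follows essentially the same approach as the paper's proof: both pull $\omega_j$ outside the conditional variance, substitute the QVF relation~\eqref{eq:Quad}, and then cancel one factor of $\omega_j$ against $\beta_{j0} + \beta_{j1}\E(X_j \mid X_{\pa(j)})$ to reduce to the conditional mean. Your added remarks on the $X_{\pa(j)}$-measurability of $\omega_j$ and the nondegeneracy condition are reasonable clarifications, but the underlying argument is identical.
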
 

\begin{proof}
	For any node $j \in V$,
	\begin{eqnarray*}
		\var(\omega_j X_j \mid X_{\pa(j)}) & = & \omega_j^2 \var( X_j \mid X_{\pa(j)}) \\
		& \stackrel{(a)}{=} & \omega_j^2 ( \beta_{j0} \E(X_j \mid X_{\pa(j)}) + \beta_{j1} \E(X_j \mid X_{\pa(j)})^2 ) \\
		&\stackrel{(b)}{=} & \omega_j \E(X_j \mid X_{\pa(j)}) \\
		& = & \E( \omega_j X_j \mid X_{\pa(j)}).
	\end{eqnarray*} 
	(a) follows from the quadratic variance property~\eqref{eq:Quad}, and (b) follows from the definition of $\omega_j$.
\end{proof}

\begin{table}
	\centering
	\begin{tabular}{|l  l |c c c|} \hline
		Distribution & & $\beta_0$ & $\beta_1$ 	 	 & $\omega$ \\ \hline
		Binomial & Bin$(N,p)$	 & 1		 & $-\frac{1}{N}$	 & $\frac{N}{N - \E(X)}$ \\
		Poisson & Poi$(\lambda)$ 	 & 1		 & 0				 & 1 \\
		Generalized Poisson & GPoi($\lambda_1, \lambda_2)$ & $\frac{1}{(1-\lambda_2)^2}$ & 0 & $\frac{1}{(1-\lambda_2)^2}$ \\
		Geometric & Geo($p$) 	 & 1 & 1	 & $\frac{1}{1 + \E(X)}$ \\
		Negative Binomial & NB($R,p$)	 & 1 & $\frac{1}{R}$ 	 & $\frac{R}{R + \E(X)}$ \\
		Exponential & Exp($\lambda$) & 0 		 & 1				 & $\frac{1}{\E(X) }$ \\
		Gamma & Gamma($\alpha, \beta$)		 & 0 		 & $\frac{1}{\alpha}$& $\frac{\alpha}{\E(X)}$ \\ \hline
	\end{tabular}
	\caption{Examples of distributions for QVF DAG models with $\beta_0, \beta_1$ and $\omega$ where $\E(X)$ is its expectation}
	\label{Example}
\end{table}

Now we extend to general $p$-variate QVF DAG models. The key idea to extending identifiability from the bivariate to multivariate scenario involves conditioning on parents of each node, and then testing overdispersion. 

\begin{theorem}[Identifiability for p-variate QVF DAG models]
	\label{Thmidentifiability}
	Consider the class of QVF DAG models~\eqref{EqnFactorization} with quadratic variance coefficients $(\beta_{j0}, \beta_{j1})_{j = 1}^{p}$~\eqref{eq:Quad}. Suppose that $\beta_{j1} > -1$ for all $j \in V$. Furthermore, for all $j \in V$, $K_j \subset \pa(j)$, $K_j \neq \emptyset$, and $S \subset \nd(j) \setminus K_j$ where $\beta_{j0} + \beta_{j1} \E(X_j \mid X_{S} ) \neq 0$ and
	\begin{equation}
		\var( \E(X_j \mid X_{pa(j)}) \mid X_S) > 0.
	\end{equation}
	The the class of QVF DAG models is identifiable.
\end{theorem}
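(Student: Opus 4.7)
The plan is to identify $G$ from the joint distribution by peeling off layers of the topological ordering, using a distribution-level \emph{overdispersion score} that vanishes precisely when the conditioning set contains the true parent set. For each $j \in V$ and each $S \subseteq V \setminus \{j\}$, define
$$\Delta(j;S) := \var(X_j \mid X_S) - \beta_{j0}\,\E(X_j \mid X_S) - \beta_{j1}\,\E(X_j \mid X_S)^2.$$
Since $\Delta(j;S)$ is a functional of the joint law, it can be read off from any $f_G \in \F_p(\mathcal{P})$, so the task reduces to showing that, for $S$ ranging over subsets of $\nd(j)$, the zero set of $\Delta(j;\cdot)$ recovers $\pa(j)$.

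The crucial step is a computational lemma expressing $\Delta(j;S)$ as a conditional variance of the regression function. Fix $j$ and $S \subseteq \nd(j)$. By the local Markov property on $G$, conditioning on $(X_{\pa(j)}, X_S)$ is equivalent to conditioning on $X_{\pa(j)}$ alone for both $\E(X_j \mid \cdot)$ and $\var(X_j \mid \cdot)$. Applying the conditional law of total variance with inner conditioning on $X_{\pa(j)}$, substituting the QVF identity~\eqref{eq:Quad}, and using $\E[\E(X_j \mid X_{\pa(j)})^2 \mid X_S] = \var[\E(X_j \mid X_{\pa(j)}) \mid X_S] + \E(X_j \mid X_S)^2$, one obtains
$$\Delta(j;S) = (1+\beta_{j1})\,\var\!\bigl(\E(X_j \mid X_{\pa(j)}) \mid X_S\bigr).$$

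From this identity the desired dichotomy is immediate. If $\pa(j) \subseteq S$, then $\E(X_j \mid X_{\pa(j)})$ is $\sigma(X_S)$-measurable, the conditional variance vanishes, and $\Delta(j;S) = 0$. If instead $K_j := \pa(j) \setminus S \neq \emptyset$, the standing hypothesis $\var(\E(X_j \mid X_{\pa(j)}) \mid X_S) > 0$ combined with $\beta_{j1} > -1$ forces $\Delta(j;S) > 0$. Hence, for $S \subseteq \nd(j)$, $\Delta(j;S) = 0$ iff $S \supseteq \pa(j)$, and the unique minimal such $S$ equals $\pa(j)$.

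Identifiability then follows by induction on the topological ordering. The source set $V_1 = \{j : \Delta(j;\emptyset) = 0\}$ is recovered directly. Assuming layers $V_1, \ldots, V_k$ and their parent sets have been identified, write $U_k := V_1 \cup \cdots \cup V_k$; the construction forces every edge between $U_k$ and $V \setminus U_k$ to point from $U_k$ into $V \setminus U_k$, so $U_k \subseteq \nd(j)$ for every $j \in V \setminus U_k$. Applying the dichotomy with $S \subseteq U_k$ picks out $V_{k+1}$ as the (non-empty, by acyclicity) set of $j \in V \setminus U_k$ with $\pa(j) \subseteq U_k$, and the minimal-$S$ statement recovers each such $\pa(j)$. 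Iterating yields the required map $F_p : \F_p(\mathcal{P}) \to \mathcal{G}_p$. The main obstacle is the computational lemma itself; once it is in place, the hypotheses $\beta_{j1} > -1$ and the positive-conditional-variance condition convert it directly into the zero/positive dichotomy, and the only subtlety in the induction is keeping $S$ inside $\nd(j)$, which the previous-layer conditioning $S \subseteq U_k$ guarantees automatically.
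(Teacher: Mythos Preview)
Your proof is correct and follows essentially the same route as the paper's: both hinge on the identity
\[
\var(X_j\mid X_S)-\beta_{j0}\E(X_j\mid X_S)-\beta_{j1}\E(X_j\mid X_S)^2=(1+\beta_{j1})\,\var\bigl(\E(X_j\mid X_{\pa(j)})\mid X_S\bigr),
\]
valid for $S\subseteq\nd(j)$, and then proceed by induction along the topological ordering.

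The differences are cosmetic but worth noting. The paper works with the \emph{normalized} score $\mathcal{S}(j,m)=\omega_{jm}^2\,\Delta(j;S)$ where $\omega_{jm}=(\beta_{j0}+\beta_{j1}\E(X_j\mid X_S))^{-1}$, which is why the extra hypothesis $\beta_{j0}+\beta_{j1}\E(X_j\mid X_S)\neq 0$ appears; your unnormalized $\Delta(j;S)$ sidesteps that hypothesis entirely for the identifiability argument (it is still needed for the algorithm in Section~\ref{SecAlg}). The paper also conditions on the full initial segment $X_{1:m-1}$ at each step and stops once a causal ordering is found, leaving the edge set implicit; your layer-by-layer version additionally extracts $\pa(j)$ as the minimal $S\subseteq U_k$ with $\Delta(j;S)=0$, which makes the map $F_p$ more explicit. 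Both arguments rest on the same computation and the same use of $\beta_{j1}>-1$ and the positive-conditional-variance assumption.
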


The proof is provided in Appendix~\ref{SecSubThmIde}. Theorem~\ref{Thmidentifiability} shows that any QVF DAG model is fully identifiable under the assumption that all parents of node $j$ contribute to its variability. The condition $\beta_{j1} > -1$ rules out DAG models with Bernoulli and multinomial distributions which are known to be non-identifiable~\cite{Heckerman1995} with $\beta_{j1} = -1$. 

\section{OverDispersion Scoring (ODS) Algorithm}

\label{SecAlg}

In this section, we present our generalized OverDispersion Scoring (ODS) algorithm. An important concept we need to introduce for the generalized ODS algorithm is the \emph{moral} graph or undirected graphical model representation of a DAG (see e.g.,~\cite{Cowell1999}). The moralized graph $G^m$ for a DAG $G= (V, E)$ is an undirected graph where $G^m = (V, E^m)$ where $E^m$ includes the edge set $E$ for the DAG $G$ with directions removed plus edges between any nodes that are parents of a common child. Figure~\ref{figure2} represents the moralized graph for a simple $3$-node example where $E = \{(1,3), (2,3)\}$ for the DAG $G$. Since nodes $1$ and $2$ are parents with a common child $3$, the additional edge $(1,2)$ arises, and therefore $E^m = \{(1,2), (1,3), (2,3)\}$. Finally, the \emph{neighborhood} for a node $j$ refers to the adjacent nodes to $j$ in the moralized graph, and is denoted by $\mathcal{N}(j) := \{k \in V \mid (j,k) \text{ or } (k,j) \in E^m \}$.

\begin{figure}[t] 
	\centering
	\begin{tikzpicture}[ -latex , auto,
	state/.style={circle, draw=black, fill= white, thick, minimum size= 5mm},
	label/.style={thick, minimum size= 5mm}
	]
	\node[state] (X1)  at (0,0)   {$1$};    \node[state] (X2)  at (3,0)   {$2$};  
	\node[state] (X3)  at (1.5,0)   {$3$}; 
	\node[label] (G1) at (1.5,-0.8) {$G$};
	\node[state] (Y1)  at (5,0)   {$1$}; \node[state] (Y2)  at (8,0)   {$2$};  
	\node[state] (Y3)  at (6.5,0)   {$3$};   
	\node[label] (G2) at (6.5,-0.8) {$G^m$};
	
	\path (X1) edge [shorten <=2pt, shorten >=2pt] node[above]  { } (X3); 
	\path (X2) edge [shorten <=2pt, shorten >=2pt] node[above]  { } (X3); 
	\path (Y1) edge [-,shorten <=2pt, shorten >=2pt] node[above] { } (Y3); 
	\path (Y2) edge [-,shorten <=2pt, shorten >=2pt] node[above] { } (Y3); 
	\path (Y1) edge [-,shorten <=1pt, shorten >=1pt,bend left = 45] node[above] { } (Y2); 
	\end{tikzpicture}
	\caption{Moralized graph $G^m$ for DAG $G$} 
	\label{figure2}
\end{figure}
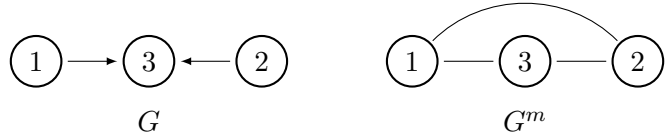

Our generalized ODS algorithm has three main steps: 1) estimate the moralized graph $G^m$ for the DAG $G$; 2) estimate the causal ordering of the DAG $G$ using overdispersion scoring based on the moralized graph from step 1); and 3) estimating the DAG structure, given the causal ordering from step 2). There are many choices of algorithms for Steps 1) and 3), for example standard neighborhood selection procedures in which we use not only regression algorithms, but also off-the-shelf graph learning algorithms (e.g.,~\cite{Yang2012, Tsamardinos2006, Aliferis2003}). Although Steps 2) and 3) are sufficient to recover DAG structures, Step 1) is performed because it reduces both computational and sample complexity by exploiting the sparsity of the moralized graph for the DAG.

\setlength{\algomargin}{0.5em}
\begin{algorithm}[!t]
	\caption{ \bf Generalized OverDispersion Scoring (ODS) \label{AlgODS} }
	\SetKwInOut{Input}{Input}
	\SetKwInOut{Output}{Output}
	\SetKwInOut{Return}{Return}
	\Input{$n$ i.i.d. samples from a QVF-DAG model}
	\Output{Estimated causal ordering $\widehat{\pi} \in \mathbb{N}^{p}$ and an edge structure, $\widehat{E} \in V \times V$ }
	\BlankLine
	Step 1: Estimate the undirected edges $\widehat{E}^m = \cup_{j \in V} \cup_{k \in \widehat{\mathcal{N}}(j)} (j,k)$ where $\widehat{\mathcal{N}}(j)$ is estimated neighborhood set of a node $j$ in the moralized graph\;
	Step 2: Estimate the causal ordering using overdispersion scores\; 
	\For{$k \in \{1,2,\cdots,p\}$}
	{Calculate overdispersion scores $\widehat{\mathcal{S}}(1,k)$ using ~\eqref{EqnTruncScore1}\; }
	The first element of the causal ordering $\widehat{\pi}_1 = \arg \min_k \widehat{\mathcal{S}}(1,k) $\;
	\For{$j = \{2,3,\cdots,p-1\}$}{
		\For{$k \in \widehat{\mathcal{N}}(\widehat{\pi}_{j-1}) \cap \{1,2,\cdots,p\} \setminus \{\widehat{\pi}_1,\cdots,\widehat{\pi}_{j-1}\}$ }{
			Find candidate parents set $\widehat{C}_{jk} = \widehat{\mathcal{N}}(k) \cap \{\widehat{\pi}_1, \widehat{\pi}_2,\cdots,\widehat{\pi}_{j-1}\}$\;
			Calculate overdispersion scores $\widehat{\mathcal{S}}(j,k)$ using ~\eqref{EqnTruncScorej};
		}
		The $j^{th}$ element of a causal ordering $\widehat{\pi}_j = \arg \min_k \widehat{\mathcal{S}}(j,k)$\;
		Step 3: Estimate the directed edges toward $\widehat{\pi}_j$, denoted by $\widehat{D}_j$\;
	}
	The $p^{th}$ element of the causal ordering $\widehat{\pi}_p = \{1,2,\cdots,p\} \setminus \{\widehat{\pi}_1, \widehat{\pi}_2,\cdots, \widehat{\pi}_{p-1}\}$\;
	The directed edges toward $\widehat{\pi}_p$, denoted by $\widehat{D}_p = \{ (z,\widehat{\pi}_p) \;|\; z \in \widehat{\mathcal{N}}(\widehat{\pi}_p) \} $\;
	\BlankLine
	\Return{$\widehat{\pi} = (\widehat{\pi}_1,\widehat{\pi}_2,\cdots,\widehat{\pi}_p)$, and $\widehat{E} = \cup_{j = \{2,3,\cdots,p\}} \widehat{D}_j$}
\end{algorithm} 

The main purpose of Step 1) is to reduce the search-space by exploiting sparsity of the moralized graph. The moralized graph provides a \emph{candidate parents} set for each node. Similar ideas of reducing search space by utilizing the moralized graph or different undirected graphs are applied in existing algorithms (e.g.,~\cite{Tsamardinos2003, Friedman1999, loh2014high}). The concept of candidate parents set exploits two properties; (i) the neighborhood of a node $j$ is a superset of its parents, and (ii) a node should appear later than its parents in the causal ordering. Hence, the candidate parents set for a given node $j$ is the intersection of its neighborhood and elements of the causal ordering which appear before that node $j$. This candidate parents set is used as a conditioning set for the overdispersion score in Step 2). In principle, the size of the conditioning set for an overdispersion score could be $p-1$ if the moralized graph is not used. Since Step 2) requires computation of a conditional mean and variance, both the computational complexity and sample complexity depend significantly on the number of variables we condition on as illustrated in Sections~\ref{SecCom} and~\ref{SecStat}. Therefore by making the conditioning set for the overdispersion score of each node as small as possible, we gain significant computational and statistical improvements. 

A number of choices are available for estimation of the moralized graph. Since the moralized graph is an undirected graph, standard undirected graph learning algorithms such as HITON~\cite{Aliferis2003} and MMPC algorithms~\cite{Tsamardinos2003} as well as  $\ell_1$-penalized likelihood regression for generalized linear models (GLM)~\cite{Friedman2009}. In addition, standard DAG learning algorithms such as PC~\cite{spirtes2000causation}, GES~\cite{Chickering2003} and MMHC algorithms~\cite{Tsamardinos2003} can be applied to estimate the Markov equivalence class and then the moralized graph is generated from the Markov equivalence class. 

Step 2) of the generalized ODS algorithm involves learning the causal ordering by comparing overdispersion scores of nodes using~\eqref{EqnTruncScorej}. The basic idea is to determine which nodes are overdispersed based on the sample conditional mean and conditional variance after the transformation in Proposition~\ref{prop:a}. The causal ordering is determined one node at a time by selecting the node with the smallest overdispersion score which is representative of a node that is least likely to be overdispersed.

Regarding the overdispersion scores, suppose that there are $n$ i.i.d.  samples $X^{1:n} := \{X^{(i)}\}_{i=1}^{n}$ where $X^{(i)} := (X_1^{(i)}, X_2^{(i)},\cdots, X_p^{(i)})$ is a $p$-variate random vector drawn from an underlying QVF DAG model with quadratic variance coefficients $(\beta_{j0}, \beta_{j1})_{j=1}^{p}$. We use the notation $\widehat{\cdot}$ to denote an estimate based on $(X^{(i)})_{i=1}^n$. In addition, we use $n(x_S) = \sum_{i=1}^n \mathbf{1}( X_S^{(i)} = x_S )$ to denote the conditional sample size, and $n_S = \sum_{ x_S } n(x_S) \mathbf{1}( n(x_S) \geq c_0 \cdot n )$ for an arbitrary $c_0 \in (0,1)$ to denote a truncated conditional sample size. We discuss the choice of $c_0$ shortly.

More precisely the overdispersion scores in Step 2) of~\ref{AlgODS} involves the following equations:
\begin{eqnarray}
\label{EqnTruncScore1}
\widehat{\mathcal{S}}(1,k) & := & 
\widehat{\omega}_j^2 \cdot  \widehat{\var}(X_j) - \widehat{\omega}_j \cdot \widehat{\E}(X_j) ~~~ \text{where}~~~ \widehat{\omega}_j := (\beta_{10} + \beta_{11} \widehat{\E}(X_j))^{-1},
\hspace{2.1cm}
\end{eqnarray}
\begin{eqnarray}
\label{EqnTruncScorej}
\widehat{\mathcal{S}}(j,k) & := & 
\sum_{x \in \mathcal{X}(\widehat{C}_{jk}) } \frac{ n(x) }{n_{\widehat{C}_{jk}}} \left[ \widehat{\omega}_{jk}(x)^2 \cdot \widehat{\var}(X_j \mid X_{\widehat{C}_{jk}} = x ) - \widehat{\omega}_{jk}(x) \cdot \widehat{\E}(X_j \mid X_{\widehat{C}_{jk}} = x) \right]
\end{eqnarray}
where $\widehat{\omega}_{jk}(x) := (\beta_{j0} + \beta_{j1} \widehat{\E}(X_j \mid X_{\widehat{C}_{jk}} = x))^{-1}$. $\widehat{C}_{jk}$ is the estimated candidate parents set of node $j$ for the $k^{th}$ element of the causal ordering, and $\mathcal{X}(\widehat{C}_{jk}) := \{x_{jk} \in \{X_{\widehat{C}_{jk}}^{(1)}, X_{\widehat{C}_{jk}}^{(2)},\cdots,X_{\widehat{C}_{jk}}^{(n)} \} : n(x_{jk}) \geq c_0 \cdot n \}$ to ensure we have enough samples for each element of an overdispersion score. $c_0$ is a tuning parameter of our algorithm that we specify in Theorem~\ref{ThmCausalOrdering} and our numerical experiments. 
The term $\widehat{\omega}_{jk}(x)$ is an empirical version of the transformation in Proposition~\ref{prop:a} assuming  $\widehat{C}_{jk}$ is the parents of a node $j$. 

Finding the set of parents of node $j$ boils down to selecting the parents out of all elements before node $j$ in the causal ordering. Hence given the estimated causal ordering from Step 2), Step 3) can be reduced to $p$ neighborhood selection problems which can be performed using $\ell_1$-penalized likelihood regression for GLMs~\cite{Friedman2009} as well as standard DAG learning algorithms such as the PC~\cite{spirtes2000causation}, GES~\cite{Chickering2003}, and MMHC algorithms~\cite{Tsamardinos2003}.

\subsection{Computational Complexity}

\label{SecCom}

For steps 1) and 3) of the generalized ODS algorithm, we use off-the-shelf algorithms and the computational complexity depends on the choice of algorithm. For example, if we use the neighborhood selection $\ell_1$-penalized likelihood regression for GLMs~\cite{Friedman2009} as is used in Yang et al.~\cite{Yang2012}, the worst-case complexity is $O(\min(n,p)np)$ for a single $\ell_1$-penalized likelihood regression, but since there are $p$ nodes, the total worst-case complexity is $O(\min(n,p)np^2)$. Similarly, if we use $\ell_1$-penalized likelihood regression for Step 3) the worst-case  complexity is also $O(\min(n,p)np^2)$ but maybe less if the degree $\d$ of the moralized graph is small. 

For Step 2) where we estimate the causal ordering, there are $(p-1)$ iterations and each iteration has a number of overdispersion scores $\widehat{S}(j,k)$ to be computed which is bounded by $O(\d)$ where $\d$ is the maximum degree of the moralized graph. Hence the total number of overdispersion scores that need to be computed is $O(p \d)$. Since the time for calculating each overdispersion score is proportional to the sample size $n$, the complexity is $O( n p \d )$. 

In the worst-case where the degree of the moralized graph is $p$, the computational complexity of Step 2) is $O( n p^2 )$. As we discussed earlier, there is a significant computational saving by exploiting the sparsity of the moralized graph which is why we perform Step 1) of the generalized ODS algorithm. Hence, Step 1) is the main computational bottleneck of the generalized ODS algorithm. The addition of Step 2) which estimates the causal ordering does not significantly add to the computational bottleneck. Consequently, the generalized ODS algorithm, which is designed for learning DAGs is almost as computationally efficient as standard methods for learning undirected graphical models. As we show in numerical experiments, the ODS algorithm using $\ell_1$-penalized likelihood regression for GLMs in both Steps 1) and 3) is faster than the state-of-the-art GES algorithm.

\subsection{Statistical Guarantees}

\label{SecStat}

In this section, we provide theoretical guarantees for our generalized ODS algorithm. We provide sample complexity guarantees for the ODS algorithm in the high-dimensional setting in three steps, by proving consistency of Steps 1), 2) and 3) in Sections~\ref{SecStep1},~\ref{SecStep2} and ~\ref{SecStep3}, respectively. All three main results are expressed in terms of the triple $(n,p,d)$.


Although any off-the-shelf algorithms can be used in Steps 1) and 3), our theoretical guarantees focus on the case when we use the R package glmnet~\cite{Friedman2009} for neighborhood selection. We focus on glmnet since there exist provable theoretical guarantees for neighborhood selection for graphical model learning in the high-dimensional setting (see e.g.,~\cite{Yang2012, Ravikumar2010}) and performs well in our simulation study. The glmnet package involves minimizing the $\ell_1$-penalized generalized linear model loss. 

Without loss of generality, assume that $(1,2,\cdots,p)$ is the true causal ordering and for ease of notation let $[\cdot]_{k}$ and $[\cdot]_{S}$ denotes parameter(s) corresponding to the variable $X_k$ and random vector $X_S$, respectively. Suppose that $\theta_{D_j}^* \in \Theta_{D_j}$ denotes the solution of the following GLM problem where $\Theta_{D_j} := \{ \theta \in \mathbb{R}^{p} : [\theta]_{k} = 0 \textrm{ for } k \notin \pa(j) \}$.
\begin{equation}
	\label{ThetaD}
	\theta_{D_j}^* := \arg \min_{\theta \in \Theta_{D_j}} \E \left( - X_j([\theta]_j + \langle [\theta]_{\pa(j)}, X_{\pa(j)} \rangle) + A_j( [\theta]_j + \langle [\theta]_{\pa(j)}, X_{\pa(j)} \rangle ) \right),
\end{equation}
where $A_j(\cdot)$ is the log-partition function determined by the GLM family~\eqref{DAGGLM}, and $\langle \cdot, \cdot \rangle$ represents the inner product. In the special case where $X_j$ has an NEF-QVF distribution with log-partition function $A_j(.)$, $\theta_{D_j}^*$ corresponds exactly to the set of true parameters, that is $\theta^*_{jk}$ is the co-efficient $k \in \pa(j)$ which represents the influence of of node $k$ on node $j$. However our results apply more generally and we do not require that $X_j$ belongs to an NEF-QVF DAG model.

Similar definitions are required for parameters associated with the moralized graph $G^m$. Define $\theta_{M_j}^* \in \Theta_{M_j}$ as the solution of the following GLM problem for a node $j$ over its neighbors where $\Theta_{M_j} := \{ \theta \in \mathbb{R}^{p} : [\theta]_{k} = 0 \textrm{ for } k \notin \mathcal{N}(j) \}$. 
\begin{equation}
	\label{ThetaM}
	\theta_{M_j}^* := \arg \min_{\theta \in \Theta_{M_j}} \E \left( - X_j([\theta]_j + \langle [\theta]_{\mathcal{N}(j)}, X_{\mathcal{N}(j)} \rangle) + A_j([\theta]_j + \langle [\theta]_{\mathcal{N}(j)}, X_{\mathcal{N}(j)} \rangle) \right).
\end{equation}

We impose the following identifiability assumptions on $\theta_{D_j}^*$ and $\theta_{M_j}^*$.

\begin{assumption}
\label{Ass:RestFaithfulness}
\begin{enumerate}
\item[(a)] For any node $j \in V$ and $k \in \pa(j)$,
	$$\mbox{Cov} (X_j, X_k) \neq \mbox{Cov}(X_k, \bigtriangledown A_j( [\theta_{D_j}^*]_j + \langle [\theta_{D_j}^*]_{\pa(j) \setminus k}, X_{\pa(j) \setminus j} \rangle ) ).
	$$
\item[(b)] For any node $j \in V$ and $k \in \mathcal{N}(j)$,
	$$\mbox{Cov} (X_j, X_k) \neq \mbox{Cov}(X_k, \bigtriangledown A_j( [\theta_{M_j}^*]_j+ \langle [\theta_{M_j}^*]_{\mathcal{N}(j) \setminus k}, X_{\mathcal{N}(j) \setminus j} \rangle ) ).
	$$
\end{enumerate}
\end{assumption}
Assumption~\ref{Ass:RestFaithfulness} can be understood as a notion of restricted faithfulness only for neighbors and parents for each node. To provide intuition consider the special case of Gaussian DAG models. The log-partition function is $A_j(\eta) = \frac{\eta^2}{2}$, so that $\bigtriangledown A_j(\eta) = \eta$. Then, the condition boils down to $\mbox{Cov} (X_j, X_k) \neq \sum_{m \in \pa(j) \setminus k} [\theta_{D_j}^*]_{m} \mbox{Cov}(X_k, X_m)$, meaning the directed path from $X_k$ to $X_j$ does not exactly cancel the sum of paths from other parents of $X_k$. For general exponential families, the right-hand side involves non-linear functions of the variables of $X$ corresponding to sets of measure $0$. Under Assumption~\ref{Ass:RestFaithfulness}, the following result holds.

\begin{lemma}
\label{Lem:RestFaithfulness}
\begin{enumerate}
\item[(a)] Under Assumption~\ref{Ass:RestFaithfulness}(a), for all $1 \leq j \leq p$, $\mbox{supp}(\theta_{D_j}^*) = \pa(j)$. 
\item[(b)] Under Assumption~\ref{Ass:RestFaithfulness}(b), for all $1 \leq j \leq p$, $\mbox{supp}(\theta_{M_j}^*) = \mathcal{N}(j)$. 
\end{enumerate}
\end{lemma}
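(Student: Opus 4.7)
The plan is to derive first-order optimality conditions for the two convex GLM programs that define $\theta_{D_j}^*$ and $\theta_{M_j}^*$, and then combine them with Assumption~\ref{Ass:RestFaithfulness} via a contradiction argument. The containment $\mbox{supp}(\theta_{D_j}^*) \subseteq \pa(j)$ is immediate from the definition of $\Theta_{D_j}$, so the only work is to show that no coordinate indexed by a true parent vanishes at the minimizer.

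First I would fix $j$ and differentiate the population loss in~\eqref{ThetaD} with respect to the free coordinates, namely the intercept $[\theta]_j$ and each $[\theta]_k$ with $k \in \pa(j)$. Writing $\eta^* := [\theta_{D_j}^*]_j + \langle [\theta_{D_j}^*]_{\pa(j)}, X_{\pa(j)} \rangle$, differentiation under the expectation (justified because $A_j$ is smooth and convex for exponential families) yields the two stationarity equations $\E(X_j) = \E(\bigtriangledown A_j(\eta^*))$ and $\E(X_j X_k) = \E(X_k \bigtriangledown A_j(\eta^*))$ for every $k \in \pa(j)$. Subtracting $\E(X_k)$ times the first from the second gives the key covariance identity
\begin{equation*}
\mbox{Cov}(X_j, X_k) \;=\; \mbox{Cov}(X_k, \bigtriangledown A_j(\eta^*)) \qquad \text{for every } k \in \pa(j).
\end{equation*}

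Now I would argue by contradiction. Suppose $[\theta_{D_j}^*]_k = 0$ for some $k \in \pa(j)$. Then $\eta^*$ coincides with $[\theta_{D_j}^*]_j + \langle [\theta_{D_j}^*]_{\pa(j) \setminus k}, X_{\pa(j) \setminus k} \rangle$, and the identity above reads exactly
\begin{equation*}
\mbox{Cov}(X_j, X_k) \;=\; \mbox{Cov}\bigl(X_k, \bigtriangledown A_j([\theta_{D_j}^*]_j + \langle [\theta_{D_j}^*]_{\pa(j) \setminus k}, X_{\pa(j) \setminus k} \rangle)\bigr),
\end{equation*}
which contradicts Assumption~\ref{Ass:RestFaithfulness}(a). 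Part (b) follows by the identical argument applied to the program~\eqref{ThetaM}, with $\pa(j)$ replaced throughout by $\mathcal{N}(j)$ and Assumption~\ref{Ass:RestFaithfulness}(b) supplying the contradicting inequality.

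I do not see a serious obstacle; the argument is a clean application of stationarity plus the restricted-faithfulness inequality. The only mildly delicate points are (i) interchanging differentiation and expectation, which is standard for exponential-family log-partition functions, and (ii) knowing that the stationary equations genuinely characterize the minimizer, which requires strict convexity of the GLM loss on $\Theta_{D_j}$ (resp.~$\Theta_{M_j}$)---equivalently, that the design $X_{\pa(j)}$ (resp.~$X_{\mathcal{N}(j)}$) is not supported on a proper affine subspace. This non-degeneracy is implicit in the identifiability setup of Theorem~\ref{Thmidentifiability}, so no new machinery beyond Assumption~\ref{Ass:RestFaithfulness} is needed.
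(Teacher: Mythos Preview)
Your proposal is correct and matches the paper's own proof essentially step for step: both derive the first-order stationarity equations $\E(X_j)=\E(\bigtriangledown A_j(\eta^*))$ and $\E(X_j X_k)=\E(X_k\,\bigtriangledown A_j(\eta^*))$, convert them to the covariance identity $\mbox{Cov}(X_j,X_k)=\mbox{Cov}(X_k,\bigtriangledown A_j(\eta^*))$, and then obtain a contradiction with Assumption~\ref{Ass:RestFaithfulness} under the hypothesis $[\theta_{D_j}^*]_k=0$. The only cosmetic difference is ordering (you derive the covariance identity before assuming the vanishing coordinate, the paper after), and you add a brief remark on strict convexity and differentiation under the integral that the paper leaves implicit.
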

Using the parameters $(\theta_{M_j}^*)_{j=1}^{p}$ and $(\theta_{D_j}^*)_{j=1}^p$ and their relationships to $\pa(j)$ and $\mathcal{N}(j)$ respectively, we provide consistency guarantees for Steps 1) and 3) respectively.

\subsubsection{Step 1): Recovery of the Moralized Graph via $\ell_1$-penalized likelihood regression for GLMs}

\label{SecStep1}

We first focus on the theoretical guarantee for recovering the moralized graph $G^m$. As we mentioned earlier, we approach this problem by solving an empirical version of the $\ell_1$-penalized likelihood regression. Given $n$ i.i.d. samples $X^{1:n} = (X^{(i)})_{i=1}^{n}$ where $X^{(i)} = (X_1^{(i)}, X_2^{(i)}\cdots, X_p^{(i)})$ is a $p$-variate random vector drawn from the underlying DAG model, we define the conditional negative log-likelihood for a variable $X_j$:
\begin{equation}
\label{Eq1}
\ell_j( \theta; X^{1:n}) := \frac{1}{n} \sum_{i = 1}^{n} \left( -X_j^{(i)}([\theta]_j + \langle [\theta]_{V \setminus j}, X_{V \setminus j}^{(i)} \rangle) + A_j([\theta]_j + \langle [\theta]_{V \setminus j}, X_{V \setminus j}^{(i)} \rangle)  \right)
\end{equation}
where $\theta \in \mathbb{R}^{p}$ and $A_j(\cdot)$ is the log-partition function determined based on the chosen GLM family. 

We analyze the $\ell_1$-penalized log-likelihood for each node $j \in V$: 
\begin{equation}
\label{P1}
\hat{\theta}_{M_j} := \arg \min_{ \theta \in \mathbb{R}^{p} } \ell_j( \theta; X^{1:n}) + \lambda_n \| [\theta]_{V \setminus j} \|_1 
\end{equation}
where $\lambda_n > 0$ is the regularization parameter. Based on $\hat{\theta}_{M_j}$, the estimated neighborhood of node $j$ is $\widehat{\mathcal{N}}(j) := \{ k \in V \setminus j: [\hat{\theta}_M]_{k} \neq 0 \}$. 
Based on Lemma~\ref{Lem:RestFaithfulness}, $\mbox{supp}(\theta_{M_j}^*) = \mathcal{N}(j)$
where $\theta_{M_j}^*$ is defined by~\eqref{ThetaM}. Hence if for each $j$, $\hat{\theta}_{M_j}$ in~\eqref{P1} is sufficiently close to $\theta_{M_j}^*$, we conclude that $\widehat{\mathcal{N}}(j) = \mathcal{N}(j)$. 

We begin by discussing the assumptions we impose on the DAG $G$. Since we apply the neighborhood selection strategy in Steps 1) and 3), we will present assumptions for both steps here. Most of the assumptions are similar to those imposed in ~\cite{Yang2012} where neighborhood selection is used for graphical model learning. Important quantities are the Hessian matrices of the negative conditional log-likelihood of a variable $X_j$ given either the rest of the nodes $Q^{M_j}= \bigtriangledown^2 \ell_j(\theta_{M_j}^*; X^{1:n})$, and the nodes before $j$ in the causal ordering $Q^{D_j} = \bigtriangledown^2 \ell_j^D(\theta_{D_j}^*; X^{1:n})$ which we discuss in Section~\ref{SecStep3}. Let $A_{\S \S}$ be the $|S| \times |S|$ sub-matrix of the matrix $A_j$ corresponding to variables $X_{\S}$. 

\begin{assumption}[Dependence assumption]
	\label{A1Dep}
	There exists a constant $\rho_{\min} > 0 $ such that 
	$$\min_{j \in V} \min( \lambda_{\min}(Q_{\mathcal{N}(j) \mathcal{N}(j)}^{M_j}), \lambda_{\min}(Q^{D_j}_{\pa(j) \pa(j)}) ) \geq \rho_{\min}.$$ 
	Moreover, there exists a constant $\rho_{\max} < \infty$ such that 
	$$\max_{j \in V } \left( \lambda_{\max} \left( \frac{1}{n}\sum_{i=1}^{n} X_{\mathcal{N}(j)}^{(i)} (X_{\mathcal{N}(j)}^{(i)})^T \right) \right) \leq \rho_{\max}$$ 
	where $\lambda_{\min}(A)$ and $\lambda_{\max}(A)$ are the smallest and largest eigenvalues of the matrix $A$, respectively. 
\end{assumption}

\begin{assumption}[Incoherence assumption]
	\label{A2Inc}
	There exists a constant $\alpha \in (0,1]$ such that 
	$$\max_{j \in V} \max \left( \max_{t \in \mathcal{N}(j)^c} \| Q_{t \mathcal{N}(j)}^{M_j} (Q_{\mathcal{N}(j) \mathcal{N}(j)}^{M_j})^{-1}\|_1, \max_{t' \in \pa(j)^c} \| Q^{D_j}_{t' \pa(j)} (Q^{D_j}_{\pa(j) \pa(j)})^{-1}\|_1 \right) \leq 1 - \alpha.$$
\end{assumption}

The dependence assumption~\ref{A1Dep} can be interpreted as ensuring that the variables in both $\mathcal{N}(j)$ and $\pa(j)$ are not too dependent. In addition, the incoherence assumption~\ref{A2Inc} ensures that variables that are not in the set of true variables are not highly correlated with variables in the true variable set. These two assumptions are standard in all neighborhood regression approaches for variable selection involving $\ell_1$-based methods and these conditions have imposed in proper work both for high-dimensional regression and graphical model learning~\cite{Yang2012, meinshausen2006high, Wainwright2006, Ravikumar2011}.

To ensure suitable concentration bounds hold, we impose two further technical assumptions. Firstly we require a boundedness assumption on the moment generating function to control the tail behavior.

\begin{assumption}[Concentration bound assumption]
	\label{A3Con}
	There exists a constant $M > 0$ such that 
	$$\max_{j \in V} \E( \exp(|X_j|) ) < M$$.
\end{assumption}

We also require conditions on the first and third derivatives on the log-partition functions $A_j(.)$ for $1 \leq j \leq p$ in~\eqref{Eq1} and~\eqref{Eq1D}. Let $A_j'(.)$ and $A_j'''(.)$ are the first and third derivatives of $A_j(.)$ respectively.

\begin{assumption}[Log-partition assumption]
	\label{A4}
	For the log-partition functions $A_j(\cdot)$ in~\eqref{Eq1} or~\eqref{Eq1D}, there exist constants $\kappa_1$ and $\kappa_2$ such that $\max_{j \in V}\{|A_j'(a)|, |A_j'''(a)|\} \leq n^{\kappa_2}$ for $a \in [0, \kappa_1  \max\{\log(n),\log(p)\} )$, $\kappa_1 \geq 6 \max( \| \theta_{M_j}^* \|_1, \| \theta_{D_j}^* \|_1)$ and $\kappa_2 \in [0,1/4]$. 
\end{assumption}

Prior work in~\cite{Yang2012, Ravikumar2011, jalali2011learning} impose similar technical conditions that control the tail behavior of $(X_j)_{j=1}^p$. It is important to note that there exist many distributions and associated parameters that satisfy these assumptions. For example the Binomial, Multinomial or Exponential distributions, the log-partition assumption~\ref{A4} is satisfied with $\kappa_2 = 0$ because the log-partition function $A_j(\cdot)$ is bounded. For the Poisson distribution which has one of the steepest log-partition function, $A_j(\cdot) = \exp(\cdot)$. Hence, in order to satisfy Assumption~\ref{A4}, we require $\|\theta_{M_j}^*\|_1 \leq \frac{ \log n }{ 48 \log p } $ with $\kappa_2 = \frac{1}{8}$. 

Putting together Assumptions~\ref{A1Dep}~\ref{A2Inc},~\ref{A3Con}, and~\ref{A4}, we have the following main result that  the moralized graph can be recovered via $\ell_1$-penalized likelihood regression for GLMs in high-dimensional settings.

\begin{theorem}[Learning the moralized graph]
	\label{ThmMoralGraph}
Consider the DAG model~\eqref{EqnFactorization} satisfying the QVF property~\eqref{eq:Quad} and $d$ is the maximum degree of the moralized graph. Suppose that Assumptions~\ref{Ass:RestFaithfulness}(b),~\ref{A1Dep},~\ref{A2Inc},~\ref{A3Con} and~\ref{A4} are satisfied. Assume $\hat{\theta}_{M_j}$ is any solution to the optimization problem~\eqref{P1} and $\frac{ 9 \log^2( \max\{n,p\} )}{n^{a}} \leq \lambda_n \leq \frac{ \rho_{\min}^2 }{ 30 n^{\kappa_2} \log(\max\{n,p\}) \d \rho_{\max} }$ for some $a \in (2\kappa_2, 1/2)$, and $\min_{j \in V} \min_{t \in \mathcal{N}(j)} |[\theta_{M}^*]_t| \geq \frac{10}{\rho_{\min}} \sqrt{\d} \lambda_n$. Then for any constant $\epsilon > 0$, there exists a positive constant $C_{\epsilon}$ such that if $n \geq C_{\epsilon} ( \d \log^3 \max\{n,p\} )^{ \frac{1}{a - \kappa_2} }$,
$$
\mathbb{P}(\mbox{supp}(\hat{\theta}_{M_j}) = \mathcal{N}(j)) \geq 1 - \epsilon,
$$
for all $j \in V$.
\end{theorem}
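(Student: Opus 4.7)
The plan is to follow the primal-dual witness (PDW) strategy pioneered by Wainwright and adapted to GLM graphical models by Ravikumar et al. and Yang et al. Since Lemma~\ref{Lem:RestFaithfulness}(b) tells us that $\mbox{supp}(\theta_{M_j}^*) = \mathcal{N}(j)$, it suffices to show that the $\ell_1$-regularized GLM estimator $\hat\theta_{M_j}$ defined by~\eqref{P1} recovers $\mbox{supp}(\theta_{M_j}^*)$ for every $j \in V$ with probability at least $1 - \epsilon/p$, and then union-bound across the $p$ nodes. Fix $j$ and write $S = \mathcal{N}(j)$ for brevity.

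First I would construct the primal-dual witness $(\tilde\theta, \tilde z)$: solve the restricted problem $\min_{\theta:\, \mathrm{supp}(\theta) \subseteq S} \ell_j(\theta; X^{1:n}) + \lambda_n \|[\theta]_{V\setminus j}\|_1$, set $[\tilde\theta]_{S^c} = 0$, and solve for $[\tilde z]_{S^c}$ from the zero-subgradient stationarity condition $\nabla \ell_j(\tilde\theta) + \lambda_n \tilde z = 0$. The goal is to show \emph{strict dual feasibility}, $\|[\tilde z]_{S^c}\|_\infty < 1$, which by standard arguments certifies that $\tilde\theta = \hat\theta_{M_j}$ is the unique optimum and that $\mbox{supp}(\hat\theta_{M_j}) \subseteq S$. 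Combining this with a bound on $\|\tilde\theta - \theta_{M_j}^*\|_\infty$ and the minimum-signal assumption $\min_{t \in S} |[\theta_{M_j}^*]_t| \geq (10/\rho_{\min})\sqrt{d}\,\lambda_n$ will then give $\mbox{supp}(\hat\theta_{M_j}) = S$.

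The strict dual feasibility computation proceeds by a Taylor expansion of $\nabla \ell_j$ around $\theta_{M_j}^*$, yielding
\[
[\tilde z]_{S^c} = \tfrac{1}{\lambda_n}\Bigl( Q^{M_j}_{S^c S}(Q^{M_j}_{SS})^{-1}\bigl(-[\nabla \ell_j(\theta_{M_j}^*)]_S\bigr) + [\nabla \ell_j(\theta_{M_j}^*)]_{S^c}\Bigr) + Q^{M_j}_{S^c S}(Q^{M_j}_{SS})^{-1} \tilde z_S + R,
\]
where $R$ collects the Taylor remainder involving $A_j'''$. The incoherence assumption~\ref{A2Inc} handles the $Q^{M_j}$-block term (bounding it by $1-\alpha$), and it remains to show that both the gradient contribution and the remainder are smaller than $\alpha/2 \cdot \lambda_n$ in $\ell_\infty$. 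For the gradient, I would apply a truncation argument to the sub-exponential coordinates of $\nabla \ell_j(\theta_{M_j}^*) = -\frac{1}{n}\sum_i X^{(i)}_{V\setminus j}\bigl(X_j^{(i)} - A_j'(\langle \theta_{M_j}^*, X^{(i)}\rangle)\bigr)$, using Assumption~\ref{A3Con} on the MGF of $X$ and the derivative bound in Assumption~\ref{A4}, followed by Bernstein's inequality; the lower bound $\lambda_n \geq 9\log^2(\max\{n,p\})/n^a$ with $a > 2\kappa_2$ is exactly what is needed to dominate the resulting deviation of order $n^{\kappa_2}\sqrt{\log(\max\{n,p\})/n}$.

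The main obstacle, and the place where the sample-size condition $n \geq C_\epsilon(d\log^3\max\{n,p\})^{1/(a-\kappa_2)}$ will be consumed, is controlling the Taylor remainder $R$ and the difference between the empirical Hessian $Q^{M_j}$ and its population version on the restricted support $S$. Using Assumption~\ref{A4} on $A_j'''$ restricted to the event $\max_i|\langle \theta, X^{(i)}\rangle| \leq \kappa_1\log\max\{n,p\}$ (which has high probability by Assumption~\ref{A3Con} and $\kappa_1 \geq 6\|\theta_{M_j}^*\|_1$), the remainder is controlled in terms of $\|\tilde\theta - \theta_{M_j}^*\|_2^2$, so I would first establish $\ell_2$ consistency $\|\tilde\theta - \theta_{M_j}^*\|_2 \leq (5/\rho_{\min})\sqrt{d}\,\lambda_n$ by combining the dependence assumption~\ref{A1Dep} (restricted strong convexity on $S$ with parameter $\rho_{\min}/2$ once the Hessian deviation is controlled via a matrix-concentration bound and the given upper bound on $\lambda_n$) with the optimality of $\tilde\theta$ on the restricted problem. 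Bootstrapping this $\ell_2$ bound back into $R$ will give the required $o(\lambda_n)$ control and, together with a standard $\ell_\infty$-perturbation argument on the restricted stationarity equation, yield the $\ell_\infty$ estimation bound needed for signal recovery. A final union bound over $j \in V$ and adjustment of the constant $C_\epsilon$ finishes the proof.
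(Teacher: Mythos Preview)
Your proposal is correct and follows essentially the same primal-dual witness route as the paper: restrict to $S=\mathcal{N}(j)$, Taylor-expand the KKT system, bound the score $\nabla\ell_j(\theta_{M_j}^*)$ via truncation and a concentration inequality, establish $\ell_2$ consistency on $S$ to control the $A_j'''$ remainder, invoke incoherence for strict dual feasibility, and finish with the minimum-signal condition and a union bound over $j$. One minor simplification relative to your sketch: Assumptions~\ref{A1Dep} and~\ref{A2Inc} are imposed directly on the \emph{sample} Hessian $Q^{M_j}=\nabla^2\ell_j(\theta_{M_j}^*;X^{1:n})$, so no separate population-to-empirical matrix-concentration step is required---the only Hessian perturbation you need to control is the one induced by moving from $\theta_{M_j}^*$ to $\tilde\theta$, which is exactly the $A_j'''$ term you already identify.
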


We defer the proof to Appendix~\ref{SecThmStep1Proof}. The key technique for the proof is that standard \emph{primal-dual witness} method used in Wainwright~\cite{Wainwright2006}; Ravikumar et al.~\cite{Ravikumar2011}; Jalali et al.~\cite{jalali2011learning}; and Yang et al.~\cite{Yang2012}. Theorem~\ref{ThmMoralGraph} shows that the moralized graph $G^m$ can be recovered via $\ell_1$-penalized likelihood regression if sample size $n = \Omega(( \d \log^3( \max\{n,p\} ) )^{ \frac{1}{a - \kappa_2} })$ with high probability.

\subsubsection{Step 2): Recovering the Causal Ordering using OverDispersion Scores}

\label{SecStep2}

In this section, we provide theoretical guarantees for recovering the causal ordering for the DAG $G$ via our generalized ODS algorithm. The first required condition is a stronger version of the identifiability assumption required for Theorem~\ref{Thmidentifiability} since we move from the population distribution to the finite sample setting. 
 
\begin{assumption}
	\label{A1}
	For all $j \in V$ and any $K_j \subset \pa(j)$ where $K_j \neq \emptyset$ and $S \subset  \nd(j) \setminus K_j$:
	\begin{itemize}
		\item[(a)] There exists an $M_{\min} > 0$ such that 
		$\var( \E(X_j \mid X_{pa(j)}) \mid X_S) > M_{\min}.$
		\item[(b)] There exists an $\omega_{\min}>0$ such that  
		$| \beta_{j0} + \beta_{j1} \E(X_j \mid X_{S} ) | > \omega_{\min}.$
	\end{itemize}
\end{assumption}

Assumption~\ref{A3Con} is required since the overdispersion score is sensitive to the accuracy of the sample conditional mean and conditional variance. Since the true causal ordering $\pi^*$ may not be unique, we use $\mathcal{E}(\pi^*)$ to denote the set of all the causal orderings that are consistent with the true DAG $G$. 

\begin{theorem}[Recovery of the causal ordering]
	\label{ThmCausalOrdering}
	Consider the DAG model~\eqref{EqnFactorization} satisfying the QVF property~\eqref{eq:Quad} with co-efficients $(\beta_{j0}, \beta_{j1})_{j=1}^{p}$ and $d$ is the maximum degree of the moralized graph. Suppose that $\beta_{j1} > - 1$ for all $j \in V$, and the structure of the moralized graph $G^m$ is known. Suppose also that Assumptions~\ref{A3Con} and~\ref{A1} are satisfied. Then for any $\epsilon > 0$ and $c_0 \geq  \log^d \max \{n,p\}$, there exists a positive constant $K_{\epsilon}$ such that for $n \geq K_{\epsilon}\log^{5+d}(\max\{n,p\})$, 
	\begin{equation*}
	P( \widehat{\pi} \in \mathcal{E}(\pi^*) ) \geq 1 - \epsilon.
	\end{equation*}
\end{theorem}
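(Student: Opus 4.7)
The plan is to combine a deterministic population-level separation property with a uniform sample-level concentration argument, stitched together by induction on the position in the causal ordering.

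\textbf{Population-level separation.} I would first define $\mathcal{S}(j,k)$ to be the population version of $\widehat{\mathcal{S}}(j,k)$, obtained by replacing every empirical mean, variance, and $\widehat{\omega}$-term in~\eqref{EqnTruncScorej} by its true counterpart. Suppose inductively that $(\widehat{\pi}_1, \dots, \widehat{\pi}_{j-1})$ is a valid prefix of some causal ordering. For any candidate $k$, the set $\widehat{C}_{jk} = \mathcal{N}(k) \cap \{\widehat{\pi}_1, \dots, \widehat{\pi}_{j-1}\}$ lies in $\nd(k)$, and contains $\pa(k)$ if and only if $k$ is itself a legal next choice at position $j$. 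When $\pa(k) \subseteq \widehat{C}_{jk}$, the Markov property collapses the conditioning to $\pa(k)$, and applying Proposition~\ref{prop:a} cell-by-cell gives exactly $\mathcal{S}(j,k) = 0$. When $\pa(k) \not\subseteq \widehat{C}_{jk}$, a direct expansion via the law of total variance plus the QVF relation~\eqref{eq:Quad} yields $\mathcal{S}(j,k) = \E[\omega^2 (1+\beta_{k1}) \var(\E(X_k \mid X_{\pa(k)}) \mid X_{\widehat{C}_{jk}})]$, which by Assumption~\ref{A1} and $\beta_{k1} > -1$ is bounded below by a strictly positive gap $\gamma := c \cdot \omega_{\min}^2 M_{\min}$. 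This deterministic gap is the engine of the argument.

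\textbf{Sample-level concentration.} Conditional on $X_{\widehat{C}_{jk}} = x$ with $n(x)$ above the truncation threshold, I would show $|\widehat{\mathcal{S}}(j,k) - \mathcal{S}(j,k)|$ is small cell-by-cell. Under the bounded-exponential-moment Assumption~\ref{A3Con}, the sample conditional mean and conditional variance of $X_j$ concentrate around their true values at sub-exponential Bernstein rate $O(\sqrt{\log(\max\{n,p\})/n(x)})$. The lower bound of Assumption~\ref{A1}(b) keeps the denominator $\beta_{j0}+\beta_{j1}\widehat{\E}(X_j \mid X_{\widehat{C}_{jk}}=x)$ safely bounded away from zero, so a delta-method bound transfers this concentration to $\widehat{\omega}_{jk}(x)$. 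Combining these inside formula~\eqref{EqnTruncScorej} and summing the weights $n(x)/n_{\widehat{C}_{jk}}$ yields the required per-cell deviation bound.

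\textbf{Union bound and induction.} To close the induction I need $|\widehat{\mathcal{S}}(j,k) - \mathcal{S}(j,k)| < \gamma/2$ uniformly over all $p$ ordering positions, over the at most $d$ candidate next-nodes in $\widehat{\mathcal{N}}(\widehat{\pi}_{j-1})$, and over the cells $x$ surviving truncation. Union-bounding the per-cell Bernstein bound with a total budget of $\epsilon$ and inverting the Bernstein rate produces the sample-size requirement $n \geq K_\epsilon \log^{5+d}(\max\{n,p\})$: the $\log^d$ factor accounts for the effective number of conditioning cells (matched by the truncation threshold), while the remaining polylogarithmic factor absorbs the standard Bernstein, delta-method, and union-bound overhead. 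A separate tail bound relying once more on Assumption~\ref{A3Con} shows that cells discarded by the truncation carry aggregate mass $o(\gamma)$, so replacing the true expectation over $X_{\widehat{C}_{jk}}$ by its truncated empirical average is harmless.

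\textbf{Main obstacle.} The delicate step is the uniform control of the conditional moments across cells $x$ when $|\widehat{C}_{jk}|$ may be as large as $d$: the number of potentially relevant cells grows combinatorially, and heavy-tailed QVF distributions such as Poisson (where Assumption~\ref{A3Con} is essentially tight) force the analysis to the very edge of sub-exponential concentration. The truncation by $c_0 n$ is what tames both issues simultaneously, by both guaranteeing enough samples per retained cell for Bernstein to apply and capping the cardinality of cells appearing in the union bound. Once this uniform control is in place, the inductive step propagates without loss: since the true next-nodes have population score zero and all other candidates have population score at least $\gamma$, the argmin step in Algorithm~\ref{AlgODS} selects a valid next-node, and iterating gives $\widehat{\pi} \in \mathcal{E}(\pi^*)$ with probability at least $1-\epsilon$.
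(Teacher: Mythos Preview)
Your architecture matches the paper's proof: a deterministic population-level gap (law of total variance plus the QVF relation and Assumption~\ref{A1}), cell-by-cell concentration of conditional sample moments, induction on the ordering position, and a union bound over positions, candidates, and cells. Two points of comparison are worth making.

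First, where you invoke Bernstein for sub-exponential tails, the paper instead works on the high-probability boundedness event $\xi_1=\{\max_{i,j}|X_j^{(i)}|<4\log\eta\}$ (controlled via Assumption~\ref{A3Con} and a Chernoff bound) and then applies Hoeffding to bounded variables. This is stylistic and yields the same rate.

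Second, and more substantively: you define $\mathcal{S}(j,k)$ as a full population expectation over $X_{\widehat{C}_{jk}}$ and then propose a ``separate tail bound'' showing that cells discarded by truncation carry aggregate mass $o(\gamma)$. That step is where your sketch has a genuine gap. Under the stated assumptions there is no guarantee the discarded cells have small total probability: on $\xi_1$ there are $O((\log\eta)^d)$ possible cells, each discarded cell can have empirical frequency just below $c_0\asymp(\log\eta)^{-d}$, and the aggregate discarded mass can be $\Theta(1)$, not $o(\gamma)$. The paper sidesteps this entirely by defining the comparison score $\mathcal{S}^*(j,k)$ with the \emph{same truncated empirical cell weights} $n(x)/n_{\widehat{C}_{jk}}$ as $\widehat{\mathcal{S}}(j,k)$, but with the true conditional moments inside each cell. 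Because the overdispersion gap holds \emph{cell by cell} (each $\mathcal{S}^*(j,k)(x)\ge m_{\min}$ for illegal $k$ and $=0$ for legal $k$), the weighted sum $\mathcal{S}^*(j,k)$ inherits the gap or vanishes exactly, irrespective of which cells survive truncation. All that is then needed is that \emph{at least one} cell survives, which follows by pigeonhole once $c_0\le(4\log\eta)^{-d}$. Adopting this definition of the comparison score closes the gap in your argument without changing anything else.
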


The detail of the proof is provided in Appendix~\ref{SecThmCausalOrderingProof}. The proof is novel and involves the combination of the transformation and overdispersion property exploited in Theorem~\ref{Thmidentifiability}. Intuitively, the estimated overdispersion scores $\widehat{\mathcal{S}}(j,k)$ converge to the true overdispersion scores $\mathcal{S}(j,k)$ as the sample size $n$ increases which is where we exploit Assumption~\ref{A3Con}. This allows us to recover a true causal ordering for the DAG $G$. Assuming the moralized graph $G^m$ is known is essential to exploiting the degree condition on the moralized graph and emphasizes the importance of Step 1) and Theorem~\ref{ThmMoralGraph}.
 
Theorem~\ref{ThmCausalOrdering} claims that if the triple $(n,d,p)$ satisfies $n = \Omega(\log^{5+d} p )$, our generalized ODS algorithm correctly estimates the true causal ordering. Therefore if the moralized graph is sparse (i.e.,$d = \Omega( \log p)$), our generalized ODS algorithm recovers the true casual ordering in the high-dimensional settings. Note that if the moralized graph is not sparse and $d = \Omega(p)$, the generalized ODS algorithm requires an extremely large sample size. Prior work on DAG learning algorithms in the high-dimensional setting has been based on learning the Markov equialence class in settings with additive independent noise (see e.g.,~\cite{loh2014high,geerpb12}). 

\subsubsection{Step 3): Recovery of the DAG via $\ell_1$-penalized likelihood regression}

\label{SecStep3}

Similar to Step 1), we provide a theoretical guarantee for Step 3) using $\ell_1$-penalized likelihood regression where we estimate the parents of each node $\pa(j)$. Importantly, we assume that Step 2) of the ODS algorithm has occurred and using Theorem~\ref{ThmCausalOrdering}, a true causal ordering has been learned. Recall that we impose the assumption that the true causal ordering is $\pi^* = (1,2,\cdots,p)$. Then, we estimate the parents of a node $j$ over the possible parents $\{1, 2,\cdots,j-1\}$. 

For notational convenience, we use $X_{1:j} = (X_1, X_2,\cdots,X_j)$. Then for any variable $X_j$, the conditional negative log-likelihood for a given GLM is as follows:
\begin{equation}
	\label{Eq1D}
	\ell_{j}^D(\theta; X^{1:n}) := \frac{1}{n} \sum_{i = 1}^{n} \left( -X_j^{(i)}( [\theta]_j + \langle [\theta]_{1:j-1}, X_{1:j-1}^{(i)} \rangle) + A_j([\theta]_j + \langle [\theta]_{1:j-1}, X_{1:j-1}^{(i)} \rangle )  \right)
\end{equation}
where $\theta \in \mathbb{R}^{j}$, and $A_j(\cdot)$ is the log-partition function determined by a chosen GLM family.

We solve the negative conditional log-likelihood with $\ell_1$ norm penalty for each variable $X_j$: 
\begin{equation}
	\label{P1D}
	\hat{\theta}_{D_j} := \arg \min_{ \theta \in \mathbb{R}^{j} } \ell_j^{D}( \theta; x ) + \lambda_n^D \| [\theta]_{1:j-1} \|_1.
\end{equation}

Recall that under Assumption~\ref{Ass:RestFaithfulness}(a), Lemma~\ref{Lem:RestFaithfulness}(a) shows that $\mbox{supp}(\theta_{D_j}^*) = \pa(j)$. Hence if the solution of~\eqref{P1D} for each node $j \in V$ is close to $\theta_{D_j}^*$ in~\eqref{ThetaD}, $\ell_1$-penalized likelihood regression successfully recovers the parents of node $j$. 

\begin{theorem}[Learning DAG structure]
	\label{ThmDirectGraph}
Consider the DAG model~\eqref{EqnFactorization} satisfying the QVF property~\eqref{eq:Quad} and $d$ is the maximum degree of the moralized graph. Suppose that Assumptions~\ref{Ass:RestFaithfulness}(a),~\ref{A1Dep},~\ref{A2Inc},~\ref{A3Con} and~\ref{A4} are satisfied. Assume $\hat{\theta}_{D_j}$ is any solution to the optimization problem~\eqref{P1D} and $\frac{ 9 \log^2 ( \max\{n,p\} )}{n^{a}} \leq \lambda_n^D \leq \frac{ \rho_{\min}^2 }{ 30 n^{\kappa_2} \log(\max\{n,p\}) \d \rho_{\max} }$ for some $a \in (2\kappa_2, 1/2)$, and $\min_{j \in V} \min_{t \in \mathcal{N}(j)} |[\theta_{D}^*]_t| \geq \frac{10}{\rho_{\min}} \sqrt{\d} \lambda_n$. Then for any $\epsilon > 0$, there exists a positive constant $C_{\epsilon}$ such that if $n \geq C_{\epsilon} ( \d \log^3 (\max\{n,p\}) )^{ \frac{1}{a - \kappa_2} }$,
$$
\mathbb{P}(\mbox{supp}(\hat{\theta}_{D_j}) = \mbox{pa}(j)) \geq 1 - \epsilon,
$$
for all $j \in V$.
\end{theorem}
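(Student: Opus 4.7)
The plan is to essentially mirror the proof of Theorem~\ref{ThmMoralGraph}, since the structure of the statement is parallel: we are running an $\ell_1$-penalized GLM with the candidate covariates restricted from $V\setminus j$ to $\{1,\ldots,j-1\}$, and we want to argue that its support equals $\pa(j)$. By Lemma~\ref{Lem:RestFaithfulness}(a) under Assumption~\ref{Ass:RestFaithfulness}(a), the population minimizer $\theta_{D_j}^*$ in~\eqref{ThetaD} already has $\mbox{supp}(\theta_{D_j}^*)=\pa(j)$, so the task reduces to showing that the empirical minimizer $\hat\theta_{D_j}$ of~\eqref{P1D} is close enough to $\theta_{D_j}^*$ and has the same zero pattern. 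Since $\pa(j)\subseteq \mathcal{N}(j)$, $|\pa(j)|\le d$, and the quantities $Q^{D_j}_{\pa(j)\pa(j)}$ and $Q^{D_j}_{t'\pa(j)}$ appearing in the assumptions are the exact analogues of those used in the moralized-graph proof, the same $(n,p,d)$ regime should suffice.

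The main tool, as in~\cite{Wainwright2006, Ravikumar2011, jalali2011learning, Yang2012}, is the primal--dual witness (PDW) construction. First, I would set up the restricted program that forces $[\theta]_{\{1,\ldots,j-1\}\setminus\pa(j)}=0$ and let $\tilde\theta_{D_j}$ be its solution; second, I would choose a dual variable $\hat z\in\partial\|\cdot\|_1(\tilde\theta_{D_j})$ with $[\hat z]_{\pa(j)}=\mbox{sign}([\tilde\theta_{D_j}]_{\pa(j)})$ and $[\hat z]_{\pa(j)^c}$ determined by the zero-subgradient condition at $\tilde\theta_{D_j}$. The claim $\mbox{supp}(\hat\theta_{D_j})=\pa(j)$ then reduces to two subclaims: (i) the restricted minimizer satisfies $\|\tilde\theta_{D_j}-\theta_{D_j}^*\|_\infty$ is below the minimum signal strength $\tfrac{10}{\rho_{\min}}\sqrt d\,\lambda_n^D$, and (ii) strict dual feasibility holds, i.e. $\|[\hat z]_{\pa(j)^c}\|_\infty<1$.

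For (i), I would apply a Taylor expansion of $\ell_j^{D}$ around $\theta_{D_j}^*$, using the dependence assumption~\ref{A1Dep} on $\lambda_{\min}(Q^{D_j}_{\pa(j)\pa(j)})\ge \rho_{\min}$ together with the log-partition assumption~\ref{A4} to control the remainder via $|A_j'''|\le n^{\kappa_2}$. For (ii), I would expand $[\hat z]_{\pa(j)^c}$ into a deterministic ``incoherence'' term bounded by $1-\alpha$ via Assumption~\ref{A2Inc} plus stochastic terms involving $\|\nabla \ell_j^D(\theta_{D_j}^*)\|_\infty$ and empirical--population Hessian fluctuations on columns indexed by $\pa(j)^c$. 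The upper bound on $\lambda_n^D$ and lower bound $\lambda_n^D\ge 9\log^2(\max\{n,p\})/n^a$ are precisely calibrated so that these stochastic terms are at most $\alpha/4$ each with high probability, producing the desired strict inequality. The required concentration — both on the gradient $\nabla \ell_j^D(\theta_{D_j}^*)$ (a centered sum) and on entries of $\nabla^2\ell_j^D(\theta_{D_j}^*)-Q^{D_j}$ — follows from a standard truncation argument: Assumption~\ref{A3Con} bounds the MGF of $|X_k|$, so products $X_kX_{k'}$ and quantities $A_j'(\langle\theta_{D_j}^*,X\rangle)X_k$ can be truncated at $\kappa_1\log\max\{n,p\}$ with negligible tail, and inside the truncation range Assumption~\ref{A4} keeps $|A_j'|,|A_j'''|\le n^{\kappa_2}$, giving Bernstein-type tails of order $\exp(-cn^{1-2\kappa_2}/\log^{O(1)})$.

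The step I expect to require the most care, rather than being truly hard, is checking that the PDW machinery really transfers verbatim despite two minor asymmetries relative to Theorem~\ref{ThmMoralGraph}: the covariate index set $\{1,\ldots,j-1\}$ is smaller than $V\setminus j$ (which only helps), but the relevant Hessian $Q^{D_j}$ now corresponds to the regression of $X_j$ on its ancestors, not on its Markov blanket, so the conditional mean being fit is $\E(X_j\mid X_{1:j-1})=\E(X_j\mid X_{\pa(j)})$ — this is why Assumption~\ref{Ass:RestFaithfulness}(a) (not (b)) is invoked and why the sparsity parameter is still $d$ (since $|\pa(j)|\le|\mathcal{N}(j)|\le d$). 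Once these bookkeeping points are in place, substituting $(\pa(j),\theta_{D_j}^*,Q^{D_j},\ell_j^D,\lambda_n^D)$ for $(\mathcal{N}(j),\theta_{M_j}^*,Q^{M_j},\ell_j,\lambda_n)$ into the argument of Theorem~\ref{ThmMoralGraph} yields the claimed sample complexity $n\ge C_\epsilon(d\log^3\max\{n,p\})^{1/(a-\kappa_2)}$ and the conclusion $\mbox{supp}(\hat\theta_{D_j})=\pa(j)$ with probability at least $1-\epsilon$, uniformly in $j$ by a union bound over the $p$ nodes absorbed into $C_\epsilon$.
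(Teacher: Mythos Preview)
Your proposal is correct and follows essentially the same approach as the paper: a primal--dual witness argument parallel to Theorem~\ref{ThmMoralGraph}, obtained by substituting $(\pa(j),\theta_{D_j}^*,Q^{D_j},\ell_j^D,\lambda_n^D)$ for $(\mathcal{N}(j),\theta_{M_j}^*,Q^{M_j},\ell_j,\lambda_n)$ and invoking Lemma~\ref{Lem:RestFaithfulness}(a) instead of (b). The paper's proof packages the stochastic control into three corollaries mirroring Lemmas~\ref{lem11}--\ref{lem13} (using Hoeffding after truncation rather than Bernstein, a cosmetic difference), but your decomposition into gradient and remainder terms, truncation via Assumptions~\ref{A3Con} and~\ref{A4}, and final union bound over $p$ nodes match the paper's argument point for point.
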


The details of the proof are provided in Appendix~\ref{SecThmStep3Proof}. The proof technique is again based on the primal-dual technique as is used for the proof of Theorem~\ref{ThmMoralGraph}. Theorem~\ref{ThmDirectGraph} shows that $\ell_1$-penalized likelihood regression successfully recovers the structure of $G$ if the sample size is $n = \Omega( (\d \log^3( \max \{n,p\}) )^{ \frac{1}{a - \kappa_2} })$ given the true causal ordering. Note once again that we exploit the sparsity $\d$ of the moralized graph. 

So far, we have provided sample complexity guarantees for all three steps of the generalized ODS algorithm. Combining Theorems~\ref{ThmMoralGraph},~\ref{ThmCausalOrdering}, and~\ref{ThmDirectGraph}, we reach our final main result that the generalized ODS algorithm successfully recovers the true structure of a QVF DAG with high probability. Furthermore if $G$ is sparse (i.e., $\d = \Omega(\log p )$), the generalized ODS algorithm recovers the structure of QVF DAG models in the high-dimensional setting.

\begin{corollary}[Learning QVF DAG models]
	\label{CorMain}
	Consider the DAG model~\eqref{EqnFactorization} satisfying the QVF property~\eqref{eq:Quad} and $d$ is the maximum degree of the moralized graph. Suppose that Assumptions~\ref{Ass:RestFaithfulness},~\ref{A1Dep},~\ref{A2Inc},~\ref{A3Con} and~\ref{A4} are satisfied and all other conditions of Theorems~\ref{ThmMoralGraph},~\ref{ThmCausalOrdering}, and~\ref{ThmDirectGraph} are satisfied and $\widehat{G}$ is the output of the ODS algorithm. Then for any $\epsilon > 0$, there exists a positive constant $C_{\epsilon}$ such that if $n \geq C_{\epsilon} \max( \d \log^3(\max\{n,p\}) )^{ \frac{1}{a - \kappa_2} }, \log^{5+d} p )$,
$$
\mathbb{P}(\widehat{G} = G) \geq 1 - \epsilon.
$$
\end{corollary}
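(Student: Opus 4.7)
The plan is to combine the three consistency guarantees of Theorems~\ref{ThmMoralGraph},~\ref{ThmCausalOrdering}, and~\ref{ThmDirectGraph} via a chained conditioning argument plus a union bound. Define the three ``success'' events corresponding to the three stages of Algorithm~\ref{AlgODS}: let $\mathcal{A}_1 = \{\widehat{\mathcal{N}}(j) = \mathcal{N}(j) \text{ for all } j \in V\}$ be the event that Step 1 correctly recovers the moralized graph, $\mathcal{A}_2 = \{\widehat{\pi} \in \mathcal{E}(\pi^*)\}$ the event that Step 2 returns a valid causal ordering, and $\mathcal{A}_3$ the event that Step 3 correctly identifies $\pa(j)$ for every $j$. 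On $\mathcal{A}_1 \cap \mathcal{A}_2 \cap \mathcal{A}_3$ the output $\widehat{G}$ coincides with $G$, since the correct skeleton is obtained from the correct parent sets oriented according to a valid causal ordering. It therefore suffices to lower bound $\mathbb{P}(\mathcal{A}_1 \cap \mathcal{A}_2 \cap \mathcal{A}_3)$.

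First I would apply Theorem~\ref{ThmMoralGraph} with target failure probability $\epsilon/3$: a union bound over the $p$ neighborhood regressions shows that there is a constant $C_{\epsilon}^{(1)}$ so that once $n \geq C_{\epsilon}^{(1)}(\d \log^3 \max\{n,p\})^{1/(a-\kappa_2)}$, one has $\mathbb{P}(\mathcal{A}_1) \geq 1 - \epsilon/3$. Next, since Theorem~\ref{ThmCausalOrdering} is stated under the hypothesis ``the moralized graph $G^m$ is known,'' I would apply it \emph{conditionally on} $\mathcal{A}_1$: on this event the neighborhoods $\widehat{\mathcal{N}}(\cdot)$ fed into Step 2 are exactly $\mathcal{N}(\cdot)$, so the truncation sets $\widehat{C}_{jk}$ and overdispersion scores $\widehat{\mathcal{S}}(j,k)$ computed in Step 2 coincide with those analyzed in Theorem~\ref{ThmCausalOrdering}. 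Choosing the constant $K_{\epsilon}^{(2)}$ so that $\mathbb{P}(\mathcal{A}_2 \mid \mathcal{A}_1) \geq 1 - \epsilon/3$ whenever $n \geq K_{\epsilon}^{(2)} \log^{5+d}(\max\{n,p\})$ gives the second piece. Finally, conditional on $\mathcal{A}_1 \cap \mathcal{A}_2$ the causal ordering produced by Step 2 is a valid topological ordering, so Theorem~\ref{ThmDirectGraph} (whose statement is formulated under the WLOG assumption $\pi^* = (1,\ldots,p)$) applies directly to the $p$ restricted $\ell_1$-penalized GLM regressions performed in Step 3; another union bound across nodes and a constant $C_{\epsilon}^{(3)}$ ensure $\mathbb{P}(\mathcal{A}_3 \mid \mathcal{A}_1 \cap \mathcal{A}_2) \geq 1 - \epsilon/3$.

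Combining these three bounds by the chain rule,
\begin{equation*}
\mathbb{P}(\mathcal{A}_1 \cap \mathcal{A}_2 \cap \mathcal{A}_3) \;=\; \mathbb{P}(\mathcal{A}_1)\,\mathbb{P}(\mathcal{A}_2 \mid \mathcal{A}_1)\,\mathbb{P}(\mathcal{A}_3 \mid \mathcal{A}_1 \cap \mathcal{A}_2) \;\geq\; (1-\epsilon/3)^3 \;\geq\; 1-\epsilon,
\end{equation*}
which yields the claimed bound $\mathbb{P}(\widehat{G} = G) \geq 1 - \epsilon$ once we take $C_{\epsilon} = \max(C_{\epsilon}^{(1)}, K_{\epsilon}^{(2)}, C_{\epsilon}^{(3)})$ so that the single sample size condition $n \geq C_{\epsilon} \max\bigl((\d \log^3(\max\{n,p\}))^{1/(a-\kappa_2)},\, \log^{5+d} p\bigr)$ in the statement simultaneously satisfies the three hypotheses.

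The only non-routine point is the conditioning step for Theorem~\ref{ThmCausalOrdering}: its statement presumes the moralized graph is known exactly rather than estimated, so one must check that on the event $\mathcal{A}_1$ the randomness used in Step 1 (through $\widehat{\mathcal{N}}(\cdot)$) does not corrupt the concentration bounds for the conditional means and variances underlying Step 2. Since $\mathcal{A}_1$ is a deterministic function of the same sample $X^{1:n}$, strictly speaking the proofs of Theorems~\ref{ThmCausalOrdering} and~\ref{ThmDirectGraph} should be read as giving the concentration bounds on the whole sample, and then restricted to the event that the preceding stages succeeded; the rescaling of the sample size constants above absorbs this, so the main obstacle is purely bookkeeping rather than any new probabilistic ingredient.
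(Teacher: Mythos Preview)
Your proposal is correct and follows exactly the approach the paper intends: the corollary is stated immediately after the three theorems with no separate proof beyond the remark ``Combining Theorems~\ref{ThmMoralGraph},~\ref{ThmCausalOrdering}, and~\ref{ThmDirectGraph}, we reach our final main result,'' and your chained union-bound argument is precisely the standard way to make that combination rigorous. The only cosmetic point is that the conditioning subtlety you flag in your last paragraph is more cleanly handled by intersecting unconditional events (define $\mathcal{A}_2'$ as ``Step~2 succeeds when the true $G^m$ is used'' so that $\mathcal{A}_1 \cap \mathcal{A}_2' \subset \mathcal{A}_1 \cap \mathcal{A}_2$, then apply a straight union bound) rather than via conditional probabilities, but this is bookkeeping and your explanation already essentially says this.
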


Concretely, we apply Corollary~\ref{CorMain} to popular examples for our class of QVF DAG models.
As we discussed earlier, Poisson DAG models have $(\beta_{j0}, \beta_{j1}) = (1, 0)$, the steepest log-partition function $A_j(\cdot) = \exp(\cdot)$, and $\kappa_2 = \frac{1}{8}$ if $\|\theta_{M_j}^*\|_1 \leq \frac{ \log n }{ 48 \log(\max\{n,p\}) }$. Then, our generalized ODS algorithm recovers Poisson DAG models with high probability if $n = \Omega(\max\{(\d \log^3 p)^4, \log^{5+d} p\})$ and $a = \frac{3}{8}$. Binomial DAG models have $(\beta_{0j}, \beta_{1j}) = (0, -\frac{1}{N})$ where $N$ is a binomial distribution parameter, the log-partition function $A_j(\cdot) = N \log(1+\exp(\cdot) )$, $\kappa_2 = 0$.  Then, the generalized ODS algorithm recover Binomial DAG models with high probability if $n = \Omega(\max\{(\d \log^3 p)^3, \log^{5+d} p\})$ and $a = \frac{1}{3}$.

\section{Numerical Experiments}

\label{SecNum}

In this section, we support our theoretical guarantees with numerical experiments and show that our generalized ODS algorithm performs favorably compared to state-of-the-art DAG learning algorithms when applied to QVF DAG models. In order to validate Theorems~\ref{ThmMoralGraph},~\ref{ThmCausalOrdering}, and~\ref{ThmDirectGraph}, we conduct a simulation study using $50$ realizations of $p$-node Poisson and Binomial DAG models~\eqref{DAGGLM}. That is, the conditional distribution for each node given its parents is either Poisson and Binomial. For all our simulation results, we generate DAG models (see Figure~\ref{fig:structure}) that ensure a unique causal ordering $\pi^* =(1,2,\cdots,p)$ with edges randomly generated while respecting the desired maximum number of parents constraints for the DAG. In our experiments, we always set the number of parents to two (the number of neighbors of each node is at least three, and therefore $\d \in [3, p-1]$). 

The set of parameters $(\theta_{jk})$ for our GLM DAG models~\eqref{DAGGLM} encodes the DAG structure as follows: if there is no directed edge from node $k$ to $j$, $\theta_{jk} = 0$, otherwise $\theta_{jk} \neq 0$. Non-zero parameters $\theta_{jk}\in E$ were generated uniformly at random in the range $\theta_{jk} \in [-1, -0.5]$ for Poisson DAG models and $\theta_{jk} \in [0.5, 1]$ for Binomial DAG models. In addition, we fixed parameters $N_1, N_2,\cdots,N_p = 4$ for Binomial DAG models. These parameter values were chosen to ensure Assumptions~\ref{A3Con} and~\ref{A4} are satisfied and most importantly, the count values do not blow up. Lastly, we set the thresholding constant for computing the ODS score to $c_0 = 0.005$ although any value below $0.01$ seems to work well in practice. We consider more general parameter choices but for brevity, focus on these parameter settings.

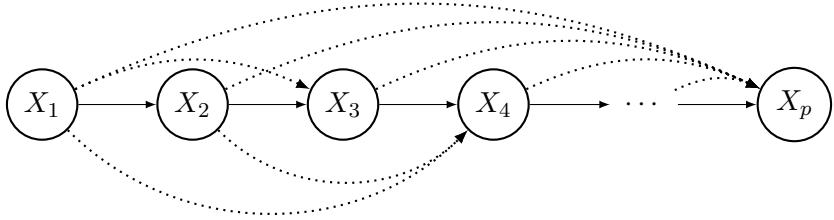
\begin{figure}[t]
	\centering
	\begin {tikzpicture}[ -latex ,auto,
	state/.style={circle, draw=black, fill= white, thick, minimum size= 2mm},
	state2/.style={circle, draw=white, fill= white, thick, minimum size= 5mm},
	label/.style={thick, minimum size= 2mm}
	]
	\node[state] (X1)  at (0,0)   {$X_1$}; 
	\node[state] (X2)  at (2,0)   {$X_2$}; 
	\node[state] (X3)  at (4,0)   {$X_3$}; 
	\node[state] (X4)  at (6,0)   {$X_4$}; 
	\node[state2](X5)  at (8,0)   {$\cdots$}; 
	\node[state] (X6)  at (10,0)   {$X_p$}; 

	\path (X1) edge[] node[] {} (X2);
	\path (X2) edge[] node[] {} (X3);
	\path (X3) edge[] node[] {} (X4);
	\path (X4) edge[] node[] {} (X5);	
	\path (X5) edge[] node[] {} (X6);	
	\path (X1) edge[dotted, thick, bend left = 25] node[] {} (X3);
	\path (X1) edge[dotted, thick, bend right = 45] node[] {} (X4);
	\path (X2) edge[dotted, thick, bend right = 45] node[] {} (X4);
	\path (X1) edge[dotted, thick, bend right = -25] node[] {} (X6);
	\path (X2) edge[dotted, thick, bend right = -25] node[] {} (X6);
	\path (X3) edge[dotted, thick, bend right = -25] node[] {} (X6);
	\path (X4) edge[dotted, thick, bend right = -25] node[] {} (X6);
	\path (X5) edge[dotted, thick, bend right = -25] node[] {} (X6);
\end{tikzpicture}
\caption{Structure of the DAG we used in numerical experiments. Solid directed edges are always present and dotted directed edges are randomly chosen based on the given number of parents of each node constraints}
\label{fig:structure}
\end{figure}

To validate Theorems~\ref{ThmMoralGraph} and~\ref{ThmCausalOrdering}, we plot the proportion (out of $50$) of simulations in which our generalized ODS algorithm recovers the correct causal ordering to validate $\pi^*$ in Fig.~\ref{fig:Num1}. We plot the accuracy rates in recovering the true causal ordering $\mathbf{1}(\hat{\pi} = \pi^*)$ as a function of the sample size ($n \in \{100, 500, 1000, 2500, 5000, 10000\}$) for different node sizes ($p=10$ for (a) and (c), and $p=100$ for (b) and (d)) and different distributions (Poisson for (a) and (b) and Binomial for (c) and (d)). In each sub-figure, two different choices for off-the-shelf algorithms for Step 1) are used; (i) $\ell_1$ penalized likelihood regression~\cite{Friedman2009} where we chose the regularization parameter $\lambda = \frac{0.75}{ \log( \max\{n,p\} ) }$ for Poisson DAG models and $\lambda = \frac{.10}{ \log( \max \{n,p\} ) }$ for Binomial DAG models; and (ii) the GES algorithm~\cite{Chickering2003} is applied for Step 1) where we used the mBDe~\cite{Heckerman1995} (modified Bayesian Dirichlet equivalent) score and then the moralized graph is generated by moralizing the estimated DAG. 

Figure~\ref{fig:Num1} shows that our generalized ODS algorithm recovers the true causal ordering $\pi^*$ well if the sample size is large, which supports our theoretical results. In addition, we can see that the $\ell_1$-penalized based generalized ODS algorithm seems to perform substantially better than the GES-based ODS algorithm. Furthermore, since $\ell_1$-penalized likelihood regression is the only algorithm that scales to the high-dimensional setting ($p \geq 1000$), we used $\ell_1$-penalized likelihood regression in Steps 1) and 3) of the generalized ODS algorithm for large-scale DAG models.

\begin{figure}[t]
	\centering  \hspace{-8mm}
	\begin{subfigure}[!htb]{.23\textwidth}
		\includegraphics[width=\textwidth,height= 30mm]{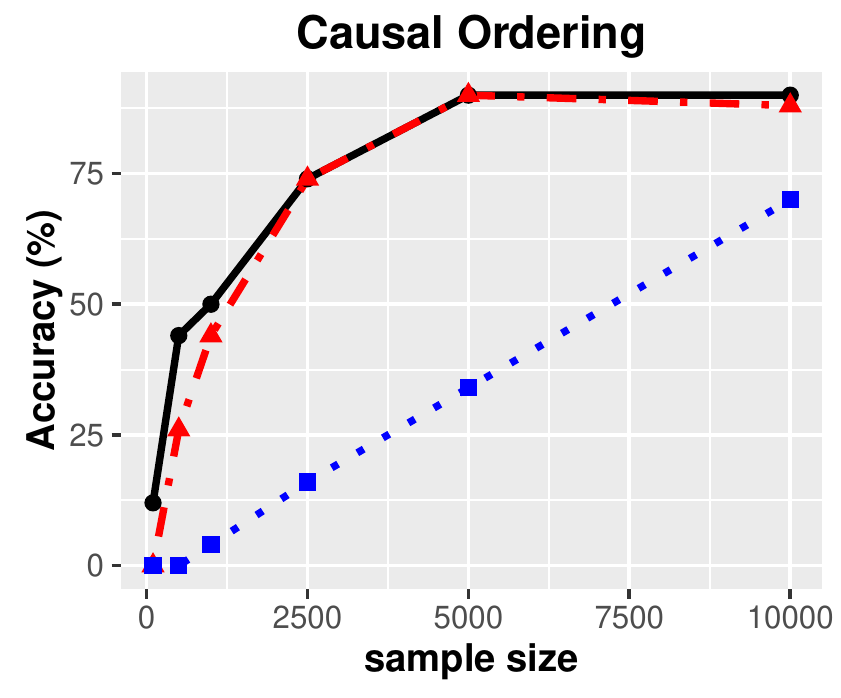}
		\caption{Poisson: $p=10$}
	\end{subfigure}
	\hspace{-2mm}
	\begin{subfigure}[!htb]{.23\textwidth}
		\includegraphics[width=\textwidth,height= 30mm]{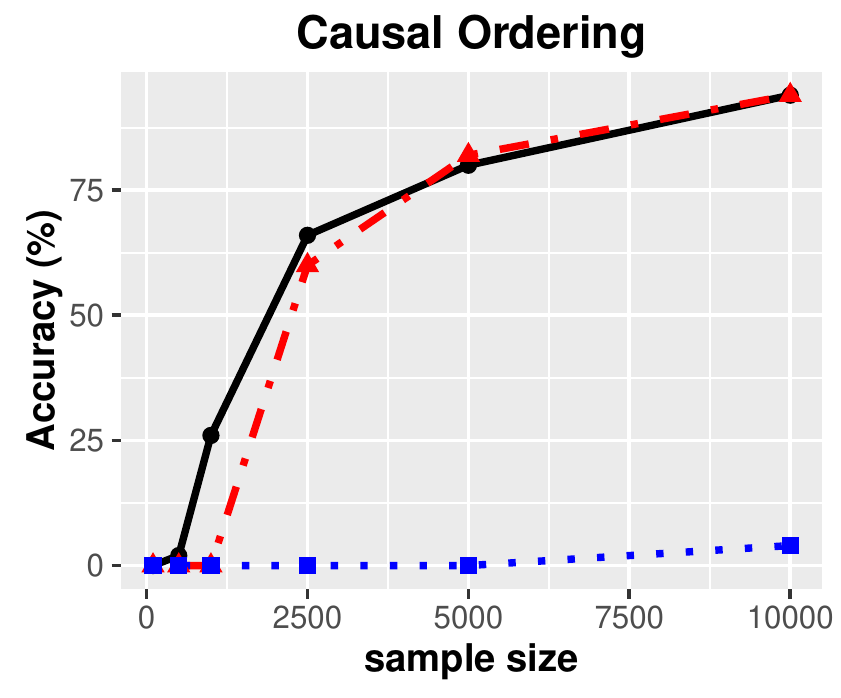}
		\caption{Poisson:  $p=100$}
	\end{subfigure} 
	\hspace{-2mm}
	\begin{subfigure}[!htb]{.23\textwidth}
		\includegraphics[width=\textwidth,height= 30mm]{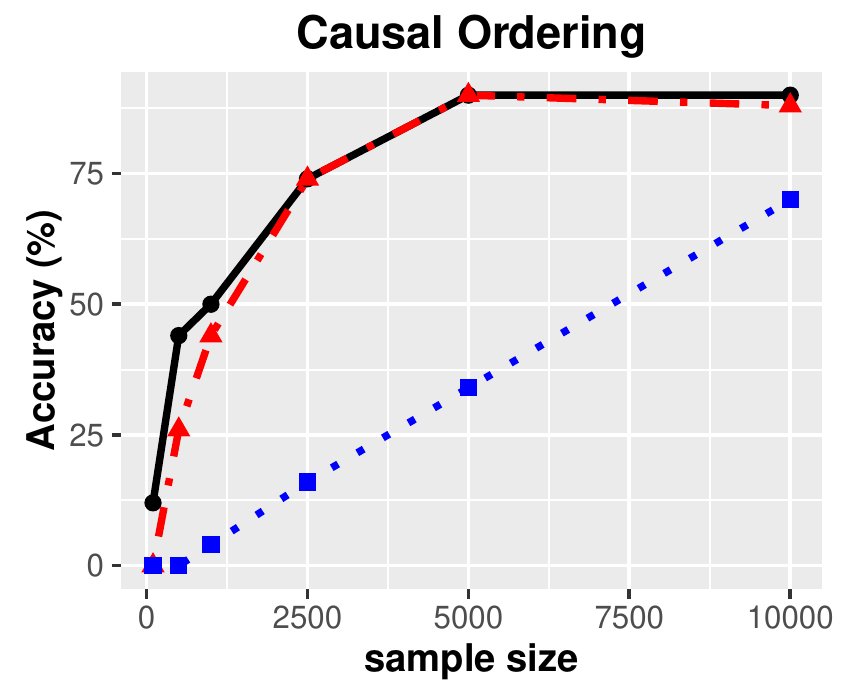}
		\caption{Binomial: $p=10$}
	\end{subfigure} \hspace{-2mm}
	\begin{subfigure}[!htb]{.23\textwidth}
		\includegraphics[width=\textwidth,height= 30mm]{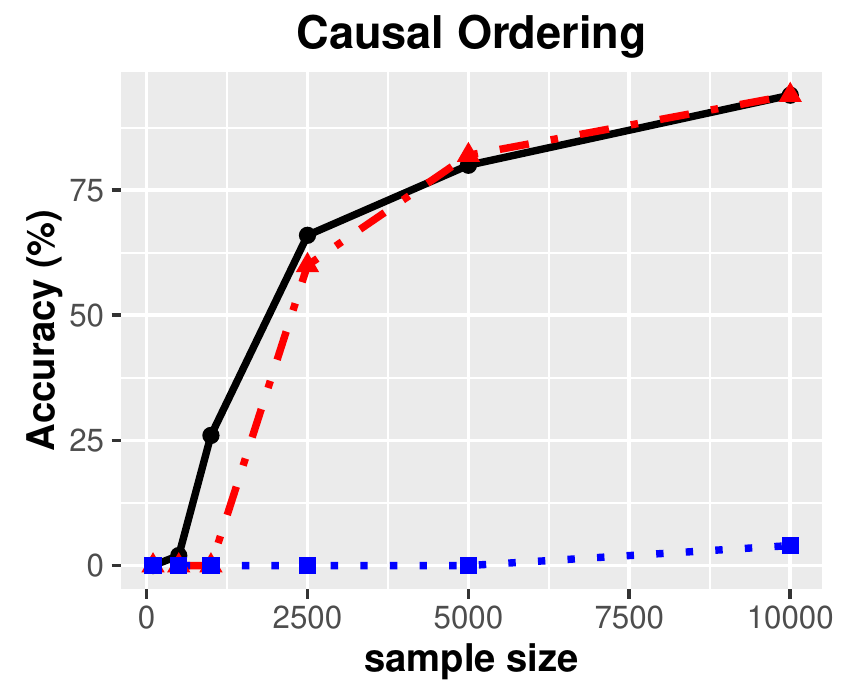}
		\caption{Binomial: $p=100$}
	\end{subfigure} \hspace{-2mm}
	\begin{subfigure}[!htb]{.08\textwidth}
		\includegraphics[width = 1.0\textwidth, height = 30mm, trim = 32mm -27mm 5mm 5mm, clip]{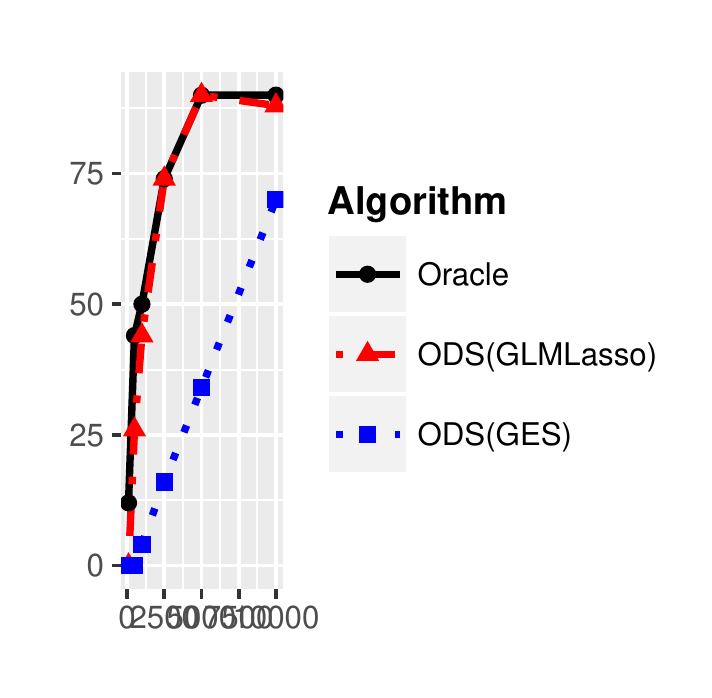}
	\end{subfigure}
	\caption{Probability of recovering the causal ordering of a DAG via our generalized ODS algorithm using two different algorithms ($\ell_1$-penalized likelihood regression and GES algorithm) in Step 1)}
	\label{fig:Num1}
\end{figure}

Figure~\ref{fig:Num2} provides a comparison of how accurately our generalized ODS algorithm performs in terms of recovering the full DAG model. We use two comparison metrics related to how many edges and directions are incorrect. First, we measured the Hamming distance between the skeleton (edges without directions) of the true DAG and the estimated DAG in (a), (c), (e) and (g). In addition, we measured the Hamming distance between the estimated and true DAG models (with directions) in (b), (d), (f), and (h).  We normalized the Hamming distances by dividing by the maximum number of errors $\binom{p}{2}$ for the skeleton and $p(p-1)$ for the full DAG respectively meaning the maximum normalized distance is $1$. We compare to two state-of-the-art directed graphical model learning algorithms, the MMHC and GES algorithms for both Poisson and Binomial DAG models. Similar to learning the causal ordering, we used two generalized ODS algorithms exploiting $\ell_1$-penalization in both Steps 1) and 3) and the GES algorithm in both Steps 1) and 3). We considered small-scale DAG models with $p = 10$ in (a), (b), (e) and (f), and $p = 100$ in (c), (d), (g) and (h). 

As we see in Figure~\ref{fig:Num2}, the ODS algorithms significantly out-perform state-of-the-art MMHC and GES algorithms in terms of directed edges and skeleton. For small sample sizes, the generalized ODS algorithms have poor performance because they fail to recover the causal ordering, however we can see that the GES-based generalized ODS algorithm always performs better than the GES algorithm. This is because the generalized ODS algorithm adds directional information to the estimated skeleton via the GES algorithm, and hence the GES-based generalized ODS algorithm cannot be worse than the GES algorithm in terms of recovering both directed edges and skeleton. Furthermore Figure~\ref{fig:Num2} shows that as sample size increases, our generalized ODS algorithms recovers the true directed edges and the skeleton for the DAG more accurately than state-of-the-art methods, which is consistent with our theoretical results. 

\begin{figure}[!t]
	\centering \hspace{-8mm}
	\begin{subfigure}[!htb]{.23\textwidth} 
		\includegraphics[width=\textwidth,height= 32mm]{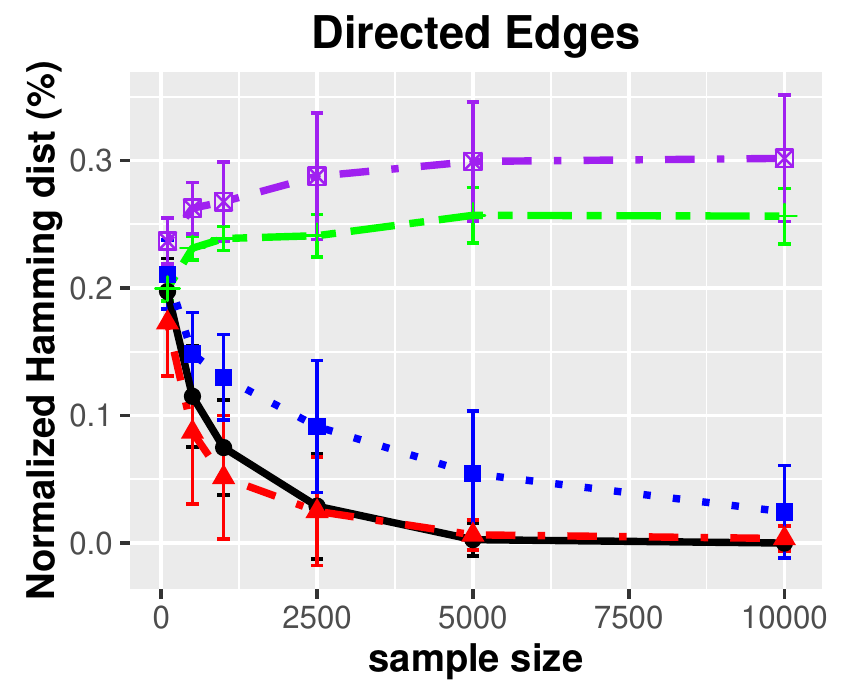}
		\caption{Poisson: $p=10$  }
	\end{subfigure} \hspace{-2mm}
	\begin{subfigure}[!htb]{.23\textwidth}
		\includegraphics[width=\textwidth,height= 32mm]{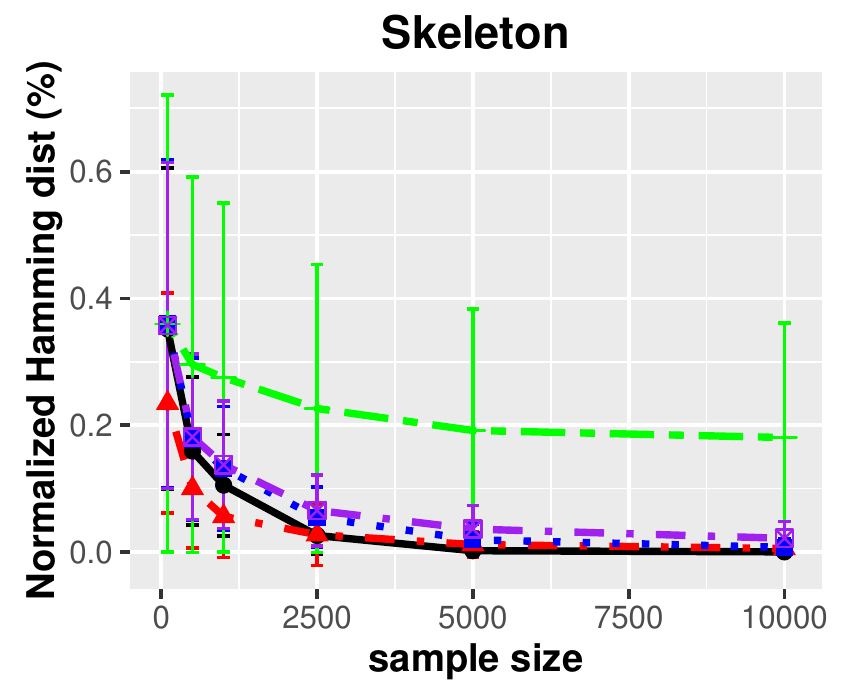}
		\caption{Poisson: $p=10$}
	\end{subfigure} \hspace{-2mm}
	\begin{subfigure}[!htb]{.23\textwidth}
		\includegraphics[width=\textwidth,height= 32mm]{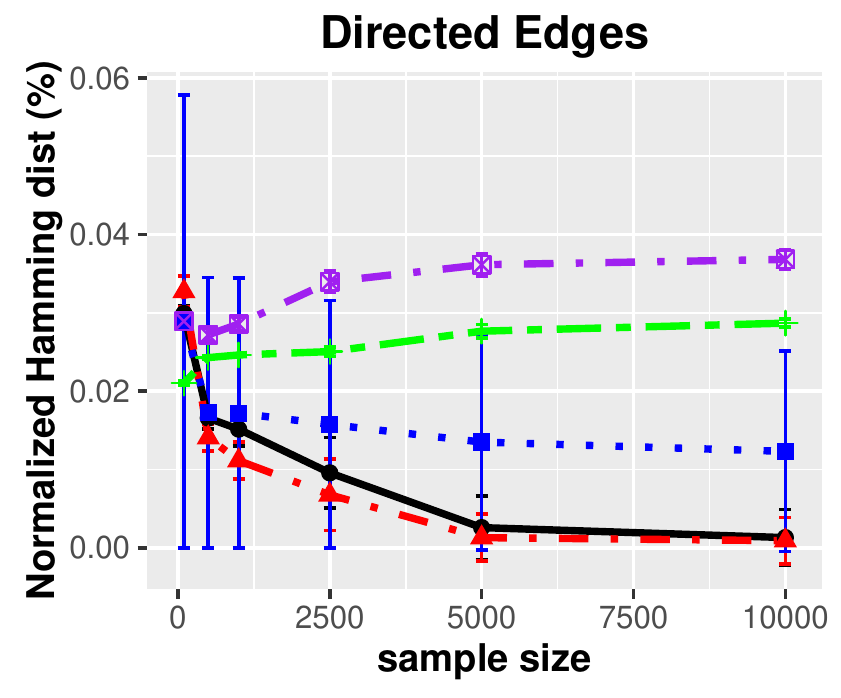}
		\caption{Poisson: $p=100$}
	\end{subfigure} \hspace{-2mm}
	\begin{subfigure}[!htb]{.23\textwidth}
		\includegraphics[width=\textwidth,height= 32mm]{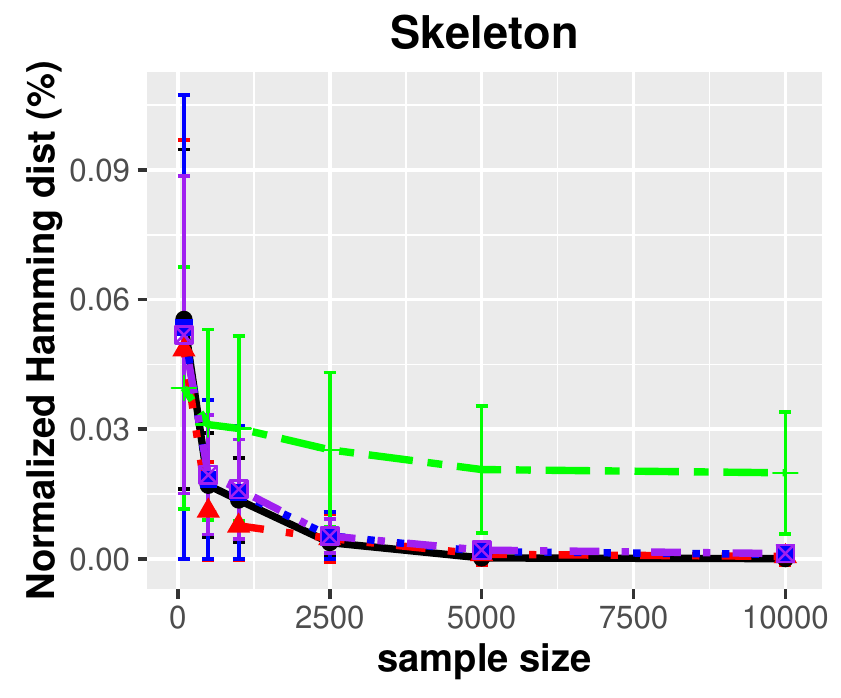}
		\caption{Poisson: $p=100$}
	\end{subfigure} \hspace{-2mm}
	\begin{subfigure}[!htb]{.08\textwidth}
		\includegraphics[width = 1.0\textwidth, height = 32mm, trim = 32mm -27mm 5mm 5mm, clip]{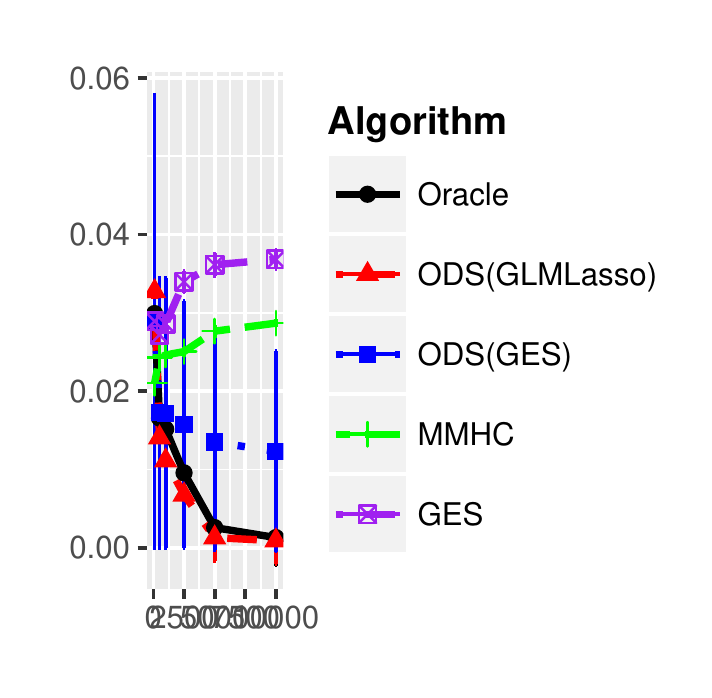}
	\end{subfigure}  
	
	\hspace{-8mm}
	\begin{subfigure}[!htb]{.23\textwidth} 
		\includegraphics[width=\textwidth,height= 32mm]{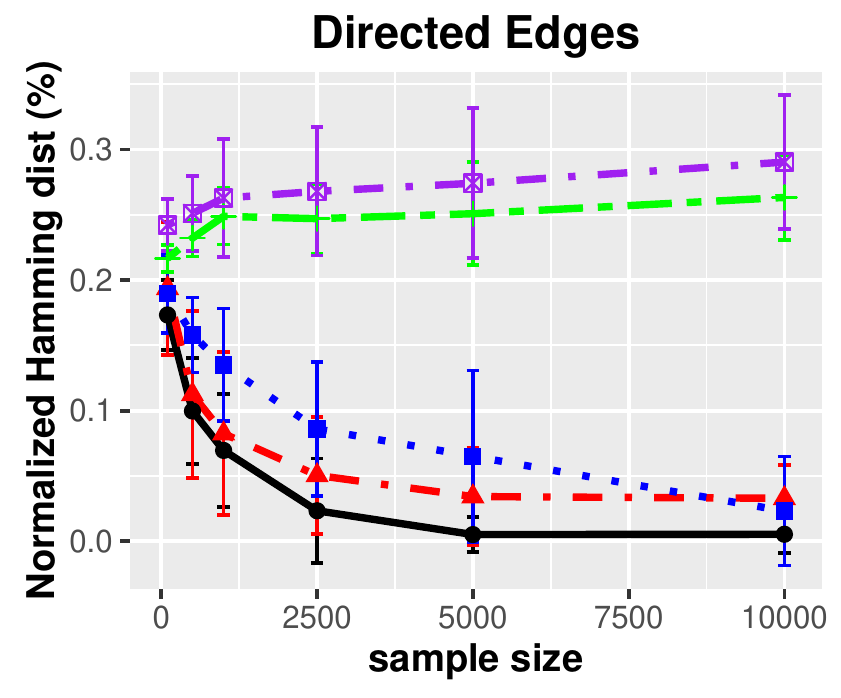}
		\caption{Binomial: $p=10$  }
	\end{subfigure} \hspace{-2mm}
	\begin{subfigure}[!htb]{.23\textwidth}
		\includegraphics[width=\textwidth,height= 32mm]{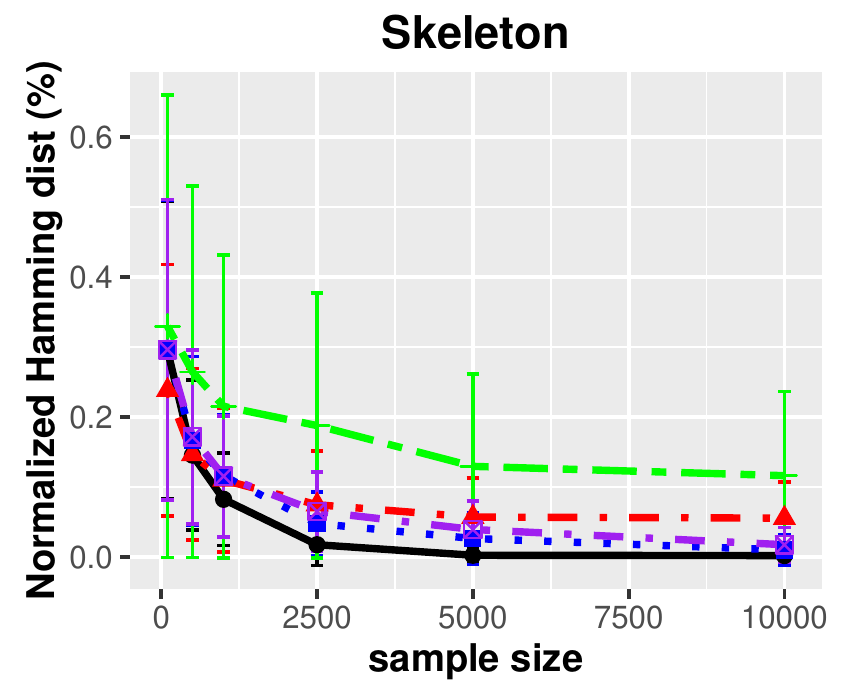}
		\caption{Binomial: $p=10$}
	\end{subfigure} \hspace{-2mm}
	\begin{subfigure}[!htb]{.23\textwidth}
		\includegraphics[width=\textwidth,height= 32mm]{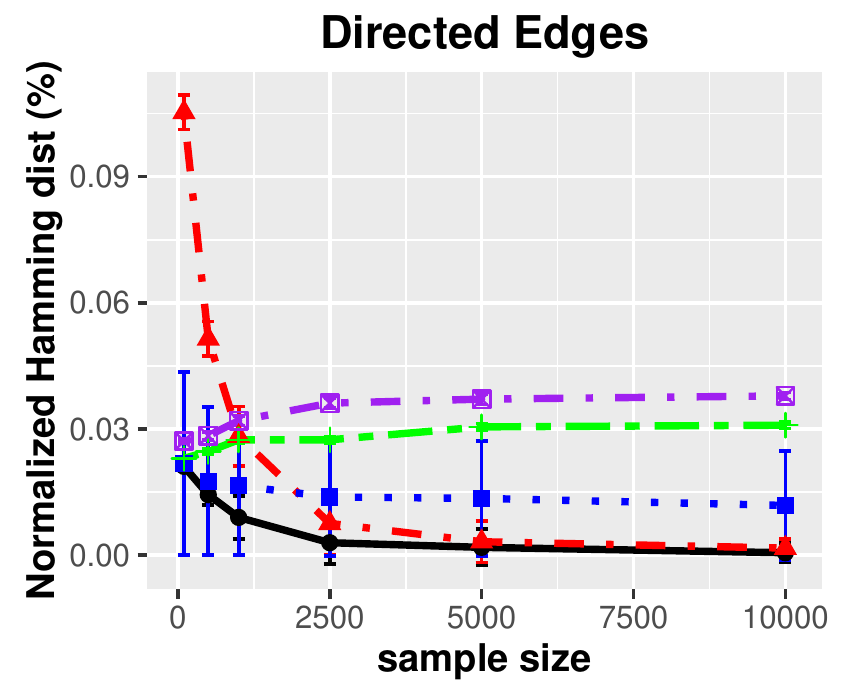}
		\caption{Binomial: $p=100$}
	\end{subfigure} \hspace{-2mm}
	\begin{subfigure}[!htb]{.23\textwidth}
		\includegraphics[width=\textwidth,height= 32mm]{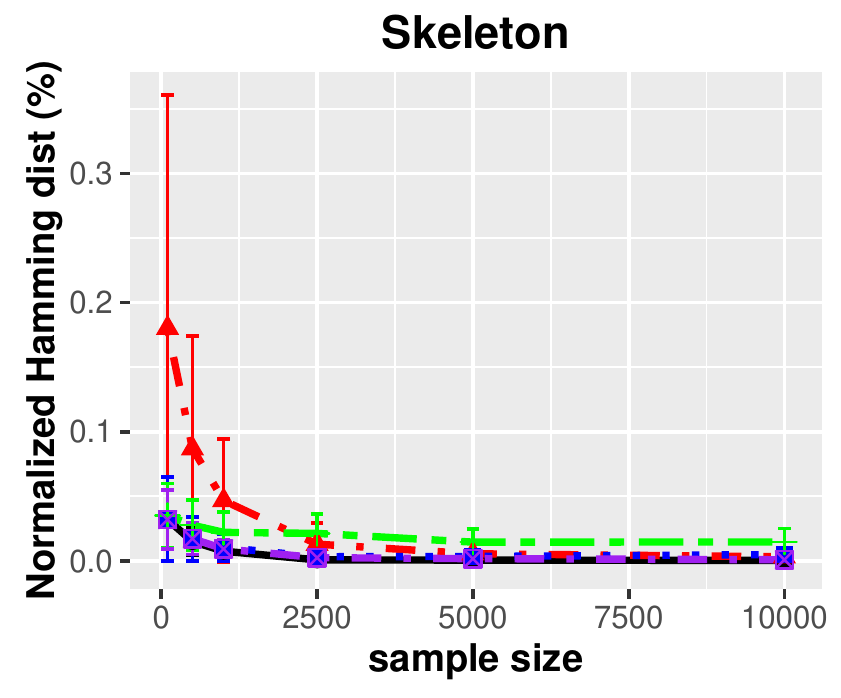}
		\caption{Binomial: $p=100$}
	\end{subfigure} \hspace{-2mm}
	\begin{subfigure}[!htb]{.08\textwidth}
		\includegraphics[width = 1.0\textwidth, height = 32mm, trim = 32mm -27mm 5mm 5mm, clip]{Rplot22}
	\end{subfigure} 
	\caption{Comparison of the generalized ODS algorithms using $\ell_1$-penalized likelihood regression (in Steps 1) and 3)) and the GES algorithm (in Steps 1) and 3)) to two state-of-the-art DAG learning algorithms (the MMHC and the GES algorithms) in terms of Hamming distance to skeletons and directed edges of Poisson and Binomial DAG models.} 
	\label{fig:Num2} \vspace{-2mm} 
\end{figure}

Next we consider the performance for large-scale DAG models to show that the ODS algorithm works in the high-dimensional setting. In all experiments, we used the $\ell_1$-penalized likelihood regression for GLMs in Steps 1) and 3) for the generalized ODS algorithm since it is the only graph-learning algorithm that scales. Figure~\ref{fig:a5} plots the statistical performance of the generalized ODS algorithm for large-scale Poisson DAGs in (a), (b), and (c) and Binomial DAGs in (d), (e), and (f). Furthermore, (a) and (d) represent the accuracy rates of the recovering the causal ordering, (b) and (e) show the normalized Hamming distance to the true skeleton, and (c) and (f) show the normalized Hamming distance for the true edge set of the DAG. Accuracies vary as a function of sample size ($n \in \{500, 1000, 2500, 5000, 10000\}$) for each node size ($p = \{1000, 2500, 5000\}$). Similar to small-scale DAG models, Figure~\ref{fig:a5} shows that the generalized ODS algorithm recovers the causal ordering and the skeleton of the DAG in the high-dimensional settings.

\begin{figure}[!t]
	\centering 
	\begin{subfigure}[!htb]{.30\textwidth} \hspace{-5mm}
		\includegraphics[width=\textwidth,height= 35mm]{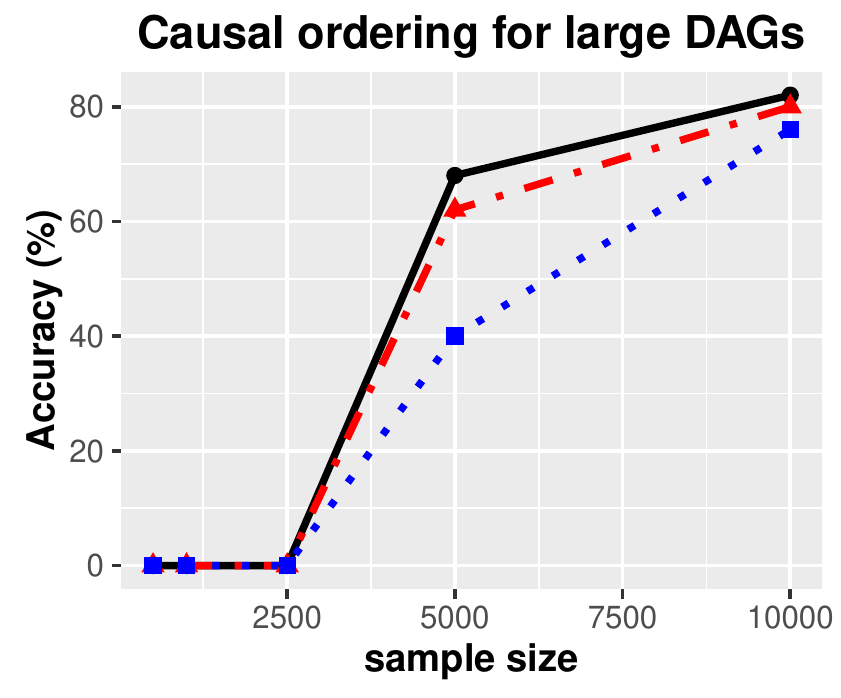}
		\caption{Poisson}
	\end{subfigure}
	\begin{subfigure}[!htb]{.30\textwidth}
		\includegraphics[width=\textwidth,height= 35mm]{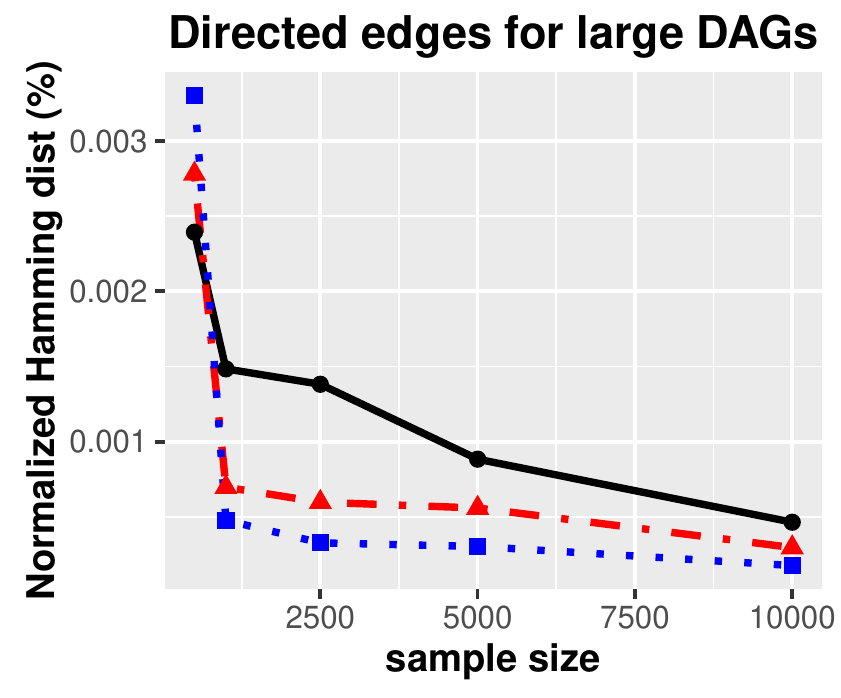}
		\caption{Poisson}
	\end{subfigure} \hspace{-2mm}
	\begin{subfigure}[!htb]{.30\textwidth}
		\includegraphics[width=\textwidth,height= 35mm]{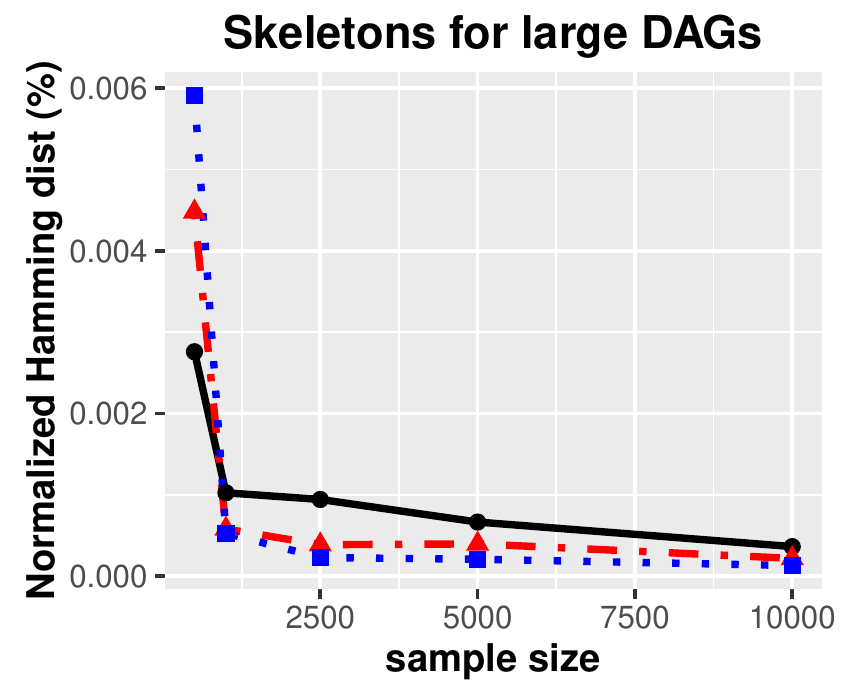}
		\caption{Poisson}
	\end{subfigure} \hspace{-2mm}
	\begin{subfigure}[!htb]{.08\textwidth}
		\includegraphics[width = 1.0\textwidth, height = 28mm]{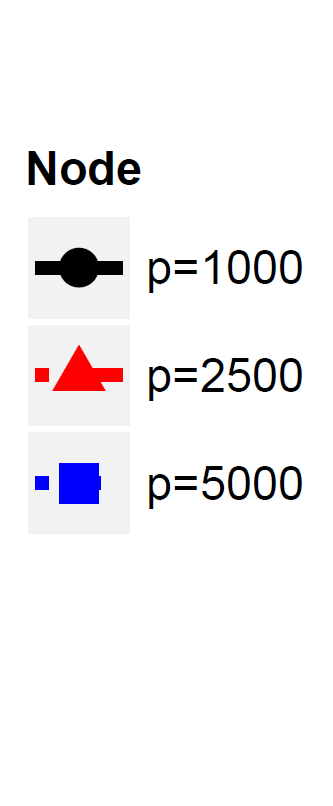}
	\end{subfigure} \\
	\begin{subfigure}[!htb]{.30\textwidth}  \hspace{-5mm}
		\includegraphics[width=\textwidth,height= 35mm]{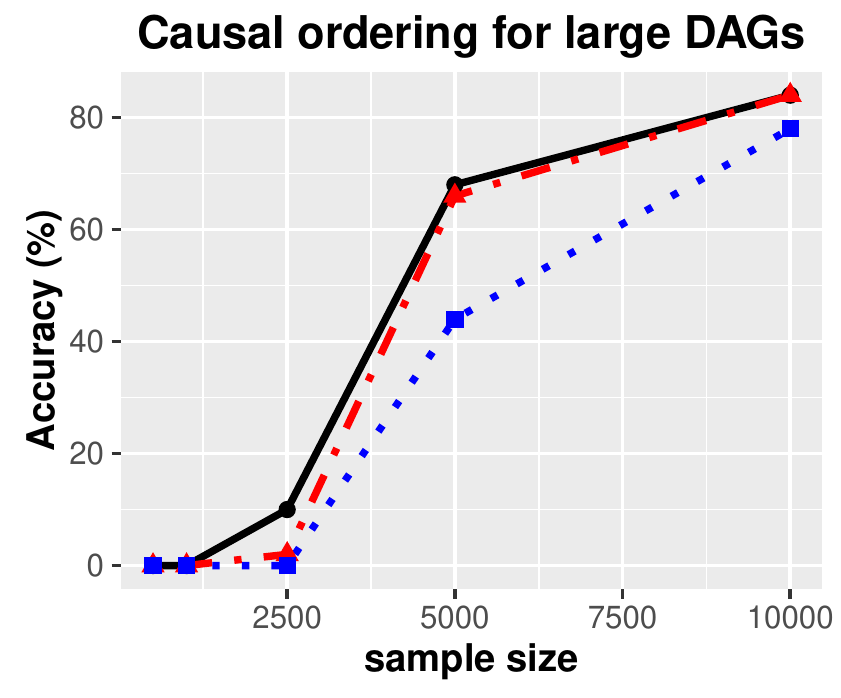}
		\caption{Binomial}
	\end{subfigure}
	\begin{subfigure}[!htb]{.30\textwidth}  \hspace{-2mm}
		\includegraphics[width=\textwidth,height= 35mm]{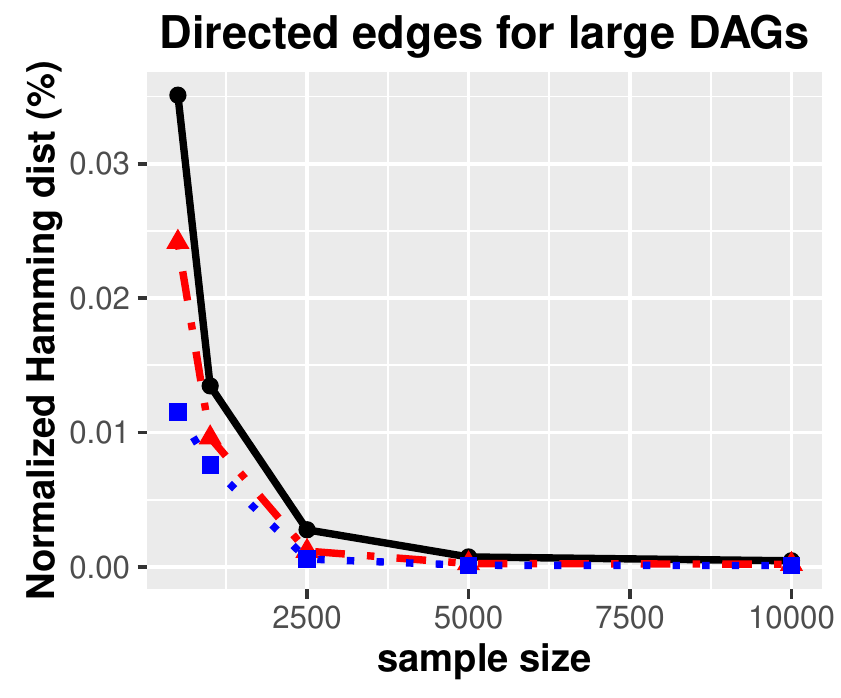}
		\caption{Binomial}
	\end{subfigure} \hspace{-2mm}
	\begin{subfigure}[!htb]{.30\textwidth}
		\includegraphics[width=\textwidth,height= 35mm]{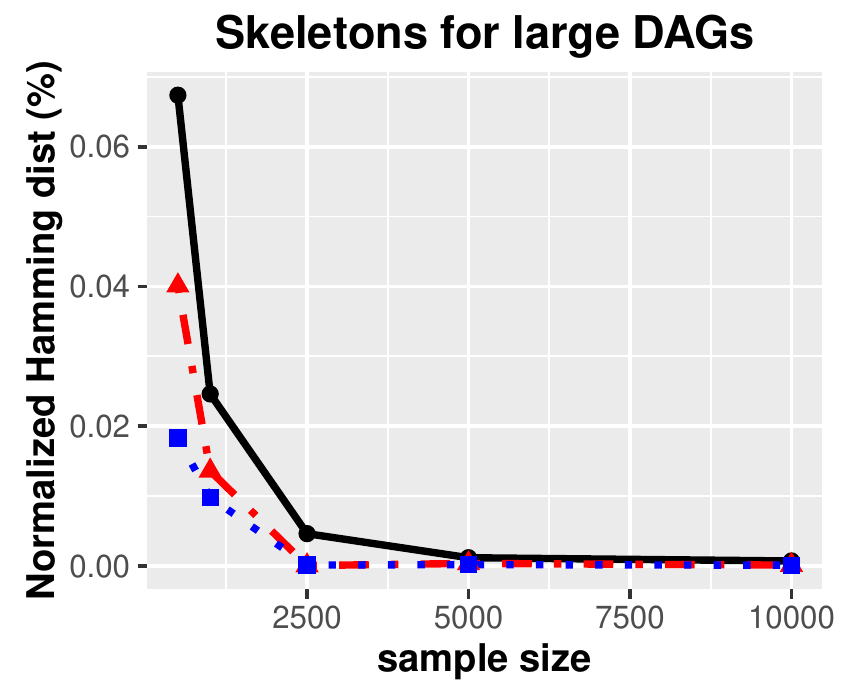}
		\caption{Binomial}
	\end{subfigure} \hspace{-2mm}
	\begin{subfigure}[!htb]{.08\textwidth}
		\includegraphics[width = 1.0\textwidth, height = 28mm]{Legend2.png}
	\end{subfigure} \hspace{-2mm}
	\caption{Performance of the generalized ODS algorithm using $\ell_1$-penalized likelihood regression in both Steps 1) and 3) for large-scale DAG models with the node size $p = \{1000, 2500, 5000\}$}
	\label{fig:a5} \vspace{-2mm} 
\end{figure}

In Figure~\ref{fig:a6}, we compared the run-time of the generalized ODS algorithms using $\ell_1$-penalized likelihood regression for GLMs in Steps 1) and 3) to the run-time of the MMHC and the GES algorithms. We measured the run-time for Poisson DAG models by varying (a) node size $p \in \{10, 20, 40, 60, 80, 100\}$ with fixed sample size $n = 10000$ and exactly two parents of each node, (b) sample size $n \in \{100, 500, 1000, 2500, 5000, 10000\}$ with the fixed node size $p= 100$ and two parents of each node, and (c) the number of parents of each node $ \in \{1,2,3,4,5,6\}$ with the fixed sample size $n = 10000$ and node size $p = 20$. The results of (a) and (b) show that the generalized ODS algorithm is not always slower than the GES algorithm. In addition, (c) also shows that the run-time of the generalized ODS algorithm depends significantly on the number of parents for each node. Figure~\ref{fig:a6} shows that the generalized ODS algorithm is significantly slower than the MMHC algorithm, however this is because the MMHC algorithm often stops earlier before they reach the true DAG (see Figure~\ref{fig:Num2}).

\begin{figure}[!t]
	\centering \hspace{-5mm}
	\begin{subfigure}[!htb]{.28\textwidth}
		\includegraphics[width=\textwidth,height= 35mm]{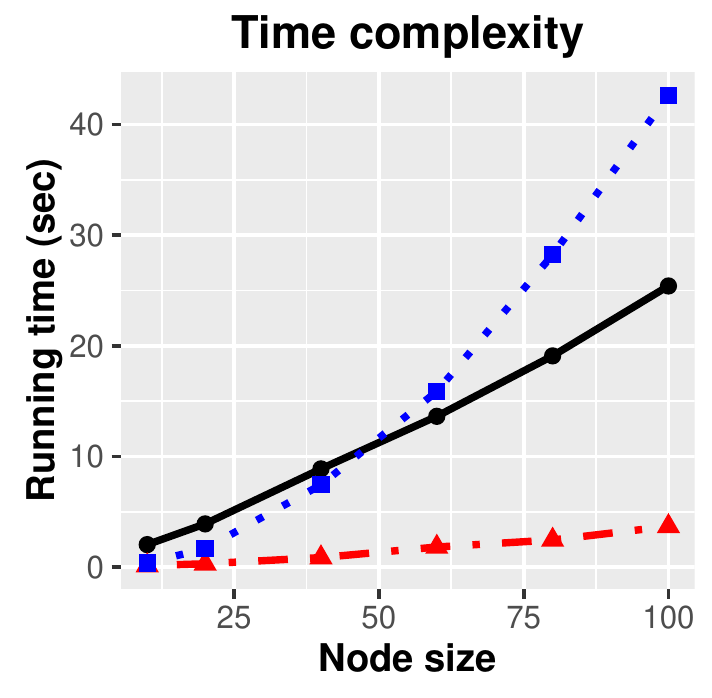}
		\caption{Poisson: $n = 10000, \d \geq 3$}
	\end{subfigure} \hspace{-2mm}
	\begin{subfigure}[!htb]{.28\textwidth}
		\includegraphics[width=\textwidth,height= 35mm]{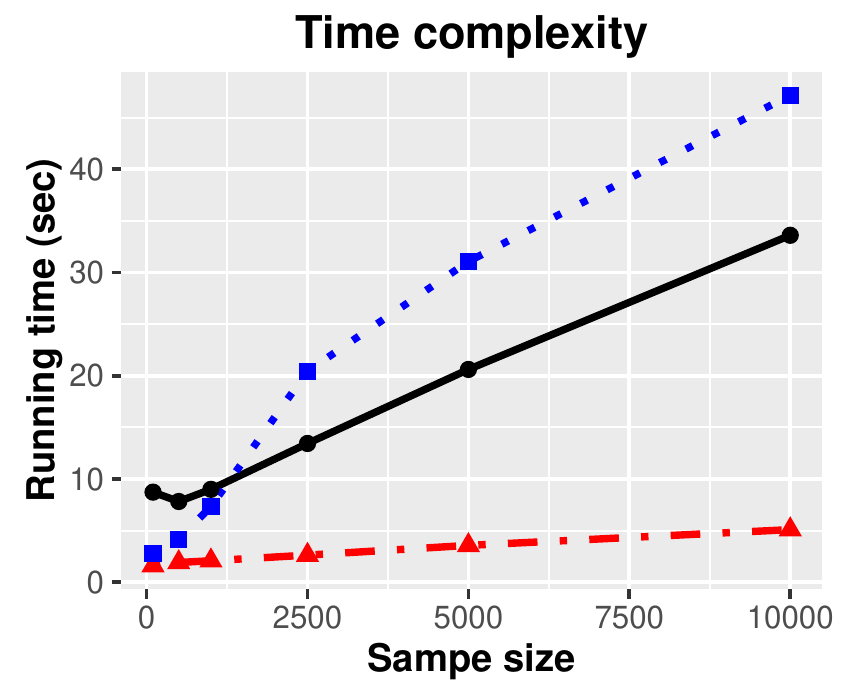}
		\caption{Poisson: $p = 100, \d \geq 3$}
	\end{subfigure} \hspace{-2mm}
	\begin{subfigure}[!htb]{.28\textwidth}
		\includegraphics[width=\textwidth,height= 35mm]{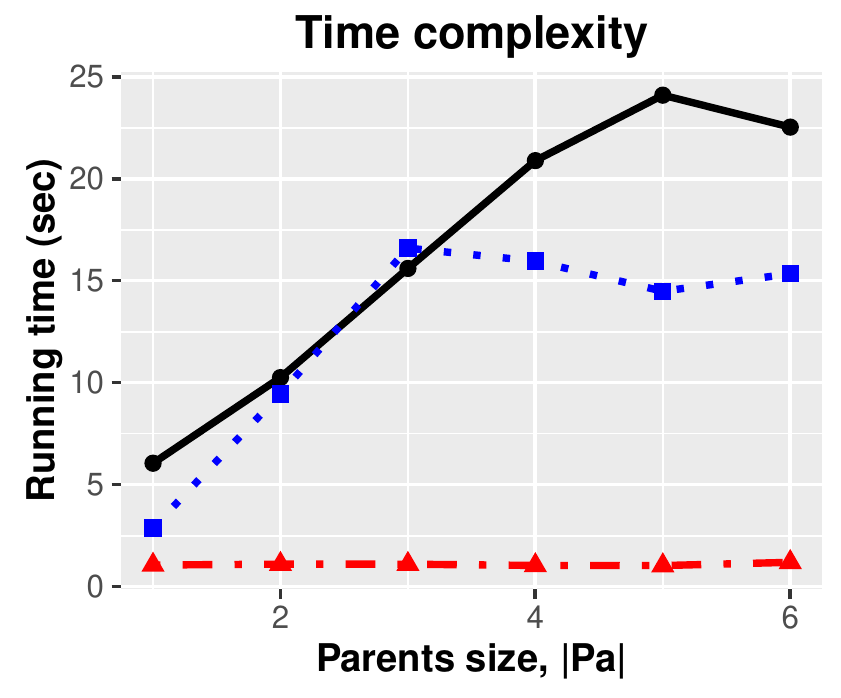}
		\caption{Poisson: $n = 10000, p = 20$}
	\end{subfigure} \hspace{-2mm}
	\begin{subfigure}[!htb]{.12\textwidth}
		\includegraphics[width = 1.0\textwidth, height = 42mm, trim = 32mm -15mm 5mm 5mm, clip]{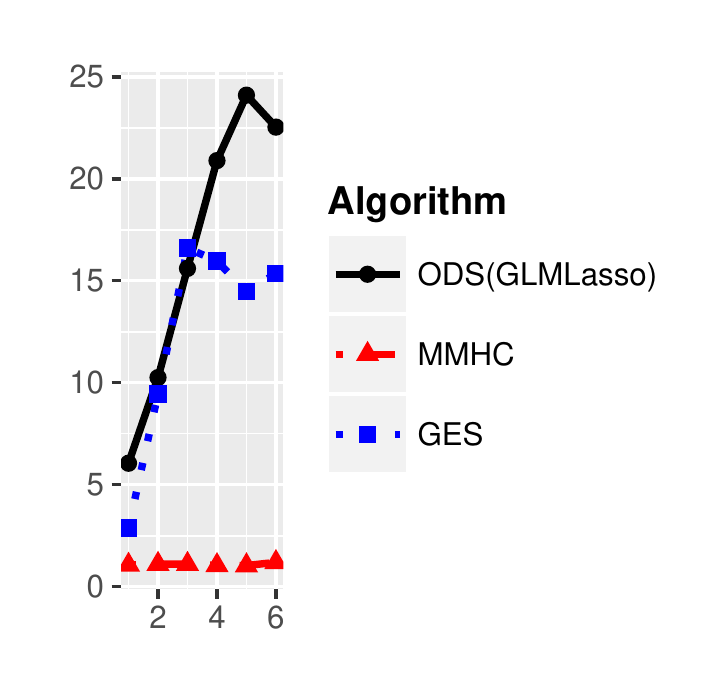}
	\end{subfigure}
	\caption{Comparison of the generalized ODS algorithms using $\ell_1$-penalized likelihood regression in Steps 1) and 3) to two standard DAG learning algorithms (the MMHC and the GES algorithms) in terms of running time with respect to (a) node size $p$, (b) sample size $n$, and (c) number of parents of each node}
	\label{fig:a6}
\end{figure}

\section*{Acknowledgement}
GP and GR were both supported by NSF DMS-1407028 over the duration of this project.

\clearpage

\bibliographystyle{IEEEtran}

\bibliography{reference}

\begin{thebibliography}{10}
\providecommand{\url}[1]{#1}
\csname url@samestyle\endcsname
\providecommand{\newblock}{\relax}
\providecommand{\bibinfo}[2]{#2}
\providecommand{\BIBentrySTDinterwordspacing}{\spaceskip=0pt\relax}
\providecommand{\BIBentryALTinterwordstretchfactor}{4}
\providecommand{\BIBentryALTinterwordspacing}{\spaceskip=\fontdimen2\font plus
\BIBentryALTinterwordstretchfactor\fontdimen3\font minus
  \fontdimen4\font\relax}
\providecommand{\BIBforeignlanguage}[2]{{%
\expandafter\ifx\csname l@#1\endcsname\relax
\typeout{** WARNING: IEEEtran.bst: No hyphenation pattern has been}%
\typeout{** loaded for the language `#1'. Using the pattern for}%
\typeout{** the default language instead.}%
\else
\language=\csname l@#1\endcsname
\fi
#2}}
\providecommand{\BIBdecl}{\relax}
\BIBdecl

\bibitem{doya2007bayesian}
K.~Doya, \emph{Bayesian brain: Probabilistic approaches to neural
  coding}.\hskip 1em plus 0.5em minus 0.4em\relax MIT press, 2007.

\bibitem{friedman2000using}
N.~Friedman, M.~Linial, I.~Nachman, and D.~Pe'er, ``Using bayesian networks to
  analyze expression data,'' \emph{Journal of computational biology}, vol.~7,
  no. 3-4, pp. 601--620, 2000.

\bibitem{kephart1991directed}
J.~O. Kephart and S.~R. White, ``Directed-graph epidemiological models of
  computer viruses,'' in \emph{Research in Security and Privacy, 1991.
  Proceedings., 1991 IEEE Computer Society Symposium on}.\hskip 1em plus 0.5em
  minus 0.4em\relax IEEE, 1991, pp. 343--359.

\bibitem{Chickering1996}
D.~M. Chickering, ``Learning {B}ayesian networks is {NP}-complete,'' in
  \emph{Learning from data}.\hskip 1em plus 0.5em minus 0.4em\relax Springer,
  1996, pp. 121--130.

\bibitem{Dean1992}
C.~B. Dean, ``Testing for overdispersion in {P}oisson and binomial regression
  models,'' \emph{Journal of the American Statistical Association}, vol.~87,
  no. 418, pp. 451--457, 1992.

\bibitem{Zheng2006}
T.~Zheng, M.~J. Salganik, and A.~Gelman, ``How many people do you know in
  prison? {U}sing overdispersion in count data to estimate social structure in
  networks,'' \emph{Journal of the American Statistical Association}, vol. 101,
  no. 474, pp. 409--423, 2006.

\bibitem{park2015learning}
G.~Park and G.~Raskutti, ``Learning large-scale poisson dag models based on
  overdispersion scoring,'' in \emph{Advances in Neural Information Processing
  Systems}, 2015, pp. 631--639.

\bibitem{Friedman2009}
J.~Friedman, T.~Hastie, and R.~Tibshirani, ``glmnet: {L}asso and elastic-net
  regularized generalized linear models,'' \emph{R package version}, vol.~1,
  2009.

\bibitem{Yang2012}
E.~Yang, G.~Allen, Z.~Liu, and P.~K. Ravikumar, ``Graphical models via
  generalized linear models,'' in \emph{Advances in Neural Information
  Processing Systems}, 2012, pp. 1358--1366.

\bibitem{Shimizu2006}
S.~Shimizu, P.~O. Hoyer, A.~Hyv{\"a}rinen, and A.~Kerminen, ``A linear
  non-{G}aussian acyclic model for causal discovery,'' \emph{The Journal of
  Machine Learning Research}, vol.~7, pp. 2003--2030, 2006.

\bibitem{Peters2012}
J.~Peters, J.~Mooij, D.~Janzing \emph{et~al.}, ``Identifiability of causal
  graphs using functional models,'' \emph{arXiv preprint arXiv:1202.3757},
  2012.

\bibitem{Peters2013}
J.~Peters and P.~B{\"u}hlmann, ``Identifiability of {G}aussian structural
  equation models with equal error variances,'' \emph{Biometrika}, p. ast043,
  2013.

\bibitem{JanzingScholkopf}
D.~Janzing and B.~Scholkopf, ``Causal inference using the algorithmic markov
  condition,'' \emph{IEEE Trans. on Infor. Theory}, vol.~56, no.~10, pp.
  5168--5194, 2010.

\bibitem{loh2014high}
P.-L. Loh and P.~B{\"u}hlmann, ``High-dimensional learning of linear causal
  networks via inverse covariance estimation,'' \emph{The Journal of Machine
  Learning Research}, vol.~15, no.~1, pp. 3065--3105, 2014.

\bibitem{Raskutti2013}
G.~Raskutti and C.~Uhler, ``Learning directed acyclic graphs based on sparsest
  permutations,'' \emph{arXiv preprint arXiv:1307.0366}, 2013.

\bibitem{geerpb12}
S.~van~de Geer and P.~B\"uhlmann, ``Penalized maximum likelihood estimation for
  sparse directed acyclic graphs,'' \emph{Annals of Statistics}, vol.~41, pp.
  536--567, 2013.

\bibitem{Tsamardinos2003}
I.~Tsamardinos and C.~F. Aliferis, ``Towards principled feature selection:
  Relevancy, filters and wrappers,'' in \emph{Proceedings of the ninth
  international workshop on Artificial Intelligence and Statistics}.\hskip 1em
  plus 0.5em minus 0.4em\relax Morgan Kaufmann Publishers: Key West, FL, USA,
  2003.

\bibitem{Lauritzen1996}
S.~L. Lauritzen, \emph{Graphical models}.\hskip 1em plus 0.5em minus
  0.4em\relax Oxford University Press, 1996.

\bibitem{Heckerman1995}
D.~Heckerman, D.~Geiger, and D.~M. Chickering, ``Learning {B}ayesian networks:
  The combination of knowledge and statistical data,'' \emph{Machine learning},
  vol.~20, no.~3, pp. 197--243, 1995.

\bibitem{morris1982natural}
C.~N. Morris, ``Natural exponential families with quadratic variance
  functions,'' \emph{The Annals of Statistics}, pp. 65--80, 1982.

\bibitem{Cowell1999}
R.~G. Cowell, P.~A. Dawid, S.~L. Lauritzen, and D.~J. Spiegelhalter,
  \emph{Probabilistic Networks and Expert Systems}.\hskip 1em plus 0.5em minus
  0.4em\relax Springer-Verlag, 1999.

\bibitem{Tsamardinos2006}
I.~Tsamardinos, L.~E. Brown, and C.~F. Aliferis, ``The max-min hill-climbing
  {B}ayesian network structure learning algorithm,'' \emph{Machine learning},
  vol.~65, no.~1, pp. 31--78, 2006.

\bibitem{Aliferis2003}
C.~F. Aliferis, I.~Tsamardinos, and A.~Statnikov, ``{HITON}: a novel {M}arkov
  {B}lanket algorithm for optimal variable selection,'' in \emph{AMIA Annual
  Symposium Proceedings}, vol. 2003.\hskip 1em plus 0.5em minus 0.4em\relax
  American Medical Informatics Association, 2003, p.~21.

\bibitem{Friedman1999}
N.~Friedman, I.~Nachman, and D.~Pe{\'e}r, ``Learning bayesian network structure
  from massive datasets: the sparse candidate algorithm,'' in \emph{Proceedings
  of the Fifteenth conference on Uncertainty in artificial intelligence}.\hskip
  1em plus 0.5em minus 0.4em\relax Morgan Kaufmann Publishers Inc., 1999, pp.
  206--215.

\bibitem{spirtes2000causation}
P.~Spirtes, C.~N. Glymour, and R.~Scheines, \emph{Causation, prediction, and
  search}.\hskip 1em plus 0.5em minus 0.4em\relax MIT press, 2000.

\bibitem{Chickering2003}
D.~M. Chickering, ``Optimal structure identification with greedy search,''
  \emph{The Journal of Machine Learning Research}, vol.~3, pp. 507--554, 2003.

\bibitem{Ravikumar2010}
P.~Ravikumar, M.~J. Wainwright, J.~D. Lafferty \emph{et~al.},
  ``High-dimensional ising model selection using ℓ1-regularized logistic
  regression,'' \emph{The Annals of Statistics}, vol.~38, no.~3, pp.
  1287--1319, 2010.

\bibitem{meinshausen2006high}
N.~Meinshausen and P.~B{\"u}hlmann, ``High-dimensional graphs and variable
  selection with the lasso,'' \emph{The Annals of Statistics}, pp. 1436--1462,
  2006.

\bibitem{Wainwright2006}
M.~J. Wainwright, J.~D. Lafferty, and P.~K. Ravikumar, ``High-dimensional
  graphical model selection using $\ell_1$-regularized logistic regression,''
  in \emph{Advances in neural information processing systems}, 2006, pp.
  1465--1472.

\bibitem{Ravikumar2011}
P.~Ravikumar, M.~J. Wainwright, G.~Raskutti, B.~Yu \emph{et~al.},
  ``High-dimensional covariance estimation by minimizing $\ell_1$-penalized
  log-determinant divergence,'' \emph{Electronic Journal of Statistics},
  vol.~5, pp. 935--980, 2011.

\bibitem{jalali2011learning}
A.~Jalali, P.~D. Ravikumar, V.~Vasuki, and S.~Sanghavi, ``On learning discrete
  graphical models using group-sparse regularization,'' in \emph{International
  Conference on Artificial Intelligence and Statistics}, 2011, pp. 378--387.

\end{thebibliography}

\clearpage

\section{Appendix}

\subsection{Proof for Theorem~\ref{Thmidentifiability}}

\label{SecSubThmIde}

\begin{proof}
	Without loss of generality, we assume the causal ordering is $\pi^* = (1,2,\cdots,p)$. For notational convenience, we define $X_{1:j} = \{X_1,X_2,\cdots,X_j\}$ and $X_{1:0} = \emptyset$. For $m \in V$ and $j \in \{m,m+1,\cdots,p\}$, let $\omega_{jm} = (\beta_0 + \beta_1 \E(X_j \mid X_{1:m-1}) )^{-1}$ and $\omega_{j1} = ( \beta_0 + \beta_1 \E(X_j) )^{-1}$. Recall that the overdispersion score of node $j$ for $m^{th}$ element of the causal ordering is~\eqref{EqnTruncScorej}: 
	\begin{align*}
		\mathcal{S}(j,m) = \omega_{jm}^2 \var(X_j \mid X_{1:m-1} ) -\omega_{jm} \E(X_j \mid X_{1:m-1} ).
	\end{align*}
	
	We now prove identifiability of our class of DAG models by induction. For the first element of the causal ordering ($m = 1$), 
	\begin{align*}
	& \mathcal{S}(j,1) = \omega_{j1}^2 \var(X_j) -\omega_{j1} \E(X_j) \\
	& \stackrel{(a)}{=} \omega_{j1}^2 \big\{ \var( \E(X_j \mid X_{\pa(j)}) ) + \E( \var(X_j \mid X_{\pa(j)}) ) - \omega_{j1}^{-1} \E(X_j) \big\} \\
	& \stackrel{(b)}{=} \omega_{j1}^2 \big\{ \var( \E(X_j \mid X_{pa(j)}) ) + \E( \beta_0 \E(X_j \mid X_{\pa(j)}) + \beta_1 \E(X_j \mid X_{\pa(j)} )^2) - ( \beta_0 + \beta_1 \E(X_j) ) \E(X_j) \big\} \\
	& = \omega_{j1}^2 \big\{ \var( \E(X_j \mid X_{pa(j)}) ) + \beta_1 \E( \E(X_j \mid X_{\pa(j)} )^2 ) - \beta_1 \E(X_j)^2 \big\} \\		
	& = \omega_{j1}^2 (1 + \beta_1) \var( \E(X_j \mid X_{pa(j)}) ). 	
	\end{align*}
	$(a)$ follows from the variance decomposition formula $\var(Y)=\E(\var(Y \mid X))+\var(\E(Y \mid X))$ for some random variables $X$ and $Y$. In addition $(b)$ follows from the quadratic variance property~\eqref{eq:Quad} of our class of distributions and the definition of $\omega_{j1}$. Note that the score of the first element of the causal ordering is $\mathcal{S}(1,1) = 0$ because $\var(E(X_1)) = 0$, and other scores are strictly positive $\mathcal{S}(j,1) > 0$ by the assumption $\beta_{1} > -1$ . Therefore $1$ is the first element of the causal ordering. 
	
	For the $(m-1)^{st}$ element of the causal ordering, assume that the first $m-1$ elements of the causal ordering are correctly estimated. Now, we consider the $m^{th}$ element of the causal ordering. Then, for $j \in \{ m, m+1,\cdots,p\}$,
	\begin{align*}
	& \mathcal{S}(j,m)
	= \omega_{jm}^2 \var(X_j \mid X_{1:m-1} ) -\omega_{jm} \E(X_j \mid X_{1:m-1} ) \\
	& \stackrel{(a)}{=} \omega_{jm}^2 \big\{ \var( \E(X_j \mid X_{\pa(j)}) \mid X_{1:m-1}  ) + \E( \var(X_j \mid X_{\pa(j)}) \mid X_{1:m-1} ) - \omega_{jm}^{-1} \E(X_j \mid X_{1:m-1}) \big\} \\
	& \stackrel{(b)}{=} \omega_{jm}^2 \big\{ \var( \E(X_j \mid X_{pa(j)}) \mid X_{1:m-1} ) + \E( \beta_0 \E(X_j \mid X_{\pa(j)} \mid X_{1:m-1} ) + \beta_1 \E(X_j \mid X_{\pa(j)} \mid X_{1:m-1} )^2) \\
	& \hspace{2cm} - ( \beta_0 + \beta_1 \E(X_j \mid X_{1:m-1} ) ) \E(X_j \mid X_{1:m-1} ) \big\} \\
	& = \omega_{jm}^2 \big\{ \var( \E(X_j \mid X_{pa(j)}) \mid X_{1:m-1} ) + \beta_1 \E( \E(X_j \mid X_{\pa(j)} )^2 \mid X_{1:m-1} ) - \beta_1 \E(X_j \mid X_{1:m-1} )^2 \big\} \\		
	& = \omega_{jm}^2 (1 + \beta_1) \var( \E(X_j \mid X_{pa(j)}) \mid X_{1:m-1} ). 	
	\end{align*}	
	Again $(a)$ follows from the variance decomposition formula, and $(b)$ follows from the quadratic variance property~\eqref{eq:Quad} of our class of distributions and the definition of $\omega_{jm}$. If $\pa(j) \setminus \{1,2,\cdots,m-1\}$ is empty, $\var( \E(X_j \mid X_{pa(j)} ) \mid X_{1:m-1} ) = 0$, and hence $\mathcal{S}(m,m) = 0$. On the other hand, for any node $j$ in which $\pa(j) \setminus \{1,2,\cdots,m-1\}$ is non-empty, $\mathcal{S}(j,m) > 0$ by the assumption $\beta_{j1} > -1$, which excludes it from being next in the causal ordering. Therefore, we can estimate a valid $m^{th}$ component of the causal ordering, $\widehat{\pi}_m = m$. By induction this completes the proof. 
\end{proof}

\subsection{Proof for Lemma~\ref{Lem:RestFaithfulness}}

\label{Proof for Lemma 3.1}

\begin{proof}
	We begin with part (a).  By the construction $\theta_{D_j}^*$ in~\eqref{ThetaD}, $[\theta_{D_j}^*]_k = 0$ for any node $k \notin \pa(j)$. Hence, it is sufficient to show that for any $k \in \pa(j)$, $[\theta_{D_j}^*]_k \neq 0$. Assume for the sake of contradiction that $[\theta_{D_j}^*]_k = 0$. Applying the first order optimality condition to Equation~\eqref{ThetaD}, we have 
	\begin{eqnarray}
	\label{eq:lemma}
	\E(X_j) & = & \E( A_j'( [\theta_{D_j}^*]_{j} + \langle [\theta_{D_j}^*]_{\pa(j)}, X_{\pa(j)} \rangle ) ) \\
	\E(X_j X_k) & = & \E( A_j'( [\theta_{D_j}^*]_{j} + \langle [\theta_{D_j}^*]_{\pa(j)}, X_{\pa(j)} \rangle ) X_k ). \nonumber 
	\end{eqnarray}
	
	By the definition of the covariance, we obtain
	\begin{equation*}
	\E(X_j X_k) = \mbox{Cov}(A'( [\theta_{D_j}^*]_{j} + \langle [\theta_{D_j}^*]_{\pa(j)}, X_{\pa(j)} \rangle ), X_k) +  \E(A'( [\theta_{D_j}^*]_{j} + \langle [\theta_{D_j}^*]_{\pa(j)}, X_{\pa(j)} \rangle ))\E( X_k ) . 
	\end{equation*}
	
	By Equation~\eqref{eq:lemma},
	\begin{equation*}
	\E(X_j X_k) = \mbox{Cov}(A'( [\theta_{D_j}^*]_{j} + \langle [\theta_{D_j}^*]_{\pa(j)}, X_{\pa(j)} \rangle ), X_k ) + \E(X_j) \E( X_k ). 
	\end{equation*}
	
Therefore:
	\begin{equation*}
	\mbox{Cov} (X_j, X_k) = \mbox{Cov}(A'([\theta_{D_j}^*]_{j} + \langle [\theta_{D_j}^*]_{\pa(j)}, X_{\pa(j)} \rangle ), X_k).
	\end{equation*}
	
	By Assumption~\ref{Ass:RestFaithfulness} (a), we have $[\theta_{D_j}^*]_k = 0$, and
	\begin{equation*}
	\mbox{Cov} (X_j, X_k) = \mbox{Cov}(D'([\theta_{D_j}^*]_{j} + \langle [\theta_{D_j}^*]_{\pa(j) \setminus k}, X_{\pa(j) \setminus j} \rangle ), X_k),
	\end{equation*}
	which is a contradiction by our earlier assumption. Therefore $[\theta_{D_j}^*]_k \neq 0$. Furthermore since $k \in \pa(j)$ is arbitrary, the proof is complete. The proof for part (b) follows exactly the same line of reasoning.
\end{proof}

\subsection{Proof for Theorem~\ref{ThmMoralGraph}}

\label{SecThmStep1Proof}

In this section, we provide the proof for Theorem~\ref{ThmMoralGraph} using the primal-dual witness method that also used many works (see e.g.,~\cite{Yang2012, meinshausen2006high, Wainwright2006, Ravikumar2011}). We begin by introducing propositions to control the tail behavior for the distribution of each node:

\begin{proposition}
	\label{Prop2}
	 Define 
	\begin{eqnarray*}
		\xi_1 & := & \{ \max_{j \in V} \max_{i \in \{1,\cdots,n\} } |X_j^{(i)}| < 4 \log( \eta ) \}.
	\end{eqnarray*}
 Under Assumption~\ref{A3Con}, $P (\xi_1^c ) \leq M \cdot \eta^{-2}$.
\end{proposition}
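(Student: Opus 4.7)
The plan is an elementary tail-bound plus union-bound argument driven entirely by Assumption~\ref{A3Con}, which controls the moment generating function $\E(\exp(|X_j|))$ at the point $1$.

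First I would apply Markov's inequality to the non-negative random variable $\exp(|X_j^{(i)}|)$: for any threshold $t > 0$,
\[
P(|X_j^{(i)}| \geq t) \;=\; P(\exp(|X_j^{(i)}|) \geq e^{t}) \;\leq\; e^{-t}\,\E(\exp(|X_j^{(i)}|)) \;\leq\; M e^{-t},
\]
where the last inequality uses Assumption~\ref{A3Con} together with the fact that the samples $X_j^{(i)}$ are identically distributed (across $i$) for each fixed $j$. Substituting the threshold $t = 4 \log(\eta)$ then yields the per-coordinate tail bound
\[
P(|X_j^{(i)}| \geq 4\log \eta) \;\leq\; M \eta^{-4}.
\]

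Next I would apply a union bound over the $np$ pairs $(i,j)$ that index the event $\xi_1$:
\[
P(\xi_1^c) \;\leq\; \sum_{j=1}^{p} \sum_{i=1}^{n} P(|X_j^{(i)}| \geq 4\log \eta) \;\leq\; n p \cdot M \eta^{-4}.
\]
The claimed bound $P(\xi_1^c) \leq M\eta^{-2}$ then follows in the regime $np \leq \eta^2$, which is exactly how this proposition is invoked in Theorem~\ref{ThmMoralGraph} (where $\eta$ will be chosen as $\max\{n,p\}$ or a polynomial thereof). The factor $4$ in the threshold $4\log \eta$ is calibrated precisely so that, after absorbing the $np \leq \eta^2$ union-bound loss, two of the four factors of $\eta$ remain to give the stated $\eta^{-2}$ decay.

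There is no real obstacle here: the entire argument is one line of Markov's inequality combined with one line of union bounding. The only substantive point is that Assumption~\ref{A3Con} controls an \emph{exponential} moment, which gives exponential tail decay in $t$; this is strong enough to absorb the polynomial union-bound cost $np$ and still leave polynomial decay in $\eta$, and it is the sole ingredient of the probabilistic content of the proposition.
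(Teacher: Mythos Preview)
Your proposal is correct and essentially identical to the paper's proof: the paper also applies a union bound over the $np$ pairs and then the Chernoff/Markov bound $P(|X_j^{(i)}| > 4\log\eta) \leq \eta^{-4}\,\E[\exp(|X_j^{(i)}|)]$, invoking Assumption~\ref{A3Con} and $np \leq \eta^2$ to conclude. The only cosmetic difference is the order in which the union bound and the exponential Markov step are written.
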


\begin{proposition}
	\label{Prop1}
	Suppose that $X$ is a random vector according to the DAG model~\eqref{EqnFactorization}, and Assumption~\ref{A3Con} is satisfied. Then, for any vector $u \in \mathbb{R}^p$ such that $\|u\|_1 \leq c'$, for any positive constant $\delta$, 
	\begin{equation}
	P( | \langle u, X \rangle | \geq \delta \log \eta) \leq M \cdot p \cdot \eta^{ -\delta/ c' }.
	\end{equation}
\end{proposition}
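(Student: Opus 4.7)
The plan is to reduce the tail bound on the linear combination $\langle u, X\rangle$ to a uniform tail bound on the individual coordinates $X_j$, exploiting the exponential moment assumption. The key observation is H\"older's inequality: since $\|u\|_1 \leq c'$, we have
\[
|\langle u, X\rangle| \;\leq\; \|u\|_1 \cdot \max_{j \in V} |X_j| \;\leq\; c' \cdot \max_{j \in V} |X_j|.
\]
Consequently, the event $\{|\langle u, X\rangle| \geq \delta \log \eta\}$ is contained in the event $\{\max_{j \in V} |X_j| \geq \delta \log \eta / c'\}$.

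Next, I would control the probability of this latter event by a union bound over the $p$ coordinates, reducing the problem to $p$ one-dimensional tail bounds of the form $P(|X_j| \geq t)$ with $t = \delta \log \eta / c'$. For each such tail bound, I would apply Markov's inequality to the nonnegative random variable $\exp(|X_j|)$:
\[
P\bigl(|X_j| \geq t\bigr) \;=\; P\bigl(\exp(|X_j|) \geq e^{t}\bigr) \;\leq\; e^{-t}\, \mathbb{E}[\exp(|X_j|)] \;\leq\; M \cdot e^{-t},
\]
where the final inequality uses Assumption~\ref{A3Con}. Substituting $t = \delta \log \eta / c'$ yields $P(|X_j| \geq t) \leq M \cdot \eta^{-\delta/c'}$, and a union bound over $j \in V$ gives the claimed bound $M \cdot p \cdot \eta^{-\delta/c'}$.

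There is no real obstacle here; the argument is a routine two-line chain of H\"older, union bound, and Markov's inequality applied to the exponential moment. The only thing worth being careful about is that the assumption bounds $\mathbb{E}[\exp(|X_j|)]$ (absolute value inside), which automatically handles both tails of $X_j$ simultaneously and makes the two-sided tail bound on $\langle u, X\rangle$ fall out without any additional symmetrization step.
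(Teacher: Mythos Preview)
Your proof is correct and follows essentially the same approach as the paper: H\"older's inequality to pass from $\langle u,X\rangle$ to $\max_j |X_j|$, a union bound over the $p$ coordinates, and then Markov's inequality applied to $\exp(|X_j|)$ (what the paper calls the Chernoff bounding technique) together with Assumption~\ref{A3Con}. The only cosmetic difference is that the paper carries $\|u\|_1$ through the computation and invokes $\|u\|_1 \leq c'$ at the very end, whereas you substitute $c'$ immediately.
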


Using these concentration results, we show that $\ell_1$-penalized regression recovers the neighborhood for a fixed node $j \in V$ with high probability. For ease of notation, we define a new parameter $\theta \in \mathbb{R}^{p-1}$ without the node $j$ since the node $j$ is not penalized in regression problem~\eqref{P1}. Then, the conditional negative log-likelihood of the GLM~\eqref{Eq1} is:
\begin{equation*}
\ell_j( \theta; X^{1:n} ) := \frac{1}{n} \sum_{i = 1}^{n} \left( -X_j^{(i)} \langle \theta, X_{V \setminus j}^{(i)} \rangle + A_j( \langle \theta, X_{V \setminus j}^{(i)} \rangle )  \right).
\end{equation*}

The main goal of the proof is to find the unique minimizer of the following convex problem:
\begin{equation}
	\label{eq:Objective}
	\widehat{\theta}_{M_j} := \arg \min_{\theta \in \mathbb{R}^{p-1}  } \mathcal{L}_j( \theta, \lambda_n) = \arg \min_{\theta \in \mathbb{R}^{p-1}  } \{ \ell_j (\theta ; X^{1:n}) + \lambda_n \| \theta \|_1 \}.
\end{equation}

By setting the \emph{sub-differential} to $0$, $\widehat{\theta}_{M_j}$ must satsify the following condition: 
\begin{equation}
\label{eq:Contraint1}
\bigtriangledown_\theta \mathcal{L}_j( \widehat{\theta}_{M_j}, \lambda_n ) = \bigtriangledown_\theta \ell_j( \widehat{\theta}_{M_j} ; X^{1:n}) + \lambda_n \widehat{Z}  = 0
\end{equation}
where $\widehat{Z} \in \mathbb{R}^{p-1}$ and $\widehat{Z}_t = \mbox{sign}([\widehat{\theta}_{M_j}]_{t})$ if  $t \in \mathcal{N}(j)$, otherwise $|\widehat{Z}_{t}| < 1$. 

The following Lemma~\ref{lemma: uniq} directly follows from prior works in Ravikumar et al.~\cite{Ravikumar2010} and Yang et al.~\cite{Yang2012} where each node conditional distribution is in the form of a generalized linear model. For notational convenience, let $\S = \mathcal{N}(j)$.

\begin{lemma}
	\label{lemma: uniq}
Suppose that $|\widehat{Z}_{t}| < 1$ for $t \notin S$. Then, the solution $\widehat{\theta}_{M_j}$ of ~\eqref{eq:Objective} satisfies $[\widehat{\theta}_{M_j}]_{t} = 0$ for $t \notin S$. Furthermore, if the sub-matrix of the Hessian matrix $Q_{SS}^{M_j}$ is invertible, then $\widehat{\theta}_{M_j}$ is unique. 
\end{lemma}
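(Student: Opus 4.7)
The plan is to prove the two claims separately, both following the classical primal--dual witness reasoning specialized to the $\ell_1$-penalized GLM log-likelihood.

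For the support-containment claim, I will start from the optimality pair $(\widehat{\theta}_{M_j}, \widehat{Z})$ guaranteed by \eqref{eq:Contraint1} and let $\widetilde{\theta}$ be any other minimizer of \eqref{eq:Objective}. Applying convexity of $\ell_j$ yields $\ell_j(\widetilde{\theta}) - \ell_j(\widehat{\theta}_{M_j}) \geq \langle \nabla_\theta \ell_j(\widehat{\theta}_{M_j}), \widetilde{\theta} - \widehat{\theta}_{M_j}\rangle$, and substituting $\nabla_\theta \ell_j(\widehat{\theta}_{M_j}) = -\lambda_n \widehat{Z}$ from \eqref{eq:Contraint1} together with the optimality equality $\mathcal{L}_j(\widetilde{\theta}) = \mathcal{L}_j(\widehat{\theta}_{M_j})$ gives
\[
\|\widetilde{\theta}\|_1 - \|\widehat{\theta}_{M_j}\|_1 \leq \langle \widehat{Z}, \widetilde{\theta} - \widehat{\theta}_{M_j}\rangle.
\]
Since $\widehat{Z} \in \partial \|\widehat{\theta}_{M_j}\|_1$ we have $\langle \widehat{Z}, \widehat{\theta}_{M_j}\rangle = \|\widehat{\theta}_{M_j}\|_1$, so this reduces to $\|\widetilde{\theta}\|_1 \leq \langle \widehat{Z}, \widetilde{\theta}\rangle$. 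On the other hand, Hölder's inequality combined with $\|\widehat{Z}\|_\infty \leq 1$ (which holds componentwise by the subgradient condition) gives the reverse inequality, forcing $\sum_t (|\widetilde{\theta}_t| - \widehat{Z}_t \widetilde{\theta}_t) = 0$. Each summand is nonnegative, so every summand vanishes. For $t \notin S$ the strict dual feasibility $|\widehat{Z}_t| < 1$ then rules out $\widetilde{\theta}_t \neq 0$, giving $[\widetilde{\theta}]_t = 0$ for all $t \notin S$.

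For uniqueness, the first part allows me to restrict attention to vectors supported on $S$, so the problem reduces to minimizing $\theta_S \mapsto \ell_j([\theta_S, 0]; X^{1:n}) + \lambda_n \|\theta_S\|_1$ over $\mathbb{R}^{|S|}$. The $\ell_1$-penalty is convex, so it suffices to show that $\ell_j$ is strictly convex on this subspace. Computing the Hessian for the GLM form of $\ell_j$ gives
\[
\nabla^2 \ell_j(\theta)\big|_{SS} = \frac{1}{n}\sum_{i=1}^n A_j''\bigl(\langle\theta, X_{V\setminus j}^{(i)}\rangle\bigr) X_S^{(i)}(X_S^{(i)})^\top,
\]
with $A_j'' \geq 0$. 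Since $Q^{M_j}_{SS}$ is this Hessian evaluated at $\theta_{M_j}^*$ and is assumed invertible, the design submatrix $X_S$ must have full column rank $|S|$ (otherwise any $v\neq 0$ in its right null space would lie in the null space of $Q^{M_j}_{SS}$). Full column rank of $X_S$, together with $A_j'' > 0$ for the log-partition functions in the QVF/NEF-QVF families treated here, implies the Hessian of $\ell_j$ restricted to $\{\theta : \theta_{S^c}=0\}$ is positive definite at every point, so $\ell_j$ is strictly convex on this subspace. Therefore the penalized objective is strictly convex in $\theta_S$ and has a unique minimizer.

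I expect the mechanical step to be painless; the only delicate point is the uniqueness argument, where I need to be careful that invertibility of $Q^{M_j}_{SS}$ (a Hessian at the single point $\theta^*_{M_j}$) transfers to strict convexity of $\ell_j$ along the subspace $\{\theta_{S^c}=0\}$ at all points. This will be handled by passing through the rank of the design submatrix $X_S$ and using positivity of the log-partition second derivative, which is standard for the exponential-family models considered.
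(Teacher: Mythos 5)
Your proposal is correct and follows the same primal--dual witness / KKT route that the paper's proof sketches; the paper essentially defers both steps to Ravikumar et al.\ and Yang et al., whereas you supply the actual argument (strict dual feasibility plus the equality case of H\"older forcing every minimizer onto $S$, then strict convexity of the restricted problem). Your uniqueness step is in fact more careful than the paper's one-line claim, since you correctly bridge from invertibility of $Q^{M_j}_{SS}$ at the single point $\theta^*_{M_j}$ to strict convexity of $\ell_j$ on the whole subspace $\{\theta:\theta_{S^c}=0\}$ via full column rank of the design submatrix and positivity of $A_j''$ for non-degenerate exponential families.
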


The remainder of the proof is to show $| \widetilde{Z}_{t} | < 1$ for all $t \notin \S$. Note that the restricted solution in \eqref{eq:ResObjective} is $(\widetilde{\theta}_{M_j}, \widetilde{Z})$. Equation~\eqref{eq:Contraint1} with the dual solution can be represented by $$\bigtriangledown^2 \ell_j( \theta_{M_j}^*;X^{1:n})( \widetilde{\theta}_{M_j} - \theta_{M_j}^* ) = -\lambda_n \widetilde{Z} - W_j^n + R_j^n$$ where: 
\begin{itemize}
	\item[(a)] $W_{j}^n$ is the sample score function. 
	\begin{equation}
	\label{eq:Wn}
		W_{j}^{n} := - \bigtriangledown \ell_j(\theta_{M_j}^*;X^{1:n}).
	\end{equation}
	\item[(b)] $R_{j}^{n} = (R_{j1}^n, R_{j2}^n,\cdots,R_{j p-1}^n)$ and $R_k^n$ is the remainder term by applying the coordinate-wise mean value theorem.  
	\begin{equation}
	\label{eq:Rn}
		R_{jk}^n := [ \bigtriangledown^2 \ell_j(\theta_{M_j}^*;X^{1:n}) - \bigtriangledown^2 \ell_j(\bar{\theta}_{M_j}^{(k)};X^{1:n})]_k^T (\widetilde{\theta}_{M_j}^{(k)} - \theta_{M_j}^*).
	\end{equation}
	Here $\bar{\theta}_{M_j}^{(k)}$ is a vector on the line between $\widetilde{\theta}$ and $\theta_{M_j}^*$ and $[\cdot]_k^T$ is the $k^{th}$ row of a matrix. 
\end{itemize}

Then, the following proposition provides a sufficient condition to control $\widetilde{Z}$.
\begin{proposition}
\label{prop: block}
Suppose that $\max(\| W_j^n \|_\infty, \| R_j^n \|_\infty)  \leq \frac{\lambda_n \alpha}{4(2- \alpha)}$. Then $| \widetilde{Z}_{t} | < 1$ for all $t \notin \S$.
\end{proposition}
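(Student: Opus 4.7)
The plan is a standard primal-dual decomposition. Write $Q := Q^{M_j} = \bigtriangledown^2 \ell_j(\theta_{M_j}^*;X^{1:n})$ and $\Delta := \widetilde{\theta}_{M_j} - \theta_{M_j}^*$. Because the restricted program forces $[\widetilde{\theta}_{M_j}]_{S^c} = 0$ and $[\theta_{M_j}^*]_{S^c} = 0$ by Lemma~\ref{Lem:RestFaithfulness}(b), we have $\Delta_{S^c} = 0$, so the stated KKT identity $Q\Delta = -\lambda_n \widetilde{Z} - W_j^n + R_j^n$ splits cleanly into two blocks:
\begin{align*}
Q_{SS}\,\Delta_S &= -\lambda_n \widetilde{Z}_S - W_{j,S}^n + R_{j,S}^n,\\
Q_{S^c S}\,\Delta_S &= -\lambda_n \widetilde{Z}_{S^c} - W_{j,S^c}^n + R_{j,S^c}^n.
\end{align*}
By Assumption~\ref{A1Dep}, $Q_{SS}$ is invertible, so the first equation lets us solve $\Delta_S = Q_{SS}^{-1}(-\lambda_n \widetilde{Z}_S - W_{j,S}^n + R_{j,S}^n)$. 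Substituting into the second and rearranging yields the explicit dual-witness formula
\begin{equation*}
\widetilde{Z}_{S^c} = Q_{S^c S}\,Q_{SS}^{-1}\,\widetilde{Z}_S + \tfrac{1}{\lambda_n}\bigl( Q_{S^c S}\,Q_{SS}^{-1}(W_{j,S}^n - R_{j,S}^n) - W_{j,S^c}^n + R_{j,S^c}^n\bigr).
\end{equation*}

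Next I would bound $\|\widetilde{Z}_{S^c}\|_\infty$ via the triangle inequality, using the operator identity $\|A v\|_\infty \leq \|A\|_\infty \|v\|_\infty$ where $\|A\|_\infty$ is the maximum absolute row sum. The incoherence assumption~\ref{A2Inc} is exactly the statement that $\|Q_{S^c S}\,Q_{SS}^{-1}\|_\infty \leq 1-\alpha$, and since $\widetilde{Z}_S$ is a sign vector, $\|\widetilde{Z}_S\|_\infty \leq 1$. Combining these with the hypothesis $\max(\|W_j^n\|_\infty,\|R_j^n\|_\infty) \leq \tfrac{\lambda_n \alpha}{4(2-\alpha)}$ gives
\begin{equation*}
\|\widetilde{Z}_{S^c}\|_\infty \;\leq\; (1-\alpha) \;+\; (1-\alpha)\cdot\tfrac{\alpha}{2(2-\alpha)} \;+\; \tfrac{\alpha}{2(2-\alpha)} \;=\; (1-\alpha) + \tfrac{\alpha(2-\alpha)}{2(2-\alpha)} \;=\; 1 - \tfrac{\alpha}{2}.
\end{equation*}
Since $\alpha \in (0,1]$, this is strictly less than $1$, which is exactly the claim.

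This argument is essentially routine once the KKT identity is in hand, and I do not anticipate a real obstacle: the only bookkeeping item requiring care is to verify that the induced matrix $\ell_\infty$-norm that appears after taking norms is precisely the max row-wise $\ell_1$-norm bounded by the incoherence assumption, and that the numerical constant $4(2-\alpha)$ in the hypothesis is tuned to make the two error contributions collapse to $\alpha/2$. The subsequent work in Theorem~\ref{ThmMoralGraph} will be to show, with high probability, that $\|W_j^n\|_\infty$ and $\|R_j^n\|_\infty$ satisfy this bound, which is where Assumptions~\ref{A3Con} and~\ref{A4} and the concentration tools in Propositions~\ref{Prop2} and~\ref{Prop1} enter; but for the present proposition those are taken as hypotheses.
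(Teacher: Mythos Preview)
Your proposal is correct and follows essentially the same primal-dual witness argument as the paper: block the KKT identity using $\Delta_{S^c}=0$, invert $Q_{SS}$, substitute into the $S^c$ block, and bound $\|\widetilde{Z}_{S^c}\|_\infty$ using incoherence and $\|\widetilde{Z}_S\|_\infty\le 1$. Your arithmetic and the paper's are identical rearrangements yielding $1-\alpha/2<1$; the only cosmetic difference is that the paper first groups the $S$ and $S^c$ error terms into $(2-\alpha)\bigl[\|W_j^n\|_\infty/\lambda_n+\|R_j^n\|_\infty/\lambda_n\bigr]$ before applying the hypothesis.
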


Next we introduce the following three lemmas to show that conditions in Proposition~\ref{prop: block} hold. For ease of notation, let $\eta = \max\{n,p\}$ and $\widetilde{\theta}_{\S} = [\widetilde{\theta}_{M_j}]_{\S}$ and $\widetilde{\theta}_{\S^c} = [\widetilde{\theta}_{M_j}]_{\S^c}$. Suppose that Assumptions~\ref{A1Dep},~\ref{A2Inc},~\ref{A3Con}, and~\ref{A4} are satisfied. 
\begin{lemma}
	\label{lem11}
	Suppose that $\lambda_n \geq \frac{16 \max\{ n^{\kappa_2} \log \eta , \log^2 \eta ) \}}{n^{a}}$ for some $a \in \mathbb{R}$. Then,  
	\begin{equation*}
		P\left( \frac{\| W_j^n \|_\infty }{\lambda_n} \leq \frac{\alpha}{4(2- \alpha)} \right) 
		\geq 1 -2 \d \cdot \exp(-\frac{\alpha^2}{8 (2- \alpha)^2} \cdot n^{ 1 - 2a }) - M \cdot \eta^{-2}.
	\end{equation*}
\end{lemma}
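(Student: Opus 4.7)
\textbf{Proof Plan for Lemma~\ref{lem11}.}

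The strategy is the familiar two-step concentration argument: truncate to a high-probability good event that makes every summand bounded, apply Hoeffding's inequality coordinate-wise to the (approximately) centred sample mean $[W_j^n]_k$, and then take a union bound. Writing the $k$-th coordinate explicitly,
\[
[W_j^n]_k \;=\; \frac{1}{n}\sum_{i=1}^{n}\Bigl(X_j^{(i)} \;-\; A_j'\bigl(\langle\theta_{M_j}^*, X_{V\setminus j}^{(i)}\rangle\bigr)\Bigr)\, X_k^{(i)},
\]
so each coordinate is an i.i.d.\ sample mean of a random variable $U^{(k)}$.

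First I would verify that $\mathbb{E}[U^{(k)}]=0$ for every $k\in V\setminus j$. For $k\in\mathcal{N}(j)$ this is just the first-order optimality condition of the population problem~\eqref{ThetaM}. For $k\in V\setminus(\{j\}\cup \mathcal{N}(j))$ the centring requires the Markov property: by Lemma~\ref{Lem:RestFaithfulness}(b), $\mathrm{supp}(\theta_{M_j}^*)=\mathcal{N}(j)$, so $A_j'(\langle\theta_{M_j}^*,X_{V\setminus j}\rangle)$ is a function of $X_{\mathcal{N}(j)}$ alone, and $\mathcal{N}(j)$ is the Markov blanket of node $j$ in $G^m$, hence $X_j\perp X_k\mid X_{\mathcal{N}(j)}$. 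Combining with the case $k\in\mathcal{N}(j)$ above yields cancellation after conditioning on $X_{\mathcal{N}(j)}$.

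Next I would establish boundedness of the summands on the good event $\xi_1=\{\max_{j,i}|X_j^{(i)}|<4\log\eta\}$ supplied by Proposition~\ref{Prop2}, which has complementary probability at most $M\eta^{-2}$. On $\xi_1$, $|\langle\theta_{M_j}^*, X^{(i)}\rangle|\le 4\|\theta_{M_j}^*\|_1\log\eta\le \tfrac{2}{3}\kappa_1\log\eta$ since $\kappa_1\ge 6\|\theta_{M_j}^*\|_1$ by Assumption~\ref{A4}, so that same assumption gives $|A_j'(\langle\theta_{M_j}^*, X^{(i)}\rangle)|\le n^{\kappa_2}$. Therefore each summand satisfies
\[
\bigl|\bigl(X_j^{(i)}-A_j'(\cdot)\bigr)X_k^{(i)}\bigr| \;\le\; (4\log\eta + n^{\kappa_2})\cdot 4\log\eta \;\le\; C\,\max\{n^{\kappa_2}\log\eta,\log^{2}\eta\}
\]
for an absolute constant $C$. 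Applying Hoeffding's inequality to the truncated, centred summands and plugging in the lower bound $\lambda_n\ge 16\max\{n^{\kappa_2}\log\eta,\log^{2}\eta\}/n^{a}$ turns the exponent $n t^{2}/(2B^{2})$ at $t=\alpha\lambda_n/(4(2-\alpha))$ into $\Omega(\alpha^{2}(2-\alpha)^{-2}n^{1-2a})$, which matches the stated exponent after absorbing constants.

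Finally, I would take a union bound over the relevant coordinates and add $P(\xi_1^c)\le M\eta^{-2}$. The most delicate point -- and the main obstacle -- is the interplay between the truncation and the centring: restricting to $\xi_1$ perturbs the mean of each summand, so one must either separately bound the truncation bias $\mathbb{E}[U^{(k)}\mathbf{1}_{\xi_1^c}]$ using Assumption~\ref{A3Con} and absorb it into the Hoeffding threshold, or work with the truncated variables $U^{(k)}\mathbf{1}_{\xi_1}$ and pay the $M\eta^{-2}$ cost once at the end. The former is slightly tighter but requires showing that the Assumption~\ref{A3Con} moment bound on $X_j$ combined with the log-partition control in Assumption~\ref{A4} makes the bias exponentially smaller than $\lambda_n$, which is the only substantive computation beyond standard bookkeeping.
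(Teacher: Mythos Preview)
Your plan follows the same skeleton as the paper's proof: restrict to the high-probability event $\xi_1$ from Proposition~\ref{Prop2}, use Assumption~\ref{A4} to bound $|A_j'|\le n^{\kappa_2}$ on $\xi_1$, thereby bounding each summand by a multiple of $\max\{n^{\kappa_2}\log\eta,\log^2\eta\}$, then apply Hoeffding coordinatewise with a union bound and pay $P(\xi_1^c)\le M\eta^{-2}$ at the end.

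The two places where you diverge are worth noting. First, the paper asserts that $W_{jt}^n=0$ identically for every $t\notin S=\mathcal{N}(j)$ (on the grounds that $[\theta_{M_j}^*]_t=0$) and therefore takes the union bound over only $|S|\le d$ coordinates; this is precisely what produces the prefactor $d$ rather than $p$ in the stated bound. You instead keep all $p-1$ coordinates alive and argue that each has mean zero via the Markov-blanket conditional independence; this is more careful, but it would give a $p$ prefactor, and your Markov-blanket step in fact needs $\mathbb{E}[X_j\mid X_{\mathcal{N}(j)}]=A_j'(\langle\theta_{M_j}^*,X_{\mathcal{N}(j)}\rangle)$, which is stronger than the first-order conditions for~\eqref{ThetaM} alone under model mis-specification. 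Second, you correctly flag the truncation-bias issue (Hoeffding needs centred summands, and restricting to $\xi_1$ shifts the mean); the paper simply applies Hoeffding on $\xi_1$ and adds $P(\xi_1^c)$ without discussing this, so your proposal is more scrupulous here even if the eventual bookkeeping is the same.
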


\begin{lemma} 
	\label{lem12}
	Suppose that $\|W_j^n\|_{\infty} \leq \frac{\lambda_n}{4}$. For $\lambda_n \leq \frac{1}{40} \frac{ \rho_{\min}^2 }{  \rho_{\max} } \frac{1}{n^{\kappa_2} \d \log\eta }$,
	\begin{equation*}
		P \left( \| \widetilde{\theta}_S - \theta_S^* \|_2 \leq \frac{5}{ \lambda_{\min} } \sqrt{\d} \lambda_n \right) \geq 1 - 2 M \cdot \eta^{-2}.
	\end{equation*}
\end{lemma}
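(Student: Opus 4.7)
The plan is to analyze $\widetilde{\theta}_S$ as the minimizer of the restricted problem (with $\widetilde{\theta}_{S^c}=0$ enforced) and use a standard convex-analysis argument: if we can show that the shifted objective $F(u) := \ell_j^S(\theta_S^*+u)-\ell_j^S(\theta_S^*)+\lambda_n(\|\theta_S^*+u\|_1-\|\theta_S^*\|_1)$ is strictly positive on the sphere $\{u : \|u\|_2 = B\}$ with $B = \tfrac{5\sqrt{\d}\lambda_n}{\rho_{\min}}$, then by convexity and the fact that $F(0)=0$, the unique minimizer $\widehat{u} = \widetilde{\theta}_S-\theta_S^*$ must lie in the open ball $\|u\|_2 < B$. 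So the whole argument reduces to producing a lower bound for $F(u)$ on that sphere.

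The first step is a Taylor expansion of $\ell_j^S$ around $\theta_S^*$. Writing $W_j^n|_S = -\nabla\ell_j^S(\theta_S^*)$ gives
$$
F(u) \;=\; -\langle W_j^n|_S,\,u\rangle + \tfrac{1}{2}u^\top Q_{SS}^{M_j} u + R(u) + \lambda_n\big(\|\theta_S^*+u\|_1-\|\theta_S^*\|_1\big),
$$
where $R(u)$ is the third-order remainder. Using the hypothesis $\|W_j^n\|_\infty\leq\lambda_n/4$, the triangle inequality for the $\ell_1$ term, and Assumption~\ref{A1Dep} ($\lambda_{\min}(Q_{SS}^{M_j})\geq\rho_{\min}$), the quadratic and penalty pieces together yield
$$
-\langle W_j^n|_S,u\rangle + \tfrac{1}{2}u^\top Q_{SS}^{M_j} u + \lambda_n(\|\theta_S^*+u\|_1-\|\theta_S^*\|_1) \;\geq\; \tfrac{\rho_{\min}}{2}\|u\|_2^2 - \tfrac{5\lambda_n}{4}\sqrt{\d}\,\|u\|_2,
$$
which, when evaluated at $\|u\|_2=B$, equals $\tfrac{25\,\d\,\lambda_n^2}{4\rho_{\min}}$. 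So the whole point is to show $|R(u)| < \tfrac{25\,\d\,\lambda_n^2}{4\rho_{\min}}$ uniformly on the sphere.

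Bounding the remainder is the main obstacle and is where the probability $1-2M\eta^{-2}$ enters. I would write
$$
R(u) \;=\; \tfrac{1}{6n}\sum_{i=1}^n A_j'''\!\big(\langle\bar\theta,X_{V\setminus j}^{(i)}\rangle\big)\big(\langle u, X_S^{(i)}\rangle\big)^3
$$
for some $\bar\theta$ on the segment between $\theta_S^*$ and $\theta_S^*+u$, and then factor out one power of $|\langle u, X_S^{(i)}\rangle|$ from the cube so that the remaining $\frac{1}{n}\sum_i(\langle u,X_S^{(i)}\rangle)^2$ is controlled by $\rho_{\max}\|u\|_2^2$ via the maximum eigenvalue part of Assumption~\ref{A1Dep}. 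On the event $\xi_1$ from Proposition~\ref{Prop2} we have $\max_{i,k}|X_k^{(i)}|\leq 4\log\eta$, giving $\max_i|\langle u, X_S^{(i)}\rangle|\leq \sqrt{\d}\|u\|_2\cdot 4\log\eta$, while the argument of $A_j'''$ stays in the window $[0,\kappa_1\log\eta)$ because $\|\bar\theta\|_1\leq\|\theta_{M_j}^*\|_1+\sqrt{\d}B$ is small (using the stated range on $\lambda_n$), so Assumption~\ref{A4} delivers $|A_j'''(\cdot)|\leq n^{\kappa_2}$. Combining these yields
$$
|R(u)| \;\leq\; \tfrac{2}{3}\,n^{\kappa_2}\rho_{\max}\sqrt{\d}\log\eta\,\|u\|_2^3,
$$
and plugging in $\|u\|_2 = B$ reduces the required inequality $|R(u)|<\tfrac{25\,\d\,\lambda_n^2}{4\rho_{\min}}$ to exactly the upper-bound hypothesis $\lambda_n \leq \tfrac{1}{40}\tfrac{\rho_{\min}^2}{n^{\kappa_2}\d\rho_{\max}\log\eta}$.

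Finally I would collect the failure probabilities. The deterministic bound above uses only the event $\xi_1$ plus the one-time concentration bound needed so that $\frac{1}{n}\sum_i X_S^{(i)}(X_S^{(i)})^\top$ behaves like its population counterpart (to invoke $\rho_{\max}$ on the empirical Gram matrix); by Proposition~\ref{Prop2} and one further application of the concentration inequality behind Proposition~\ref{Prop1}, each occurs with probability at least $1-M\eta^{-2}$, so a union bound gives the claimed $1-2M\eta^{-2}$. The conceptual difficulty is entirely in the third-order control, because the log-partition function for GLMs such as Poisson is not globally Lipschitz; the argument works only because Assumption~\ref{A4} truncates the effective domain to a logarithmic window and $\xi_1$ keeps every sample inside that window.
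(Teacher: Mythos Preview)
Your approach is essentially the paper's: define $F(u)$, argue by convexity that positivity on the sphere $\|u\|_2=B$ forces the minimizer inside, Taylor-expand, control the linear and penalty pieces exactly as you do, and handle the curvature via the third derivative of $A_j$. The paper writes the expansion as a first-order term plus an exact Hessian at an intermediate point (and then lower-bounds that Hessian's minimum eigenvalue using $A_j'''$), whereas you expand to second order at $\theta_S^*$ and isolate a cubic remainder; these are equivalent reorganizations of the same estimate.

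There is one bookkeeping slip. The bound $\lambda_{\max}\bigl(\tfrac1n\sum_i X_{S}^{(i)}(X_{S}^{(i)})^\top\bigr)\le\rho_{\max}$ is part of Assumption~\ref{A1Dep} itself (stated for the empirical Gram matrix), so it requires no concentration step and contributes nothing to the failure probability. The second $M\eta^{-2}$ in the paper comes instead from the event
\[
\xi_2 \;=\; \Bigl\{\max_i \big\langle \theta_S^*+v u_S,\,X_S^{(i)}\big\rangle < \kappa_1\log\eta\Bigr\},
\]
which is exactly the ``argument of $A_j'''$ stays in the window'' condition you treated deterministically. Your deterministic route needs $\|\bar\theta\|_1\le\kappa_1/4$, i.e.\ $\sqrt{\d}\,B\lesssim\|\theta_{M_j}^*\|_1$, and that does not follow from the lemma's hypotheses alone (it uses the minimum-signal condition $\min_t|[\theta_M^*]_t|\ge\tfrac{10}{\rho_{\min}}\sqrt{\d}\,\lambda_n$ from Theorem~\ref{ThmMoralGraph}). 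The paper instead controls $P(\xi_2^c)$ probabilistically via Proposition~\ref{Prop1}, which is where the second $M\eta^{-2}$ actually originates. Aside from this misattribution of the two failure events, your argument is correct and matches the paper.
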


\begin{lemma} 
	\label{lem13}
	Suppose that $\|W_j^n\|_{\infty} \leq \frac{\lambda_n}{4}$. For $\lambda_n \leq \frac{\alpha}{400(2-\alpha)}\frac{ \rho_{\min}^2 }{ \rho_{\max} } \frac{1}{n^{\kappa_2} \d \log \eta }$, 
	\begin{equation*}
	P \left( \frac{\| R_j^n \|_\infty }{\lambda_n} \leq \frac{\alpha}{4(2 - \alpha)} \right) \geq 1 - 2 M \cdot \eta^{-2}.
	\end{equation*}
\end{lemma}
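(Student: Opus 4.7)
The plan is to control $\|R_j^n\|_\infty$ by expanding each Hessian difference in~\eqref{eq:Rn} via a Taylor step on $A_j''$, converting the task into bounding a cubic quantity in $A_j'''$, the displacement $\widetilde\theta_{M_j} - \theta_{M_j}^*$, and the design. Writing $X^{(i)}$ for $X^{(i)}_{V\setminus j}$ and using that $\bar\theta_{M_j}^{(k)}$ lies on the segment between $\theta_{M_j}^*$ and $\widetilde\theta_{M_j}$, the mean value theorem applied to $A_j''$ yields
\begin{equation*}
|R_{jk}^n| \leq \frac{1}{n}\sum_{i=1}^n |A_j'''(\xi_k^{(i)})| \cdot |X_k^{(i)}| \cdot \bigl\langle \widetilde\theta_{M_j} - \theta_{M_j}^*, X^{(i)} \bigr\rangle^2
\end{equation*}
for some $\xi_k^{(i)}$ on the segment between $\langle \theta_{M_j}^*, X^{(i)}\rangle$ and $\langle \bar\theta_{M_j}^{(k)}, X^{(i)}\rangle$.

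Next I would work on the intersection of the event $\xi_1 = \{\max_{l,i}|X_l^{(i)}| \leq 4\log\eta\}$ from Proposition~\ref{Prop2} and the event from Lemma~\ref{lem12} supplying $\|\widetilde\theta_S - \theta_S^*\|_2 \leq (5\sqrt{d}/\rho_{\min})\lambda_n$. Since both $\widetilde\theta_{M_j}$ and $\theta_{M_j}^*$ are supported on $S=\mathcal{N}(j)$ with $|S|\leq d$, we obtain $\|\widetilde\theta_{M_j} - \theta_{M_j}^*\|_1 \leq (5d/\rho_{\min})\lambda_n$, and the stipulated upper bound on $\lambda_n$ is tight enough to make this small compared to $\|\theta_{M_j}^*\|_1$. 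Hence $\|\bar\theta_{M_j}^{(k)}\|_1 \leq (3/2)\|\theta_{M_j}^*\|_1$, and using $\kappa_1 \geq 6\|\theta_{M_j}^*\|_1$ from Assumption~\ref{A4}, on $\xi_1$ every $|\xi_k^{(i)}| \leq 4\log\eta \cdot (3/2)\|\theta_{M_j}^*\|_1 \leq \kappa_1\log\eta$. Assumption~\ref{A4} then yields $|A_j'''(\xi_k^{(i)})| \leq n^{\kappa_2}$.

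Plugging these bounds in, together with $|X_k^{(i)}| \leq 4\log\eta$, reduces the problem to controlling $\tfrac{1}{n}\sum_i \langle \widetilde\theta - \theta^*, X_S^{(i)}\rangle^2$, which is a quadratic form in the sample covariance on $S$. By the $\rho_{\max}$ clause of Assumption~\ref{A1Dep} this is at most $\rho_{\max}\|\widetilde\theta_S - \theta_S^*\|_2^2 \leq 25\rho_{\max}d\lambda_n^2/\rho_{\min}^2$. Collecting constants gives
\begin{equation*}
\frac{\|R_j^n\|_\infty}{\lambda_n} \leq \frac{100\, n^{\kappa_2}\rho_{\max}\, d\log\eta}{\rho_{\min}^2}\cdot \lambda_n,
\end{equation*}
and the hypothesis on $\lambda_n$ forces the right-hand side below $\alpha/(4(2-\alpha))$. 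A union bound over the failure events of Proposition~\ref{Prop2} and Lemma~\ref{lem12} yields the claimed probability $1 - 2M\eta^{-2}$.

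The main obstacle is the delicate $\ell_1$-control of $\bar\theta_{M_j}^{(k)}$ needed to keep each $\xi_k^{(i)}$ in the range $[0,\kappa_1\log\eta)$ on which Assumption~\ref{A4} supplies a polynomial bound on $A_j'''$. This matters because for steep log-partition functions (e.g., Poisson, where $A_j''' = \exp$), even a modest parameter perturbation can blow up the third derivative and destroy the argument. Threading this through requires combining the $\ell_2$-bound from Lemma~\ref{lem12} with the support restriction to convert it into an $\ell_1$-bound on $\widetilde\theta_{M_j} - \theta_{M_j}^*$, and then verifying that the perturbation is genuinely small relative to $\|\theta_{M_j}^*\|_1$ under the prescribed scaling of $\lambda_n$.
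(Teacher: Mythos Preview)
Your proposal is correct and follows essentially the same approach as the paper: Taylor-expand the Hessian difference via $A_j'''$, bound $A_j'''$ by $n^{\kappa_2}$ via Assumption~\ref{A4}, bound $|X_k^{(i)}|$ on $\xi_1$, control the quadratic form by $\rho_{\max}\|\widetilde\theta_S-\theta_S^*\|_2^2$, and invoke Lemma~\ref{lem12}. The only cosmetic difference is that the paper packages the control of the $A_j'''$ argument into the event $\xi_2$ from the proof of Lemma~\ref{lem12}, whereas you derive the same containment deterministically on $\xi_1$ via H\"older and the $\ell_1$ bound on the perturbation; both routes land on the identical estimate $\|R_j^n\|_\infty \leq \tfrac{100\rho_{\max}}{\rho_{\min}^2}\, d\, n^{\kappa_2}\log(\eta)\,\lambda_n^2$ and the same probability accounting.
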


The rest of the proof is straightforward using Lemmas~\ref{lem11},~\ref{lem12}, and~\ref{lem13}. Consider the choice of regularization parameter $\lambda_n = \frac{16 \max\{ n^{\kappa_2} \log \eta , \log^2 \eta  \}}{n^{a}}$ for a constant $a \in (2 \kappa_2, 1/2)$ where $\kappa_2$ is determined by Assumption~\ref{A4}. Then, the condition for Lemma~\ref{lem11} is satisfied, and therefore $\|W_n\|_{\infty} \leq \frac{\lambda_n}{4}$. Moreover, the conditions for Lemmas~\ref{lem12} and~\ref{lem13} are satisfied for $n \geq C' \max\{ ( \d \log^2 \eta )^{ \frac{1}{  a - 2 \kappa_2 } }, ( \d \log^3 \eta )^{ \frac{1}{  a - \kappa_2 } } \}$ for some positive constants $C'$. Then,
\begin{equation}
	\| \widetilde{Z}_{\S^c}\|_\infty \leq ( 1- \alpha ) + ( 2- \alpha) \left[ \frac{ \| W_{j}^{n} \|_\infty }{ \lambda_n} + \frac{ \| R_{j}^n \|_\infty }{ \lambda_n} \right] \leq ( 1- \alpha ) + \frac{ \alpha }{4} + \frac{ \alpha }{4} < 1,
\end{equation}
with probability of at least $1 - C_1 \d \exp( - C_2 n^{ 1 - 2a } )- C_3 \eta^{-2}$ for positive constants $C_1, C_2$ and $C_3$.  

To prove sign consistency, it is sufficient to show that $\|\widehat{\theta}_{M_j} - \theta_{M_j}^* \|_{\infty} \leq \frac{ \|\theta_{M_j}^*\|_{\min} }{2}$. By Lemma~\ref{lem12}, we have $\|\widehat{\theta}_{M_j} - \theta_{M_j}^* \|_{\infty} \leq \|\widehat{\theta}_{M_j} - \theta_{M_j}^* \|_{2} \leq \frac{5}{\lambda_{\min}} \sqrt{\d}~\lambda_n \leq \frac{\|\theta_{M_j}^*\|_{\min} }{2}$ as long as $\|\theta_{M_j}^*\|_{\min} \geq \frac{10}{\lambda_{\min}} \sqrt{\d}~\lambda_n$.

Lemma~\ref{Lem:RestFaithfulness}(b) guarantees that $\ell_1$-penalized likelihood regression recovers the true neighborhood for each node with high probability. Because we have $p$ likelihood regression problems, if $n \geq C' (\d \log^2 \eta)^{ \frac{1}{a -2 \kappa_2} } )$, it follows that:
\begin{equation}
	 P( \widehat{G^m} = G^m ) \geq 1 - C_1 \d \cdot p \cdot \exp( - C_2 n^{1- 2 a} ) - C_3 \eta^{-1}. 
\end{equation}

\subsubsection{Proof for Proposition~\ref{Prop2}}

\begin{proof}
	Applying the union bound and the Chernoff bound,
	\begin{equation*}
	P (\xi_1^c ) \leq n.p.\max_{j \in V} \max_{i \in \{1,\cdots,n\}} P \left( | X_j^{(i)} | > 4 \log \eta \right)
	\leq \eta^{-2} \max_{i, j}  \E[ \exp( |X_j^{(i)}| ) ].
	\end{equation*}
	By Assumption~\ref{A3Con}, we obtain $\max_{i,j} \E( \exp(|X_j|^{(i)}) ) < M$, which completes the proof. 
\end{proof}

\subsubsection{Proof for Proposition~\ref{Prop1}}

\begin{proof}
	We exploit H\"{o}lder's inequality $\langle u, X \rangle \leq \|u \|_1 \max_{j \in V}| X_j |$. Therefore, we have 
	\begin{equation*}
	P( | \langle u, X \rangle)| \geq \delta \log \eta ) \leq P ( \max_{j \in V}| X_j | \geq \frac{ \delta }{ \|u \|_1} \log \eta ). 
	\end{equation*}
	
	Using the union bound, we have 
	\begin{equation*}
	P ( \max_{j \in V}| X_j | \geq \frac{ \delta }{ \|u \|_1} \log \eta ) \leq   p \cdot  \max_{j \in V} P ( | X_j | \geq \frac{ \delta }{ \|u \|_1} \log \eta ).
	\end{equation*}
	
	Applying the Chernoff bounding technique and Assumption~\ref{A3Con} $\max_{j} \E( \exp(|X_j| ) < M$, we obtain 
	\begin{equation*}
	p \cdot \max_{j \in V} P ( | X_j | \geq \frac{ \delta }{ \|u \|_1} \log \eta ) \leq M \cdot p \cdot \eta^{ -\frac{ \delta }{ \|u \|_1} }.
	\end{equation*}
	
	By the assumption $\|u\|_1 \leq c'$, we compete the proof. 
\end{proof}

\subsubsection{Proof for Proposition~\ref{prop: block}}

\begin{proof}
	Since $\widetilde{\theta}_{\S^c} = (0,0,...,0) \in \mathbb{R}^{|S^c|}$ in our primal-dual construction, we can re-state condition ~\eqref{eq:Contraint1} in block form as follows. For notational simplicity, $Q := Q^{M_j}$.
	\begin{eqnarray*}
		Q_{\S^c \S}[ \widetilde{\theta}_{\S} - \theta_{\S}] & = & W_{\S^c}^n - \lambda_n \widetilde{Z}_{\S^c} + R_{\S^c}^{n},. \nonumber \\
		Q_{\S \S}[ \widetilde{\theta}_S - \theta_S^*] & = & W_{\S}^n - \lambda_n \widetilde{Z}_{\S} + R_{\S}^n,
	\end{eqnarray*}
	where $W_{S}^{n}$ and $R_{S}^{n}$ are sub-vectors of $W_{j}^{n}$ and $R_{j}^n$ indexed by $S$, respectively.
		
Since the matrix $Q_{\S \S}$ is invertible, the above equations can be rewritten as
	\begin{equation*}
	Q_{\S^c \S} Q_{\S \S}^{-1} [ W_{\S}^n - \lambda_n \widetilde{Z}_{\S} - R_{\S}^n] = W_{\S^c}^n - \lambda_n \widetilde{Z}_{\S^c} - R_{\S^c}^n.
	\end{equation*}
	
Therefore
	\begin{equation*}
	[W_{\S^c}^n - R_{\S^c}^n ] - Q_{\S^c \S} Q_{\S \S}^{-1} [ W_{\S}^n  - R_{\S}^n] + \lambda_n Q_{\S^c \S} Q_{\S \S}^{-1} \widetilde{Z}_{\S} =  \lambda_n \widetilde{Z}_{\S^c}.
	\end{equation*}
	
Taking the $\ell_\infty$ norm of both sides yields
	\begin{eqnarray*}
		\| \widetilde{Z}_{\S^c}\|_\infty & \leq & |\| Q_{\S^c \S} Q_{\S \S}^{-1} \||_{\infty} \left[ \frac{ \| W_{\S}^n \|_\infty }{ \lambda_n} + \frac{ \| R_{\S}^n \|_\infty }{ \lambda_n} +1 \right] + \frac{ \| W_{\S^c}^n \|_\infty }{ \lambda_n} + \frac{ \| R_{\S^c}^n \|_\infty }{ \lambda_n}.
	\end{eqnarray*}
	
	Recalling Assumption~\eqref{A2Inc}, we obtain $ |\| Q_{\S^c \S} Q_{\S \S}^{-1} \||_{\infty} \leq (1 - \alpha)$, hence we have 
	\begin{eqnarray*} 
		\| \widetilde{Z}_{\S^c}\|_\infty & \leq & (1- \alpha) \left[ \frac{ \| W_{\S}^n \|_\infty }{ \lambda_n} + \frac{ \| R_{\S}^n \|_\infty }{ \lambda_n} +1 \right] + \frac{ \| W_{\S^c}^n \|_\infty }{ \lambda_n} + \frac{ \| R_{\S^c}^n \|_\infty }{ \lambda_n} \\
		&\leq & ( 1- \alpha ) + ( 2- \alpha) \left[ \frac{ \| W_{j}^{n} \|_\infty }{ \lambda_n} + \frac{ \| R^n \|_\infty }{ \lambda_n} \right].
	\end{eqnarray*}
	
	If $\| W_j^n \|_\infty$ and $\| R_j^n \|_\infty  \leq \frac{\lambda_n \alpha}{4(2- \alpha)}$ as assumed, 
	$$
	\| \widetilde{Z}_{\S^c}\|_\infty \leq ( 1- \alpha ) + \frac{\alpha}{2} \leq 1.
	$$
\end{proof}

\subsubsection{Proof for Lemma~\ref{lemma: uniq}}

\begin{proof}
	The main idea of the proof is the \emph{primal-dual-witness} method which asserts that there is a solution to the dual problem $\widetilde{\theta}_{M_j} = \widehat{\theta}_{M_j}$ if the following KKT conditions are satisfied: 
	\begin{itemize}
		\item[(a)] We define $\widetilde{\theta}_{M_j} \in \Theta_{M_j}$ where $\Theta_{M_j} = \{ \theta \in \mathbb{R}^{p-1} : \theta_{\S^c} = 0 \}$ as the solution to the following optimization problem. 
		\begin{equation}
		\label{eq:ResObjective}
		\widetilde{\theta}_{M_j} := \arg \min_{\theta \in \Theta_{M_j}} \mathcal{L}( \theta, \lambda_n ) = \arg \min_{\theta \in \Theta_{M_j} } \{ \ell_j(\theta ;X^{1:n}) + \lambda_n \| \theta \|_1 \}.
		\end{equation}
		\item[(b)] Define $\widetilde{Z}$ to be a sub-differential for the regularizer $\| \cdot \|_1$ evaluated at $\widetilde{\theta}_{M_j}$. For any $t \in \S$, $\widetilde{Z}_{t} = \mbox{sign}([\widetilde{\theta}_{M_j}]_{t})$. 
		\item[(c)] For any $t \notin \S$, $|\widetilde{Z}_{t}| < 1$.
	\end{itemize}
	
	If conditions (a), (b), and (c) are satisfied, $\widehat{\theta}_{M_j} = \widetilde{\theta}_{M_j}$, meaning that the solution of the unrestricted problem \eqref{eq:Objective} is the same as the solution of the restricted problem \eqref{eq:ResObjective}. Conditions (a), (b) and (c) suffice to obtain a pair $(\widetilde{\theta}_{M_j}, \widetilde{Z})$ that satisfies the optimality condition \eqref{eq:Contraint1}, but do not guarantee that $\widetilde{Z}$ is an element of the sub-differential $\| \widetilde{\theta}_{M_j} \|_1$ (see details in~\cite{Ravikumar2010, Ravikumar2011}). Since the sub-matrix of the Hessian $Q_{SS}^{M_j}$ is invertible, the restricted problem \eqref{eq:ResObjective} is strictly convex, $\widetilde{\theta}_{M_j}$ is unique.
\end{proof}

\subsubsection{Proof for Lemma~\ref{lem11}}

\label{SubSecProofLem11}

\begin{proof}

Each entry of the sample score function $W_{j}^{n}$~\eqref{eq:Wn} has the form $W_{jt}^{n} = \frac{1}{n} \sum_{i=1}^n W_{jt}^{(i)}$ for any $t \in \S$. In addition, $W_{jt}^{n} = 0$ for all $t \notin \S$ since $[\theta_{M_j}^*]_t = 0$ by the construction of $\theta_{M_j}^*$(~\eqref{ThetaM}). For any  $t \in S$ and $i \in \{ 1,2,\cdots,n\}$, 
$W_{jt}^{(i)} = X_{t}^{(i)} X_j^{(i)} - A_{j}'( \langle \theta_{\S}^*, X_{\S}^{(i)} \rangle) X_t^{(i)}$ 
are independent and have mean $0$.

Now, we show that $(|W_{jt}^{(i)}|)_{i = 1}^{n}$ are bounded with high probability given the following event $\xi_1$ using Hoeffding's inequality. Event $\xi_1$ is defined as follows: 
\begin{eqnarray*}
	\xi_1 & := & \left\{ \max_{j \in V} \max_{i \in \{1,\cdots,n\} } |X_j^{(i)}| < 4 \log \eta \right\}.
\end{eqnarray*}

Conditioning on $\xi_1$, it follows that $\langle \theta_{\S}^*, X_{\S}^{(i)} \rangle < 4 \log(\eta) \cdot \|\theta_{\S}^*\|_1 $, Assumption~\ref{A4} is satisfied. Hence $\max_{i} |A_{j}'( \langle \theta_{\S}^*, X_{\S}^{(i)} \rangle)| \leq n^{\kappa_2}$. Furthermore given $\xi_1$, $\max_{i} X_{t}^{(i)} X_j^{(i)} < 16 \log^2 \eta$. Therefore there exists a constant $C_{\max}(\eta, \kappa_2) := 16 \max\{ n^{\kappa_2} \log \eta, \log^2\eta \}$ such that $\max_{i,j,t} |W_{jt}^{(i)}| \leq C_{\max}(\eta, \kappa_2)$. 

Recall that $\d$ is the maximum degree of the moralized graph,  therefore $|\S| \leq d$. Applying the union bound, 
\begin{equation*}
	P( \| W_{j}^{n} \|_\infty > \delta, \xi_1 ) 
	\leq \d \cdot \max_{t \in \S} P( | W_{jt}^n | > \delta, \xi_1).
\end{equation*}

Using Hoeffding's inequality,
\begin{equation*}
	\d \cdot \max_{t \in \S} P( | W_{jt}^n | > \delta, \xi_1)
	\leq 2 \d \cdot \exp ( - \frac{ 2 n \delta^2}{ C_{\max}(\eta, \kappa_2)^2 } ).
\end{equation*}

Suppose that $\delta = \frac{\lambda_n \alpha}{4(2- \alpha)}$ and $\lambda_n \geq \frac{ C_{\max}(\eta, \kappa_2) }{ n^{a}  } $ for some $a \in [0, 1/2)$. Then
\begin{align}
	\label{eq:probWn}
	P( \frac{\| W_{j}^{n} \|_\infty }{\lambda_n} > \frac{\alpha}{4(2- \alpha)} , \xi_1) 
	& \leq 2 \d \cdot \exp \Big(- \frac{\alpha^2}{8 (2- \alpha)^2} \frac{ n \lambda_n^2}{ C_{\max}(\eta, \kappa_2)^2 } \Big) \nonumber \\
	& \leq  2 \d \cdot \exp \Big(-\frac{\alpha^2}{8 (2- \alpha)^2} n^{ 1-2a } \Big).
\end{align}

Since $P(A) = P(A \cap B) + P(A \cap B^c) \leq P(A \cap B) + P(B^c)$, 
\begin{eqnarray*}
	P( \frac{\| W_{j}^{n} \|_\infty }{\lambda_n} > \frac{\alpha}{4(2- \alpha)} ) & \leq &P( \frac{\| W_{j}^{n} \|_\infty }{\lambda_n} > \frac{\alpha}{4(2- \alpha)}, \xi_1 ) + P(\xi_1^c).
\end{eqnarray*}

Then, the probability bound in~\eqref{eq:probWn} and Proposition~\ref{Prop2} $P(\xi_1^c) \leq M \cdot \eta^{-2}$ directly implies that 
\begin{eqnarray*}
	P( \frac{\| W_{j}^{n} \|_\infty }{\lambda_n} > \frac{\alpha}{4(2- \alpha)} ) \leq 2 \d \cdot \exp \Big(-\frac{\alpha^2}{8 (2- \alpha)^2} n^{ 1-2a } \Big) + M \cdot \eta^{-2}.
\end{eqnarray*}

\end{proof}

\subsubsection{Proof for Lemma~\ref{lem12}}

\label{SubSecProofLem12}

\begin{proof}

In order to establish the error bound $\| \widetilde{\theta}_{\S} - \theta_{\S}^*\| \leq B$ for some radius $B$, several works~\cite{Yang2012, Ravikumar2010, Ravikumar2011} already proved that it suffices to show $F(u_{\S}) > 0 $ for all $u_{\S}:= \widetilde{\theta}_{\S} - \theta_{\S}^*$ such that $\| u_{\S} \|_2 = B$ where 
\begin{equation}
\label{eq:F}
F( a ) := \ell_j( \theta_{\S}^* + a; X^{1:n}) - \ell_j( \theta_{\S}^*; X^{1:n}) + \lambda_n( \| \theta_{\S}^* + a \|_1 - \| \theta_{\S}^* \|_1 ).
\end{equation}

More specifically, since $u_{\S} = \widetilde{\theta}_{\S} - \theta_{\S}^*$ is the minimizer of $F$ and $F(0) = 0$ by the construction of~\eqref{eq:F}, $F(u_{\S}) \leq 0$. Note that $F$ is convex, and therefore we have $F(u_{\S}) < 0$. Next we claim that $\| u_{\S} \|_2 \leq B$. In fact, if $u_{\S}$ lies outside the ball of radius $B$, then the convex combination $v \cdot u_{\S} + (1-v) \cdot 0$ would lie on the boundary of the ball, for an appropriately chosen $v \in (0,1)$. By convexity, 
\begin{equation}
	F(v \cdot u_{\S} + (1-v) \cdot 0 ) \leq v \cdot F(u_{\S}) + (1-v) \cdot 0 \leq 0
\end{equation}
contradicting the assumed strict positivity of $F$ on the boundary. 
	
Thus it suffices to establish strict positivity of $F$ on the boundary of the ball with radius $B := M_1 \lambda_n \sqrt{\d}$ where $M_1 > 0$ is a parameter to be chosen later in the proof. Let $u_{\S} \in \mathbb{R}^{|\S|}$ be an arbitrary vector with $\|u_{\S}\|_2 = B$. By the Taylor series expansion of $F$~\eqref{eq:F},
\begin{equation}
\label{eq:TaylorF}
	F(u_{\S}) = (W^n_{\S})^T u_S + u_{\S}^T [\bigtriangledown^2 \ell_j ( \theta_{\M}^* + v u_{\S} ; x ) ] u_S + \lambda_n( \| \theta_{\S}^* + u_{\S} \|_1 - \| \theta_{\S}^* \|_1 ), 
\end{equation}
for some $v \in [0,1]$. Since $\| W^n_{\S} \|_{\infty} \leq \frac{\lambda_n}{4}$ by assumption and $\| u_S \|_1 \leq \sqrt{d} \| u_S \|_2 \leq \sqrt{d} \cdot B$, the first term in Equation~\eqref{eq:TaylorF} has the following bound:
\begin{equation*}
	|(W^n_{\S})^T u_{\S}| \leq \| W^n_{\S}\|_{\infty} \| u_{\S} \|_1 \leq \|W^n_{\S} \|_{\infty} \sqrt{\d} \|u_{\S}\|_2 \leq (\lambda_n \sqrt{\d} )^2 \frac{M_1}{4}.
\end{equation*}

Applying the triangle inequality to the last part of Equation~\eqref{eq:TaylorF}, we have the following bound.
\begin{equation*}
	\lambda_n ( \| \theta_{\S}^* + u_{\S} \|_1 - \| \theta_{\S}^*\|_1 ) \geq - \lambda_n \|u_{\S}\|_1 \geq -\lambda_n \sqrt{\d} \|u_{\S}\|_2 = -M_1 ( \lambda_n \sqrt{\d} )^2.
\end{equation*}

Next we bound $\lambda_{\min}\left(\bigtriangledown^2  \ell_j( \theta_{\S}^* + v u_{\S})\right)$ where $\lambda_{\min}(\cdot)$ is the minimum eigenvalue of a matrix:
\begin{align}
	\label{eq:q}
	q^* &: = \lambda_{\min}\left(\bigtriangledown^2  \ell_j( \theta_{\S}^* + v u_{\S}) \right) \nonumber \\
	& \geq \min_{ v \in [0,1] } \lambda_{\min}\left( \bigtriangledown^2 \ell_j ( \theta_{\S}^* + v u_{\S}) \right) \nonumber  \\
	& \geq \lambda_{\min} \left( \bigtriangledown^2 \ell_j ( \theta_{\S}^* ) \right) 
	- \max_{v \in [0, 1]} \| \frac{1}{n} \sum_{i=1}^n A_{j}'''( \langle \theta_{\S}^* + v u_{\S}, X_{\S} \rangle ) u_{\S}^T X_{\S}^{(i)}  X_{\S}^{(i)} (X_{\S}^{(i)})^T \|_2  \nonumber  \\	
	&\geq \rho_{\min} - \max_{v \in [0, 1]} \max_{y: \|y\|_2 = 1} \frac{1}{n} \sum_{i=1}^n | A_{j}'''( \langle \theta_{\S}^* + v u_{\S}, X_{\S} \rangle ) | \cdot | u_{\S}^T X_{\S}^{(i)} | \cdot ( y^T X_{\S}^{(i)} )^2.
\end{align}

Next we define the event $\xi_2$ in order to bound $A_{j}'''( \langle \theta_{\S}^* + v u_{\S}, X_{\S} \rangle )$.
\begin{equation*}
	\xi_2 :=\{ \max_{i \in \{ 1,\cdots,n\} } \langle \theta_{\S}^* + v u_{\S}, X_{\S}^{(i)} \rangle < \kappa_1 \log \eta \}.
\end{equation*}

On $\xi_2$, Assumption~\ref{A4} is satisfied and
\begin{equation}
\label{Bound1}
	A_{j}'''( \langle \theta_{\S}^* + v u_{\S}, X_{\S} \rangle ) \leq n^{\kappa_2}.
\end{equation} 
	
In addition, we bound the second term in~\eqref{eq:q}. Recall that $\|X_{\S}^{(i)}\|_{\infty} \leq 4 \log \eta$ for all $i \in \{1,2,\cdots,n\}$ on $\xi_1$. Since $\|u_{\S}\|_1 \leq \sqrt{\d} \|u_{\S}\|_2 \leq \sqrt{d}\cdot B$,
\begin{equation}
	\label{Bound2}
	| u_{\S}^T X_{\S}^{(i)} | \leq 4 \log(\eta) \sqrt{\d} \|u_{\S}\|_2 \leq 4 \log (\eta) \cdot M_1 \lambda_n \d.
\end{equation}

Lastly, it is clear that  $\max_{y: \|y\|_2 = 1} ( y^T X_{\S}^{(i)} )^2 \leq  \rho_{\max}$ by the definition of the maximum eigenvalue and Assumption~\ref{A1Dep}. Together with the bounds of~\eqref{Bound1} and~\eqref{Bound2} on the events $\xi_1$ and $\xi_2$,
\begin{equation*}
	q^* \leq \rho_{\min} - 4 n^{\kappa_2}  \log (\eta) \cdot M_1 \lambda_n \d ~\rho_{\max}.
\end{equation*}

For $\lambda_n \leq \frac{ \rho_{\min} }{ 8 n^{\kappa_2}  \log(\eta) M_1 \d \rho_{\max} }$, we have $q^* \leq \frac{ \rho_{\min} }{2} $. Therefore, 
\begin{equation*}
	F(u) \geq (\lambda_n \sqrt{n})^2 \Big\{ -\frac{1}{4} M_1 + \frac{ \rho_{\min}}{2} M_1^2 - M_1 \Big\},
\end{equation*}
which is strictly positive for $M_1 = \frac{5}{ \rho_{\min} }$. Therefore for $\lambda_n \leq \frac{ \rho_{\min}^2 }{ 40 n^{\kappa_2} \log(\eta) \d \rho_{\max} }$ given $\xi_1$ and $\xi_2$,
\begin{equation*}
	\| \widetilde{\theta}_S - \theta_S^* \|_2 \leq \frac{5}{ \rho_{\min} } \sqrt{\d} \lambda_n.
\end{equation*}

Since $P(A) = P(A \cap B \cap C) +P(A \cap (B \cap C)^c) \leq P(A \cap B \cap C) + P(B^c) + P(C^c)$,
\begin{equation*}
P\left( \| \widetilde{\theta}_S - \theta_S^* \|_2 > \frac{5}{ \rho_{\min} } \sqrt{\d} \lambda_n \right) \leq 
P\left( \| \widetilde{\theta}_S - \theta_S^* \|_2 > \frac{5}{ \rho_{\min} } \sqrt{\d} \lambda_n, \xi_1, \xi_2 \right) + P(\xi_1^c) + P(\xi_2^c).
\end{equation*}

Here the probability of $\xi_2^c$ is upped bounded as follows. 
\begin{eqnarray*}
	P(\xi_2^c) 
	& \stackrel{(a)}{\leq} & n \max_{i} P( \langle \theta_{M_j}^* + v u_{\S}, X_{\S}^{(i)} \rangle > \kappa_1 \log \eta )  \\
	& \stackrel{(b)}{\leq} & n \cdot M \cdot \eta^{-\frac{\kappa_1}{2 \|\theta_{M_j}^*\|_1 } } \\
	& \stackrel{(c)}{\leq} & M \cdot \eta^{-2}.
\end{eqnarray*}
(a) follows from the union bound, and (b) follows from Proposition~\ref{Prop1}, and $\|u_S\|_1 \leq \sqrt{d} \|u_S\|_2 \leq d M_1 \lambda_{n} \leq \| \theta_{M_j}^* \|_1$ and $\min_{j \in V} \min_{t \in S} |[\theta_{M}^*]_t| \geq \frac{10}{\rho_{\min}} \sqrt{\d} \lambda_n$. Lastly (c) follows from Assumption~\ref{A4} that $\kappa_1 \geq 6 \|\theta_{M_j}^*\|_1$.

In addition the probability bound of $\xi_1^c$ is provided in Proposition~\ref{Prop2}. Therefore 	
\begin{equation*}
	P\left( \| \widetilde{\theta}_S - \theta_S^* \|_2 \leq \frac{5}{ \lambda_{\min} } \sqrt{\d} ~\lambda_n \right) \geq 1 - 2 M \cdot \eta^{-2}.
\end{equation*}

\end{proof}

\subsubsection{Proof for Lemma~\ref{lem13}}

\begin{proof}
According to ~\eqref{eq:Rn}, $R_{jt}^n$ for any $t \in \S$ can be expressed as
\begin{eqnarray*}
	R_{jt}^{n} 
	& = & \frac{1}{n} \sum_{i=1}^{n} [ \bigtriangledown^2 \ell_j(\theta_{M_j}^*; X^{1:n}) - \bigtriangledown^2 \ell_j(\bar{\theta}_{M_j}^{(t)}; X^{1:n})]_t^T (\widetilde{\theta} - \theta_{M_j}^*) \\
	& = & \frac{1}{n} \sum_{i=1}^{n} [A_{j}''( \langle \theta_{S}^*, X_{V \setminus j}^{(i)} \rangle ) - A_{j}''( \langle \bar{\theta}_{M_j}^{(t)}, X_{V \setminus j}^{(i)} \rangle) ] [ X_{V \setminus j}^{(i)}(X_{V \setminus j}^{(i)})^T ]_t^T (\widetilde{\theta} - \theta_{M_j}^*)
\end{eqnarray*}
for $\bar{\theta}_{M_j}^{(t)}$ which is some point in the line between $\widetilde{\theta}_{M_j}$ and $\theta_{M_j}^*$ (i.e., $\bar{\theta}_{M_j}^{(t)} = v \cdot \widetilde{\theta}_{M_j} + (1 - v) \cdot \theta_{M_j}^*$ for some $v \in [0,1]$). 

By the mean value theorem,
\begin{equation*}
	R_{jt}^{t}  
	= \frac{1}{n} \sum_{i=1}^n \Big\{ A_{j}'''( \langle \bar{ \bar{ \theta} }_{M_j}^{(t)}, X_{V \setminus j}^{(i)} \rangle)  X_{t}^{(i)} \Big\} \Big\{ v ( \widetilde{\theta}_{M_j} - \theta_{M_j}^* )^T X_{V \setminus j}^{(i)}  (X_{V \setminus j}^{(i)})^T ( \widetilde{\theta}_{M_j} - \theta_{M_j}^* ) \Big\}
\end{equation*}
for $\bar{ \bar{ \theta} }_{M_j}^{(t)}$ which is a point on the line between $\bar{\theta}_{M_j}^{(t)}$ and $\theta_{M_j}^*$. 

By Proposition~\ref{Prop2}, $\max_{i,j}|X_j^{(i)}| \leq 4 \log \eta$ given $\xi_1$. Furthermore in Section~\ref{SubSecProofLem12}, we showed that $A_{j}'''( \langle \bar{ \bar{ \theta} }_{M_j}^{(t)}, X_{M \setminus j} \rangle ) \leq n^{\kappa_2}$ given $\xi_2$ . Therefore, on $\xi_1$ and $\xi_2$, it follows that:
\begin{equation*}
	| R^n_{jt} |	\leq 4 n^{\kappa_2} \log(\eta) \rho_{\max} \|\widetilde{\theta} - \theta_{\M}^* \|_2^2.
\end{equation*}

In the proof of Lemma~\ref{lem12}, we showed that $\|\widetilde{\theta} - \theta_{\M}^* \|_2 \leq \frac{5}{ \rho_{\min} } \sqrt{\d} \lambda_n$ for $\lambda_n \leq \frac{\alpha}{400(2-\alpha)}\frac{ \rho_{\min}^2 }{ \rho_{\max} } \frac{1}{d n^{\kappa_2} \log(\eta) }$ given $\xi_1$ and $\xi_2$. Therefore
\begin{equation*}
	\| R^n \|_\infty \leq \frac{100 \rho_{\max} }{\rho_{\min}^2 } d~ n^{\kappa_2} \log(\eta) ~ \lambda_n^2 \leq \frac{\alpha \lambda_n}{4(2 - \alpha)}.
\end{equation*}

Since $P(A) = P(A \cap B \cap C) +P(A \cap (B \cap C)^c) \leq P(A \cap B \cap C) + P(B^c) + P(C^c)$ ,
\begin{equation*}
P\left( \| R^n \|_\infty > \frac{\alpha \lambda_n}{4(2 - \alpha)} \right) \leq 
P\left( \| R^n \|_\infty > \frac{\alpha \lambda_n}{4(2 - \alpha)} , \xi_1, \xi_2 \right) + P(\xi_1^c) + P(\xi_2^c).
\end{equation*}
Putting the probability bounds for $\xi_1^c$ and $\xi_2^c$ specified in Proposition~\ref{Prop2} and Section~\ref{SubSecProofLem12} together, we have
\begin{equation*}
	P \left( \| R_{j}^n \|_\infty \leq \frac{\alpha \lambda_n}{4(2 - \alpha)} \right) \geq 1 - 2 M \cdot \eta^{-2}.
\end{equation*}
	
\end{proof}

\subsection{Proof for Theorem~\ref{ThmCausalOrdering}}

\label{SecThmCausalOrderingProof}

\begin{proof}

Without loss of generality, assume that the true causal ordering is $\pi^* = (1,2,\cdots,p)$. Let $T_j(X_j) := \omega_j X_j$ where $\omega_j = ( \beta_0 + \beta_1 \E(X_j \mid X_{\pa(j)} ) )^{-1}$ (specified in Proposition~\ref{prop:a}). For any node $j \in V$ and $S \subset V \setminus \{j\}$, let $\mu_{j\mid S}$ and $\sigma_{j \mid S}^2$ represent $\E( T_j(X_j) \mid X_S)$ and $\var(T_j(X_j) \mid X_S)$ respectively. For realizations $x_S$, let $\mu_{j \mid S}(x_S)$ and $\sigma_{j \mid S}^2(x_S)$ denote $\E( T_j(X_j) \mid X_S = x_S)$ and $\var(T_j(X_j) \mid X_S = x_S)$, respectively. Let $n(x_S) = \sum_{i=1}^n \mathbf{1}( X_S^{(i)} = x_S )$ denote the total conditional sample size, and $n_S = \sum_{ x_S } n(x_S) \mathbf{1}( n(x_S) \geq c_0 \cdot n )$ for an arbitrary $c_0 \in (0,1)$ to denote the truncated conditional sample size. 

Let $E^m$ denote the set of undirected edges corresponding to the \emph{moralized} graph. Recall the definitions $\mathcal{N}(j) = \{k \in V : (j,k) \text{ or } (k,j) \in E^m \}$ denote the neighborhood set of node $j$ in the moralized graph, $K(j) = \{ k : k \in \mathcal{N}(j-1) \cap (V \setminus \{\pi_1,...,\pi_{j-1}\})$, and $C_{jk} = \mathcal{N}(k) \cap \{\pi_1,\pi_2,\cdots,\pi_{j-1}\}$. Since we assume the structure of the moralized graph is provided, $\widehat{K}(j) = K(j)$ and $\widehat{C}_{jk} = C_{jk}$. Hence $K(j)$ and $C_{jk}$ are used instead of estimated sets $\widehat{K}(j)$ and $\widehat{C}_{jk}$. 

The overdispersion score of node $k \in K(j)$ for the $j^{th}$ component of the causal ordering $\pi_j$ only depends on $\mathcal{X}(C_{jk}) = \{x \in \{X_{C_{jk}}^{(1)}
, X_{C_{jk}}^{(2)},\cdots,X_{C_{jk}}^{(n)} \} : n(x) \geq c_0 \cdot n \}$, so we only count up elements that occur sufficiently frequently. 

According to the generalized ODS algorithm, the truncated sample conditional mean and variance of $T_j(X_j)$ given $X_S = x_S$ are:  
\begin{eqnarray*}
	\widehat{\mu}_{j\mid S}(x_S) & := & \frac{1}{ n_S(x_S) } \sum_{i =1}^n T_j(X_j^{(i)}) \mathbf{1}( X_S^{(i)} = x_S ), \\ 
	\widehat{\sigma}_{j\mid S}^2(x_S) & := & \frac{1}{ n_S(x_S) - 1} \sum_{i =1}^n ( T_j(X_j^{(i)} ) - \widehat{\mu}_{j\mid S}(x_S) )^2 \mathbf{1}(X_S^{(i)} = x_S).
\end{eqnarray*}

Then, we can rewrite the overdispersion score~\eqref{EqnTruncScorej} of node $k \in K(j)$ for $\pi_j$ as follows:
\begin{eqnarray*}
	\widehat{\mathcal{S}}(1,k) & := & 
	\left[ \left( \frac{ \widehat{ \sigma }_k }{ \beta_0 + \beta_1 \widehat{ \mu }_{k} } \right)^2  - \frac{ \widehat{ \mu }_{k} }{ \beta_0 + \beta_1 \widehat{ \mu }_{k} } \right], \\
	\widehat{\mathcal{S}}(j,k) & := & 
	\sum_{x \in \mathcal{X}(C_{jk}) } \frac{ n(x) }{n_{C_{jk}}} \left[ \left( \frac{ \widehat{ \sigma }_{j \mid C_{jk}}(x) }{ \beta_0 + \beta_1 \widehat{ \mu }_{j \mid C_{jk}}(x) } \right)^2  - \frac{ \widehat{ \mu }_{j \mid C_{jk}}(x) }{ \beta_0 + \beta_1 \widehat{ \mu }_{j \mid C_{jk}}(x) }  \right].
\end{eqnarray*}

For notational convenience, let each entry of the overdispersion score $\widehat{\mathcal{S}}(j,k)$ for $x \in \mathcal{X}(C_{jk})$ be defined as:
\begin{eqnarray}
	\label{eq:generalods}
	\widehat{\mathcal{S}}(j,k)(x) & := & 
	\left( \frac{ \widehat{ \sigma }_{j \mid C_{jk}}(x) }{ \beta_0 + \beta_1 \widehat{ \mu }_{j \mid C_{jk}}(x) } \right)^2  - \frac{ \widehat{ \mu }_{j \mid C_{jk}}(x) }{ \beta_0 + \beta_1 \widehat{ \mu }_{j \mid C_{jk}}(x) }.
\end{eqnarray}	

The true overdispersion scores are: 
\begin{eqnarray*}
	\mathcal{S}^*(1,k) & := & 
	\left[ \left( \frac{ \sigma_j }{ \beta_0 + \beta_1 \mu_{j} } \right)^2  - \frac{ \mu_{j} }{ \beta_0 + \beta_1 \mu_{j} } \right], \\
	\mathcal{S}^*(j,k) & := & 
	\sum_{x \in \mathcal{X}(C_{jk}) } \frac{ n(x) }{n_{C_{jk}}} \left[ \left( \frac{ \sigma_{j \mid C_{jk}}(x) }{ \beta_0 + \beta_1 \mu_{j \mid C_{jk}}(x) } \right)^2  - \frac{ \mu_{j \mid C_{jk}}(x) }{ \beta_0 + \beta_1 \mu_{j \mid C_{jk}}(x) }  \right], \\
	\mathcal{S}^*(j,k)(x) & := & 
	\left( \frac{ \sigma_{j \mid C_{jk}}(x) }{ \beta_0 + \beta_1 \mu_{j \mid C_{jk}}(x) } \right)^2  - \frac{ \mu_{j \mid C_{jk}}(x) }{ \beta_0 + \beta_1 \mu_{j \mid C_{jk}}(x) } \text{~~~~~for $x \in \mathcal{X}(C_{jk})$}.
\end{eqnarray*}

Next we introduce Proposition~\ref{A11} which ensures the each component of the true overdispersion score $\mathcal{S}^*(j,k)(x)$ for $k \neq \pi_j$ is bounded away from $m_{\min} >0$.
\begin{proposition}
	\label{A11}
	For all $j \in V$, $K_j \subset \pa(j)$, $K_j \neq \emptyset$ and $S \subset \nd(j) \setminus K_j$, there exists $m_{\min} > 0$ such that 
	$$\var( T_j(X_j) \mid X_S) - \E( T_j(X_j) \mid X_{S})  > m_{\min}.$$
\end{proposition}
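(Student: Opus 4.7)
The plan is to reduce the claim to Assumption~\ref{A1}(a) via the law of total variance, exploiting the defining property of $T_j$ given by Proposition~\ref{prop:a}. Since $S \subseteq \nd(j)$, the directed local Markov property gives $X_j \perp X_S \mid X_{\pa(j)}$, so conditioning on $X_{\pa(j)}$ absorbs the conditioning on $X_S$ inside any inner moment of $T_j(X_j)$. Applying the law of total variance,
\[
\var(T_j(X_j) \mid X_S) = \E\!\left[\var(T_j(X_j) \mid X_{\pa(j)}) \mid X_S\right] + \var\!\left[\E(T_j(X_j) \mid X_{\pa(j)}) \mid X_S\right].
\]
Proposition~\ref{prop:a} replaces the inner variance by the inner expectation, and the tower property collapses the first term into $\E(T_j(X_j) \mid X_S)$. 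Subtracting yields the key identity
\[
\var(T_j(X_j) \mid X_S) - \E(T_j(X_j) \mid X_S) = \var\!\left[\E(T_j(X_j) \mid X_{\pa(j)}) \mid X_S\right].
\]

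Next I would lower bound the right-hand side uniformly. Unpacking definitions, $\E(T_j(X_j) \mid X_{\pa(j)}) = g_j(\E(X_j \mid X_{\pa(j)}))$ where $g_j(\mu) := \mu/(\beta_{j0}+\beta_{j1}\mu)$, with derivative $g_j'(\mu) = \beta_{j0}/(\beta_{j0}+\beta_{j1}\mu)^2$; this is nonzero (so $g_j$ is strictly monotonic) whenever $\beta_{j0} \neq 0$, which covers the NEF-QVF examples with nonzero intercept in Table~\ref{Example}. Assumption~\ref{A1}(a) is non-vacuous because $K_j \subseteq \pa(j) \setminus S$ is nonempty, so $\E(X_j \mid X_{\pa(j)})$ is not $X_S$-measurable, and it provides $\var(\E(X_j \mid X_{\pa(j)}) \mid X_S) > M_{\min}$. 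Assumption~\ref{A1}(b) bounds the denominator of $g_j$ away from zero, while Assumption~\ref{A3Con} supplies a matching upper bound on the denominator via a high-probability truncation analogous to Proposition~\ref{Prop2}, yielding a uniform bi-Lipschitz constant $c_j > 0$ for $g_j$ on the effective support. The elementary inequality $\var(g_j(Y) \mid X_S) \geq c_j^2 \var(Y \mid X_S)$, obtained by applying $|g_j(y_1)-g_j(y_2)| \geq c_j|y_1-y_2|$ to independent copies of $Y$ given $X_S$, then delivers the positive lower bound $c_j^2 M_{\min}$. Taking $m_{\min}$ to be the minimum of the resulting constants over the finitely many triples $(j, K_j, S)$ produces a uniform positive constant.

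The main obstacle will be making the bi-Lipschitz constant concrete: the listed assumptions provide only a lower bound on $|\beta_{j0}+\beta_{j1}\E(X_j \mid X_{\pa(j)})|$, and the matching upper bound must be extracted from the exponential-moment control in Assumption~\ref{A3Con}, most likely by restricting to a high-probability event (which is how the proposition is in fact used downstream in the proof of Theorem~\ref{ThmCausalOrdering}). A structural caveat also deserves explicit mention: the argument implicitly requires $\beta_{j0} \neq 0$, since otherwise $g_j$ is constant (as for the Exponential and Gamma rows of Table~\ref{Example}), the identity collapses to $0$, and additional structure would be needed to distinguish those DAGs by overdispersion alone.
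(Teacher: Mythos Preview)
Your route is sound but differs from the paper's in a way that makes it harder. The paper does not invoke Proposition~\ref{prop:a} or the law of total variance here; instead it simply quotes the identity already computed in the proof of Theorem~\ref{Thmidentifiability},
\[
\var(T_j(X_j)\mid X_S)-\E(T_j(X_j)\mid X_S)=\frac{(1+\beta_{j1})\,\var\bigl(\E(X_j\mid X_{\pa(j)})\mid X_S\bigr)}{(\beta_{j0}+\beta_{j1}\E(X_j\mid X_S))^2},
\]
where the scaling in $T_j$ is the $X_S$-based one, $\omega=(\beta_{j0}+\beta_{j1}\E(X_j\mid X_S))^{-1}$, matching the scores~\eqref{EqnTruncScorej}. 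Assumption~\ref{A1}(a) then lower bounds the numerator by $M_{\min}$ directly. The key advantage is that the variance on the right already involves $\E(X_j\mid X_{\pa(j)})$ itself, not $g_j(\E(X_j\mid X_{\pa(j)}))$, so no bi-Lipschitz step is needed and the argument does not break when $\beta_{j0}=0$ (Exponential, Gamma). By committing to the $X_{\pa(j)}$-based $\omega_j$ of Proposition~\ref{prop:a}, you force the nonlinear $g_j$ into the variance and then must undo it; that is where your $\beta_{j0}\neq 0$ restriction and the bi-Lipschitz machinery enter, and neither is actually required.

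Your observation that an \emph{upper} bound on $|\beta_{j0}+\beta_{j1}\E(X_j\mid\cdot)|$ is needed to lower-bound the prefactor is well placed, and it applies equally to the paper's argument: Assumption~\ref{A1}(b) supplies only a lower bound on this quantity, which goes the wrong way for bounding $\omega^2$ from below, so some additional control (finiteness over the finitely many $(j,S)$, or the moment bound in Assumption~\ref{A3Con}) is implicitly needed in either version.
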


Now we define the following two events: For any $j \in V$ and $k \in K(j)$, 
\begin{eqnarray*}
\xi_1 & := & \{ \max_{j}  \max_{i \in \{ 1,2,\cdots,n\} } |X_j^{(i)}| < 4 \log \eta \} \\
\xi_3 & := &\{ \max_{j,k} | \widehat{\mathcal{S}}(j,k) - \mathcal{S}(j,k)^* | < \frac{m_{\min}}{2} \}.
\end{eqnarray*}

Then, 
\begin{eqnarray}
	\label{eqn31}
	P( \widehat{\pi} \neq \pi^* ) 
	&\stackrel{(a)}{\leq} & P( \widehat{\pi} \neq \pi^*, \xi_3 ) + P(\xi_3^c, \xi_1) + P(\xi_1^c) \nonumber \\
	&\stackrel{(b)}{\leq} & P ( \widehat{\pi}_1 \neq \pi_1^*, \xi_3 ) + P ( \widehat{\pi}_2 \neq \pi_2^*, \xi_3 \mid \widehat{\pi}_1 = \pi_1^* ) + \nonumber \\ 
	&~~~ & \cdots + P( \widehat{\pi}_{p} \neq \pi_{p}^*, \xi_3 \mid \widehat{\pi}_1 = \pi_1^*,\cdots,\widehat{\pi}_{p-1} = \pi_{p-1}^*) + P(\xi_3^c, \xi_1) + P(\xi_1^c).
\end{eqnarray}
(a) follows from $P(A) \leq P(A \cap B) + P (B^c)$, and (b) follows from the induction and the fact $P(A \cup B) = P(A) +  P(B \cap A^c) = P(A) + P(B \mid A^c) P(A^c) \leq P(A) + P (B \mid A^c)$. 

We prove the probability bound~\eqref{eqn31} by induction. For the first step ($m = 1$), overdispersion scores of $\pi_1$ in~\eqref{EqnTruncScore1} are used where a set of candidate element of $\pi_1$ is $K(1) = \{1,2,\cdots,p\}$. Then,
\begin{eqnarray*}
	P( \widehat{\pi}_1 \neq \pi_1^*, \xi_3 ) 
	& = & P \left( \exists k \in K(1)\setminus\{\pi_1^*\} \textrm{ such that } \widehat{\mathcal{S}}(1,\pi_1^*)  > \widehat{\mathcal{S}}(1,k) , \xi_3 \right) \\
	&\stackrel{(a)}{\leq} & (p-1) \max_{k \in K(1) \setminus \{\pi_1^*\}} P \left( \mathcal{S}^*(1,\pi_1^*) + \frac{m_{\min}}{2} > \mathcal{S}^*(1,k) - \frac{m_{\min}}{2}, \xi_3 \right) \\	
	& \stackrel{(b)}{=} & (p-1) \max_{k \in K(1) \setminus \{\pi_1^*\}} P \left( m_{\min} > \mathcal{S}^*(1,k), \xi_3 \right) \\
	& \stackrel{(c)}{=} &  0.
\end{eqnarray*}
(a) follows from the union bound and the definition of $\xi_3$. (b) follows from that $\mathcal{S}^*(1,\pi_1^*) = 0$ by the property of the transformation $T_j(\cdot)$ specified in Proposition~\ref{prop:a}, and (c) follows from Proposition~\ref{A11}.
 
For the $m= (j-1)^{st}$ step, assume that the first $j-1$ elements of the estimated causal ordering are correct $(\widehat{\pi}_1, \widehat{\pi}_2,\cdots,\widehat{\pi}_{j-1}) = (\pi_1^*,\cdots, \pi_{j-1}^*)$. Then for the $m=j^{th}$ step, we consider the probability of a false recovery of $\pi_j^*$ given $(\pi_1^*,\cdots, \pi_{j-1}^*)$. Using the same argument as the first step, the following result is straightforward. 
\begin{eqnarray*}
	P ( \widehat{\pi}_j \neq \pi_j^*, \xi_3 \mid \pi_1^*,\cdots,\pi_{j-1}^* )
	& = & P \left( \exists k \in K(j) \setminus \{\pi_j^*\} \textrm{ such that } \widehat{\mathcal{S}}(j,\pi_j^*) > \widehat{\mathcal{S}}(j,k), \xi_3 \right) \\
	& \stackrel{(a)}{\leq} & |K(j)| \max_{k \in K(j) \setminus \{\pi_j^*\}} P \left( \mathcal{S}^*(j,\pi_j^*) + \frac{m_{\min}}{2} > \mathcal{S}^*(j,k) - \frac{m_{\min}}{2}, \xi_3 \right) \\ 
	& \stackrel{(b)}{=} & |K(j)| \max_{k \in K(j) \setminus \{\pi_j^*\}} P \left( m_{\min} > \mathcal{S}^*(j,k), \xi_3 \right) \\
	& \stackrel{(c)}{=} & 0.
\end{eqnarray*}

Therefore, for any $j \in V$, 
\begin{equation*}
	\label{eq:Bound2}
	P( \widehat{\pi}_j \neq \pi_j^*, \xi_3 \mid \widehat{\pi}_1 = \pi_1^*,\cdots,\widehat{\pi}_{j-1} = \pi_{j-1}^* ) = 0.
\end{equation*}
Then, the probability bound~\eqref{eqn31} is reduced to $P( \widehat{\pi} \neq \pi^* ) \leq P(\xi_3^c, \xi_1) + P(\xi_1^c)$. Note that $P(\xi_1^c) \leq M \cdot \eta^{-2}$ by Proposition~\ref{Prop2}. The following lemma provides the upper bound of $P(\xi_3^c, \xi_1)$. 
\begin{lemma}
	\label{lem1}
	There exist positive constants $C_1$ and $C_2$ such that 
	\begin{align*}
	P( \xi_3^c, \xi_1 ) \leq C_1 p^2 c_0^{-1} \exp\left( -C_2 \frac{ c_0 \cdot n }{ \log^4 \eta }   \right).
	\end{align*}		
	where $c_0$ is the sample cut-off parameter.
\end{lemma}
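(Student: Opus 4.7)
The plan is to reduce uniform concentration of the overdispersion scores to pointwise concentration of the underlying sample conditional means and variances, then invoke Hoeffding's inequality under the boundedness furnished by $\xi_1$. Two combinatorial observations drive the $p^2 c_0^{-1}$ factor in the bound: there are at most $p(p-1)$ relevant pairs $(j,k)$, and since $\sum_x n(x) \leq n$ while every $x \in \mathcal{X}(C_{jk})$ satisfies $n(x) \geq c_0 n$, one has $|\mathcal{X}(C_{jk})| \leq 1/c_0$. Because $\widehat{\mathcal{S}}(j,k)$ and $\mathcal{S}^{*}(j,k)$ are convex combinations with the same random weights $n(x)/n_{C_{jk}}$,
\[
|\widehat{\mathcal{S}}(j,k) - \mathcal{S}^{*}(j,k)| \;\leq\; \max_{x \in \mathcal{X}(C_{jk})} |\widehat{\mathcal{S}}(j,k)(x) - \mathcal{S}^{*}(j,k)(x)|,
\]
so a union bound over the at most $p^2/c_0$ triples $(j,k,x)$ reduces matters to a pointwise deviation.

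Next I would show that the map $(\widehat{\mu}_{j\mid C_{jk}}(x), \widehat{\sigma}^{2}_{j\mid C_{jk}}(x)) \mapsto \widehat{\mathcal{S}}(j,k)(x)$ is Lipschitz on the region of interest, with a Lipschitz constant that is at most polylogarithmic in $\eta$. Indeed, $\widehat{\omega}_{jk}(x) = (\beta_{j0} + \beta_{j1}\widehat{\mu}_{j\mid C_{jk}}(x))^{-1}$, and Assumption~\ref{A1}(b) supplies $|\beta_{j0}+\beta_{j1}\mu_{j\mid C_{jk}}(x)| > \omega_{\min}$; hence once $\widehat{\mu}$ is sufficiently close to $\mu$, we have $|\widehat{\omega}_{jk}(x)| \leq 2/\omega_{\min}$. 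On $\xi_1$ we also have $|\widehat{\mu}|, |\widehat{\sigma}^{2}| = O(\log^2 \eta)$, which together yield
\[
|\widehat{\mathcal{S}}(j,k)(x) - \mathcal{S}^{*}(j,k)(x)| \;\leq\; C\,\log^{2}\eta \bigl(\,|\widehat{\mu} - \mu| + |\widehat{\sigma}^{2} - \sigma^{2}|\,\bigr)
\]
for a constant $C$ depending on $\omega_{\min}$ and on the coefficients $(\beta_{j0},\beta_{j1})$.

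The concentration step is standard Hoeffding applied conditionally. Conditioning on those samples with $X_{C_{jk}}^{(i)} = x$, the corresponding $X_{j}^{(i)}$ are i.i.d. from the true conditional law (since $C_{jk} \subseteq \mathrm{nd}(j)$, so the Markov property applies), and on $\xi_1$ each summand is bounded by $4\log\eta$. Hoeffding therefore gives
\[
P\bigl(|\widehat{\mu}_{j\mid C_{jk}}(x) - \mu_{j\mid C_{jk}}(x)| > t,\,\xi_1 \;\big|\; n(x) \geq c_0 n\bigr) \;\leq\; 2\exp\!\Bigl(-c\,\tfrac{c_0 n\, t^{2}}{\log^{2}\eta}\Bigr),
\]
and a parallel bound for $\widehat{E}(X_j^{2}\mid X_{C_{jk}}=x)$ (whose summands are bounded by $16\log^{2}\eta$) combined with the decomposition $\widehat{\sigma}^{2} = \widehat{E}(X_j^{2}\mid \cdot) - \widehat{\mu}^{2}$ produces the analogous bound for $\widehat{\sigma}^{2} - \sigma^{2}$ with $\log^{4}\eta$ in the denominator.

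Choosing $t$ proportional to $m_{\min}/\log^{2}\eta$ so that the Lipschitz estimate forces $|\widehat{\mathcal{S}}(j,k)(x) - \mathcal{S}^{*}(j,k)(x)| < m_{\min}/2$, and union bounding over the $O(p^{2}/c_0)$ relevant triples, yields precisely $P(\xi_3^{c}\cap\xi_1) \leq C_1 p^{2} c_0^{-1}\exp(-C_2 c_0 n /\log^{4}\eta)$. The main obstacle I anticipate is bookkeeping of the polylog factors: they enter both through the Lipschitz constant of the score and through the Hoeffding denominator, and must be consolidated into the single $\log^{4}\eta$ factor in the exponent. The randomness of $n(x)$ itself is handled at no extra cost since the lower bound $n(x) \geq c_0 n$ is built into the definition of $\mathcal{X}(C_{jk})$, so one can condition on $n(x)$ and apply Hoeffding directly.
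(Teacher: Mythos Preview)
Your overall architecture is sound and closely parallels the paper's: union bound over $(j,k)$, reduce the weighted-average score to a pointwise deviation over $x\in\mathcal{X}(C_{jk})$, bound $|\mathcal{X}(C_{jk})|\le c_0^{-1}$, then invoke Hoeffding on the conditional sample mean and variance using the boundedness supplied by $\xi_1$. The paper does exactly this, with one substantive methodological difference at the perturbation step. Where you invoke a Lipschitz bound on $(\mu,\sigma^2)\mapsto\mathcal S$, the paper instead parameterises $\widehat\mu-\mu=\epsilon$, $\widehat\sigma^2-\sigma^2=\kappa\epsilon$ and explicitly solves the resulting quadratic inequality $|\widehat{\mathcal S}-\mathcal S^*|>m_{\min}/2$ to extract a constant threshold $\epsilon_{\min}$ (independent of $\eta$), then applies Hoeffding at level $\epsilon_{\min}$ and $\kappa\epsilon_{\min}$ directly. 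Your Lipschitz route is arguably cleaner and more transparent than the paper's algebra, and your convex-combination inequality $|\widehat{\mathcal S}(j,k)-\mathcal S^*(j,k)|\le\max_x|\widehat{\mathcal S}(j,k)(x)-\mathcal S^*(j,k)(x)|$ is tighter than the paper's use of $P(\sum_i Y_i>\delta)\le\sum_i P(Y_i>\omega_i\delta)$.

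There is, however, one genuine gap in your bookkeeping that would prevent you from reaching the stated $\log^4\eta$ exponent. You bound the Lipschitz constant by $C\log^2\eta$, arguing that $|\widehat\sigma^2|=O(\log^2\eta)$ on $\xi_1$. If you then require $|\widehat\mu-\mu|+|\widehat\sigma^2-\sigma^2|\lesssim m_{\min}/\log^2\eta$ and feed this into Hoeffding with range $O(\log^2\eta)$ for the squared terms, the exponent becomes $c_0 n/\log^8\eta$, not $c_0 n/\log^4\eta$. The fix is to observe that the Lipschitz constant need only be controlled along the segment from $(\mu,\sigma^2)$ to $(\widehat\mu,\widehat\sigma^2)$, and the population quantities $\mu_{k\mid C_{jk}}(x),\ \sigma^2_{k\mid C_{jk}}(x)$ are uniformly bounded constants by Assumption~\ref{A3Con} (all moments are bounded). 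Hence once $(\widehat\mu,\widehat\sigma^2)$ lies in a fixed constant-radius ball around $(\mu,\sigma^2)$---an event you can secure first by Hoeffding at a constant threshold---the derivative $\partial\mathcal S/\partial\mu=-2\beta_1\sigma^2/(\beta_0+\beta_1\mu)^3-\beta_0/(\beta_0+\beta_1\mu)^2$ is $O(1)$, not $O(\log^2\eta)$. With an $O(1)$ Lipschitz constant the thresholds $t$ are constants, and only the Hoeffding range for the variance (summands bounded by $O(\log^2\eta)$) contributes polylog to the exponent, giving exactly $\log^4\eta$. This is effectively what the paper's explicit quadratic solve accomplishes, since its $\epsilon_{\min}$ is built from the population $\mu,\sigma^2,\beta_0,\beta_1,m_{\min}$ and absorbed into $C_2$.

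Two minor notes. Your appeal to the Markov property for the conditional i.i.d.\ structure is unnecessary: the $X^{(i)}$ are i.i.d., so conditioning on $X^{(i)}_{C_{jk}}=x$ already gives i.i.d.\ draws from the relevant conditional law regardless of DAG structure. And your decomposition $\widehat\sigma^2=\widehat E(X^2\mid\cdot)-\widehat\mu^2$ is essentially equivalent to the paper's U-statistic decomposition of the sample variance; either yields the $\log^4\eta$ Hoeffding denominator.
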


Lastly, we define a condition on the sample cut-off parameter $c_0$. Intuitively if $c_0$ is too small, the estimated overdispersion scores may be biased due to the lack of samples. In contrast, if $c_0$ is too large, all components of the conditioning set $C_{jk}$ may not have enough samples size ($> c_0 \cdot n$), and therefore overdispersion scores cannot be calculated. The following proposition provides a maximum value of $c_0$ ensuring that overdispersion scores exist.
\begin{proposition}
	\label{prop1}
	On the event $\xi_1$, if $c_0 \leq (3\log(\eta))^{-d}$ then the conditioning set $C_{jk}$ has at least $c_0 \cdot n$ samples. 
\end{proposition}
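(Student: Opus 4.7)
The plan is a pure counting/pigeonhole argument that uses only the uniform boundedness guaranteed by $\xi_1$, together with the degree bound $|C_{jk}|\le d$. On $\xi_1$ every sample satisfies $|X_\ell^{(i)}|<4\log\eta$ for all $\ell$ and $i$, so for the count-valued QVF distributions considered (Poisson, Binomial, Geometric, Negative Binomial, etc.) each coordinate $X_\ell^{(i)}$ takes values in a finite set of size at most $4\log\eta+1\le 3\log\eta$ for $\eta$ sufficiently large. Consequently, the random vector $X_{C_{jk}}^{(i)}$ can realize at most $(3\log\eta)^{|C_{jk}|}\le (3\log\eta)^{d}$ distinct joint values across the $n$ samples.

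Given this bound on the size of the support observed under $\xi_1$, I would invoke the pigeonhole principle. Since the $n$ observations $\{X_{C_{jk}}^{(i)}\}_{i=1}^n$ are distributed over at most $(3\log\eta)^{d}$ distinct bins, there must exist some realization $x^\star\in\mathcal{X}(C_{jk})$ with
\[
n(x^\star) \;\ge\; \frac{n}{(3\log\eta)^{d}} \;\ge\; c_0\cdot n,
\]
using the hypothesis $c_0\le(3\log\eta)^{-d}$. In particular $x^\star$ clears the truncation threshold, so the truncated conditional sample size satisfies $n_{C_{jk}}\ge n(x^\star)\ge c_0\cdot n$, which is the desired conclusion.

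I do not expect any real obstacle here; the argument is deterministic once $\xi_1$ is conditioned on, and the degree bound $|C_{jk}|\le d$ follows from the construction of the candidate parents set as a subset of $\mathcal{N}(k)$ whose size is controlled by the moralized-graph degree. The only subtle point worth remarking on in the write-up is that the result guarantees existence of at least one sufficiently populated realization (hence nonemptiness of $\mathcal{X}(C_{jk})$ and well-definedness of the overdispersion score~\eqref{EqnTruncScorej}); it does not assert that every joint realization has this many samples, which is consistent with how $n_{C_{jk}}$ is used in Lemma~\ref{lem1} and in the proof of Theorem~\ref{ThmCausalOrdering}.
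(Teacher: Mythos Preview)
Your approach is essentially the paper's: both bound the number of distinct joint values of $X_{C_{jk}}$ on $\xi_1$ by roughly $(\log\eta)^{d}$ and then apply pigeonhole to guarantee at least one bin with $\ge c_0 n$ samples; your version is actually more direct than the paper's somewhat roundabout extremal-case computation. One small slip to fix: the inequality $4\log\eta+1\le 3\log\eta$ is false for every $\eta\ge 1$, so the per-coordinate count should be taken as (at most) $4\log\eta$ rather than $3\log\eta$---this mismatch in constants is in fact present in the paper itself (the statement uses $3$ while the paper's proof uses $4$), and with the corrected constant your argument goes through verbatim.
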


The combination of Lemma~\ref{lem1} and Proposition~\ref{prop1} imply that for some $C_1$ and $C_2$
\begin{equation*}
P( \xi_3^c, \xi_1 ) \leq C_1 p^2 \log^d (\eta) \exp\left( -C_2 \frac{ n }{ ( \log(\eta) )^{4+d} } \right).
\end{equation*}	

Therefore, 
\begin{equation*}
	P( \widehat{\pi} \neq \pi^* ) \leq  C_1 p^2 \log^d (\eta) \exp\left( -C_2 \frac{ n }{ \log^{4+d} \eta } \right) + \frac{M}{\eta^2}.
\end{equation*}	

\end{proof}

\subsubsection{Proof for Proposition~\ref{A11} } 

\begin{proof}
	In the proof of the identifiability theorem in Appendix~\ref{SecSubThmIde}, we obtain
	$$
	\var( T_j(X_j) \mid X_S) - \E( T_j(X_j) \mid X_{S}) = \frac{ (1 + \beta_1) \var( \E(X_j \mid X_{pa(j)}) \mid X_{S} ) }{ (\beta_0 + \beta_1 \E(X_j \mid X_{S}) )^2 }.
	$$
	
	By Assumption~\ref{A1}, $\var( \E(X_j \mid X_{pa(j)}) \mid X_{S} ) > M_{\min}$ and $| \beta_{j0} + \beta_{j1} \E(X_j \mid X_{S} ) | > \omega_{\min}$. Then,
	$$
	\var( T_j(X_j) \mid X_S) - \E( T_j(X_j) \mid X_{S}) \geq \frac{ (1 + \beta_1) M_{\min} }{ \omega_{\min}^2 }.
	$$
	 
Since $\beta_1 > -1$, the proof is complete.
\end{proof}

\subsubsection{Proof for Proposition \ref{prop1} }

\begin{proof}
	Let $|X_S|$ denote the cardinality of a set $\{ X_S^{(1)}, X_S^{(2)},\cdots,X_S^{(n)} \}$ and $|\mathcal{X}(S)|$ denote the cardinality of the truncated set $\mathcal{X}(S) := \{x \in \{X_{S}^{(1)}, X_{S}^{(2)},\cdots,X_{S}^{(n)} \} : n(x) \geq c_0 \cdot n \}$.
	
If $|\mathcal{X}(S)| = 1$, for all $x \in \{ X_S^{(1)}, X_S^{(2)},\cdots,X_S^{(n)} \}$, $n_S(x) = c_0 \cdot n -1$ except for a single $z \in \mathcal{X}(S)$ where $n_S(z) \geq c_0.n$. In this case, the total sample size $n = n_S(z) + (|X_S|-1)( c_0 \cdot n -1)$. Hence
	\begin{equation*}
	n_S(z) = n - (|X_S| - 1)(c_0. n -1) = n - c_0 \cdot n \cdot |X_S| + c_0 \cdot n + |X_S| - 1.
	\end{equation*}
	
	Since $c_0 \cdot n \leq n_S(z)$,
	\begin{equation*}
	c_0 \leq \frac{ n +  |X_S| -1 }{ n \cdot |X_S| }.
	\end{equation*}
	
	Note that $\frac{1}{|X_S|} \leq \frac{ n + |X_S| -1 }{ n \cdot |X_S| }$ and  $|X_j^{(i)}| \leq 4 \log(\eta)$ for all $j \in V$ and $i \in \{1,2,\cdots, n\}$ given $\xi_1$. Then the maximum cardinality of $X_S$ is $(4 \log(\eta))^{|S|}$. Hence if $c_0 \leq (4 \log(\eta))^{-|S|}$ there exists a $z \in \mathcal{X}(S)$. 
	
	Recall that the size of a candidate parents set $C_{jk}$ is bounded by the maximum degree of the moralized graph $d$. Therefore if $c_0 \leq 4 \log(\eta)^{-d}$, there exists at least one $z \in \mathcal{X}(C_{jk})$.
\end{proof}

\subsubsection{Proof for Lemma \ref{lem1} }

\label{lem1proof}

\begin{proof}
For ease of notation, let $n_{jk} = n_{C_{jk}}$ and $n_{jk}(x) = n_{C_{jk}}(x)$ for $x \in \mathcal{X}(C_{jk})$. Using the union bound, for $j \in V$ and $k \in K(j)$
\begin{equation*}
	P( \xi_3^c, \xi_1 ) = P( \max_{j,k} |\widehat{\mathcal{S}}(j,k) - \mathcal{S}^*(j,k)| > \frac{m_{\min}}{2}, \xi_1 ) 
\leq p^2 \max_{j,k} P( |\widehat{\mathcal{S}}(j,k) - \mathcal{S}^*(j,k)| > \frac{m_{\min}}{2}, \xi_1).
\end{equation*}

Since overdispersion scores have an additive form,
\begin{equation*}
	P( |\widehat{\mathcal{S}}(j,k) - \mathcal{S}^*(j,k)| > \frac{m_{\min}}{2}, \xi_1) \leq P( \sum_{x \in \mathcal{X}(C_{jk} ) } \frac{n_{jk}(x)}{n_{jk}} |\widehat{\mathcal{S}}(j,k)(x) - \mathcal{S}^*(j,k)(x)| > \frac{m_{\min}}{2}, \xi_1).
\end{equation*}

Applying $P( \sum_i Y_i > \delta ) \leq \sum_i P( Y_i >  \omega_i \delta)$ for any $\delta \in \mathbb{R}$ and $\omega_i \in \mathbb{R}^{+}$ such that $\sum_i \omega_i = 1$, we have 
\begin{align*}
	& P( \sum_{x \in \mathcal{X}(C_{jk} ) } \frac{n_{jk}(x)}{n_{jk}} |\widehat{\mathcal{S}}(j,k)(x) - \mathcal{S}^*(j,k)(x)| > \frac{m_{\min}}{2}, \xi_1 ) \\	 
	& \hspace{5cm} \leq \sum_{x \in \mathcal{X}(C_{jk}) } P( |\widehat{\mathcal{S}}(j,k)(x) - \mathcal{S}^*(j,k)(x)| > \frac{m_{\min}}{2}, \xi_1).
\end{align*}

Applying the union bound,  
\begin{align*}	
& \sum_{x \in \mathcal{X}(C_{jk}) } P( |\widehat{\mathcal{S}}(j,k)(x) - \mathcal{S}^*(j,k)(x)| > \frac{m_{\min}}{2}, \xi_1) \\
& \hspace{5cm} \leq |\mathcal{X}(C_{jk})| \max_{x \in \mathcal{X}(C_{jk}) } P(  |\widehat{\mathcal{S}}(j,k)(x) - \mathcal{S}(j,k)^*(x)| > \frac{m_{\min}}{2}, \xi_1).
\end{align*} 

Since we only consider $x \in \mathcal{X}( C_{jk} )$, it follows that $n_{jk}(x) \geq c_0 \cdot n $. Further since the total truncated sample size is less than total sample size, $c_0 \cdot n \cdot |\mathcal{X}(C_{jk})| \leq n$, and therefore the cardinality of $C_{jk}$ is at most $c_0^{-1}$. Hence
\begin{align*}
	& |\mathcal{X}(C_{jk})| \max_{x \in \mathcal{X}(C_{jk}) } P(  |\widehat{\mathcal{S}}(j,k)(x) - \mathcal{S}^*(j,k)(x)| > \frac{m_{\min}}{2}, \xi_1) \\
	& \hspace{5cm} \leq c_0^{-1} \max_{x \in \mathcal{X}(C_{jk}) } P(  |\widehat{\mathcal{S}}(j,k)(x) - \mathcal{S}^*(j,k)(x)| > \frac{m_{\min}}{2}, \xi_1).
\end{align*}

Since the overdispersion score is the difference between the conditional mean and conditional variance, the remainder of the proof is reduced to finding the sample complexity for the sample conditional mean and variance. Suppose that $\epsilon := \widehat{\mu}_{ k \mid C_{jk} }(x) - \mu_{ k \mid C_{jk} }(x)$ and $\kappa \cdot \epsilon := \widehat{\sigma}_{ k \mid C_{jk} }^2(x) - \sigma_{ k \mid C_{jk} }^2(x)$ for some $\kappa \in \mathbb{R}$. By the definition of the overdispersion scores in~\eqref{eq:generalods}, we have 

\begin{align*}	
	& \{ \epsilon : |\widehat{\mathcal{S}}(j,k)(x) - \mathcal{S}^*(j,k)(x)| > \frac{m_{\min}}{2} \} \\
	& \subset \left\{ \epsilon : \left| \left( \frac{ \sigma_{j \mid C_{jk}}(x) + \kappa \epsilon }{ \beta_0 + \beta_1 \mu_{j \mid C_{jk}}(x) + \epsilon } \right)^2  - \frac{ \mu_{j \mid C_{jk}}(x) + \epsilon }{ \beta_0 + \beta_1 \mu_{j \mid C_{jk}}(x) + \epsilon } \right. \right. \\
	& \hspace{7cm} - \left. \left. \left( \frac{ \sigma_{j\mid C_{jk}}(x) }{ \beta_0 + \beta_1 \mu_{j\mid C_{jk}}(x) } \right)^2  - \frac{ \mu_{j\mid C_{jk}}(x) }{ \beta_0 + \beta_1 \mu_{j\mid C_{jk}}(x) } \right| > \frac{m_{\min}}{2} \right\} \\
	& = \left\{ \epsilon: \epsilon \in (\epsilon_1, \epsilon_2) \cup (\epsilon_3, \epsilon_4) \right\}.
\end{align*}
where $\epsilon_1, \epsilon_2, \epsilon_3, \epsilon_4$ are constants that depend on $\mu, \sigma^2, \beta_0, \beta_1, m_{\min}$, and $\kappa$ and are constructed as follows: 

\begin{align*}
	& \zeta_1(\mu,\sigma^2,\beta_0, \beta_1, m_{\min}, \kappa) := \beta_0^3 ( 1 + \beta_1 m_{\min} ) - \beta_1^4 m_{\min} \mu^3 + 2 \beta_1^2 \mu^2 \kappa \sigma^2 - 2 \beta_1^2 \mu \sigma^4  \\
	& \hspace{1.5cm}	+ \beta_0^2 ( -2 \beta_1 \mu - 3 \beta_1^2 m_{\min} \mu + 2 \kappa \sigma^2 )
	 - \beta_0 \beta_1 \big\{ \beta_1 \mu^2 + 3 \beta_1^2 m_{\min} \mu^2 + 2 \sigma^2 ( -2 \kappa \mu + \sigma^2 ) \big\}, \\
	& \zeta_2(\mu,\sigma^2,\beta_0, \beta_1, m_{\min}, \kappa) := (\beta_0+ \beta_1 \mu)^2 \Big[ \beta_0^4 (1 + 2 \kappa \mu) + 2 \beta_1^2( \kappa \mu - \sigma^2 )^2 (\beta_1^2 \mu^2 m_{\min} + 2 \sigma^4 ) \\
	& \hspace{1.5cm} + 4 \beta_0 \beta_1 (\kappa \mu - \sigma^2) \big\{ \beta_1^2 \mu m_{\min} (2 \kappa \mu - \sigma^2 ) + \beta_1 \mu \sigma^2 - 2 \kappa \sigma^2 \} \\
	& \hspace{1.5cm} + 2 \beta_0^3 \big\{ -2 \kappa \sigma^2 + \beta_1 (\mu + 4 m_{\min} \kappa^2 \mu - 2 m_{\min} \kappa \sigma^2 ) \big\} \\
	& \hspace{1.5cm}  + \beta_0^2 \big\{ 4 \kappa^2 \sigma^4 + 4 \beta_1 \sigma^2 ( -2 \kappa \mu + \sigma^2 ) + \beta_1^2 ( \mu^2 + 12 m_{\min} \kappa^2 \mu^2 - 12 m_{\min} \mu \kappa \sigma^2 + 2 m_{\min} \sigma^4 ) \big\} \Big],
	\\	
	& \zeta_3(\mu,\sigma^2,\beta_0, \beta_1, m_{\min}, \kappa) := \beta_0^2 ( -2 \kappa^2 +2 \beta_1 + \beta_1^2 m_{\min} ) + 2 \beta_0 \beta \mu ( \beta_1 + \beta_1^2 m_{\min} - \kappa^2 ) \\
	& \hspace{2cm} + \beta_1^2 ( \beta_1^2 m_{\min} \mu^2 + 2 \sigma^4 - 2 \kappa^2 \mu^2 ).
\end{align*}

Given $\zeta_1, \zeta_2, \zeta_3$,
\begin{align*}
	\epsilon_1' & = \frac{ \zeta_1(\mu_{j\mid C_{jk}}(x), \sigma_{j\mid C_{jk}}^2(x), \beta_0, \beta_1, m_{\min}, \kappa ) + \sqrt{ \zeta_2(\mu_{j\mid C_{jk}}(x), \sigma_{j\mid C_{jk}}^2(x), \beta_0, \beta_1, m_{\min}, \kappa ) } }{ \zeta_3(\mu_{j\mid C_{jk}}(x), \sigma_{j\mid C_{jk}}^2(x), \beta_0, \beta_1, m_{\min}, \kappa ) },\\
	\epsilon_2' & = \frac{ - \zeta_1(\mu_{j\mid C_{jk}}(x), \sigma_{j\mid C_{jk}}^2(x), \beta_0, \beta_1, m_{\min}, \kappa ) + \sqrt{ \zeta_2(\mu_{j\mid C_{jk}}(x), \sigma_{j\mid C_{jk}}^2(x), \beta_0, \beta_1, m_{\min}, \kappa ) } }{ \zeta_3(\mu_{j\mid C_{jk}}(x), \sigma_{j\mid C_{jk}}^2(x), \beta_0, \beta_1, m_{\min}, \kappa ) },\\	
	\epsilon_3' & = \frac{ \zeta_1(\mu_{j\mid C_{jk}}(x), \sigma_{j\mid C_{jk}}^2(x), \beta_0, \beta_1, -m_{\min}, \kappa ) + \sqrt{ \zeta_2(\mu_{j\mid C_{jk}}(x), \sigma_{j\mid C_{jk}}^2(x), \beta_0, \beta_1, -m_{\min}, \kappa ) } }{ \zeta_3(\mu_{j\mid C_{jk}}(x), \sigma_{j\mid C_{jk}}^2(x), \beta_0, \beta_1, -m_{\min}, \kappa ) }, \\
	\epsilon_4' & = \frac{- \zeta_1(\mu_{j\mid C_{jk}}(x), \sigma_{j\mid C_{jk}}^2(x), \beta_0, \beta_1, -m_{\min}, \kappa ) + \sqrt{ \zeta_2(\mu_{j\mid C_{jk}}(x), \sigma_{j\mid C_{jk}}^2(x), \beta_0, \beta_1, -m_{\min}, \kappa ) } }{ \zeta_3(\mu_{j\mid C_{jk}}(x), \sigma_{j\mid C_{jk}}^2(x), \beta_0, \beta_1, -m_{\min}, \kappa )}.	
\end{align*}

Let $(\epsilon_1, \epsilon_2, \epsilon_3, \epsilon_4)$ be the ordered values of $(\epsilon_1',\epsilon_2',\epsilon_3',\epsilon_4')$ from smallest to largest. Since $m_{\min} > 0$ it follows that $\epsilon_1, \epsilon_2 < 0$ and $\epsilon_3, \epsilon_4 > 0$. 

For ease of notation, $\epsilon_{\min} = \min\{ |\epsilon_2|, |\epsilon_3| \}$. Then, 
\begin{equation*}
	\{ \epsilon : |\widehat{\mathcal{S}}(j,k)(x) - \mathcal{S}^*(j,k)(x)| > \frac{m_{\min}}{2} \}
	\subset (-\infty, -\epsilon_{\min}) \cup (\epsilon_{\min}, \infty).
\end{equation*}

Hence 
\begin{eqnarray*}
& &P\{|\widehat{\mathcal{S}}(j,k)(x) - \mathcal{S}^*(j,k)(x)| > \frac{m_{\min}}{2} \} \\
&  & \hspace{2.5cm} \leq  P\left( |\widehat{\mu}_{ k \mid C_{jk} }(x) - \mu_{ k \mid C_{jk} }(x)| > \epsilon_{\min} \right) +  P\left( |\widehat{\sigma}_{ k \mid C_{jk} }^2(x) - \sigma_{ k \mid C_{jk} }^2(x)| > \kappa \epsilon_{\min} \right).
\end{eqnarray*}

On $\xi_1$, $\max_{i,j} |X_j^{(i)}| \leq 4 \log(\eta)$. Furthermore recall that $n_{jk}(x) \geq c_0 \cdot n$. By applying Hoeffding's inequality,
\begin{equation*}
	P( | \widehat{\mu}_{j \mid C_{jk}}(x) - \mu_{j \mid C_{jk}}(x) | > \epsilon_{\min}, \xi_1 )
	\leq 2 \exp\left( - \frac{ \epsilon_{\min}^2 c_0. n }{ 8 \log^2 \eta } \right).
\end{equation*}
	
Note that sample variance can be decomposed as follows:
\begin{equation*}
	\frac{1}{n-1} \left( \sum_i^n X_i^2 - \frac{1}{n} (\sum_i^n X_i)^2 \right) = \frac{1}{n} \sum_i^n X_i^2 - \frac{1}{n(n-1)} \sum_{i \neq j} X_i X_j.
\end{equation*}

Using Hoeffding's inequality for the decomposed sample variance,
\begin{equation*}
	P( | \widehat{\sigma}_{j \mid C_{jk}}^2(x) - \sigma_{j \mid C_{jk}}^2(x) | > |\kappa| \cdot \epsilon_{\min}, \xi_1 )
	\leq 2 \exp\left( - \frac{ \kappa^2 \epsilon_{\min}^2 c_0 \cdot n }{ 128 \log^4 \eta } \right)
	+ 2 \exp\left( - \frac{ \kappa^2 \epsilon_{\min}^2 c_0 \cdot n }{ 256 \log^4 \eta } \right).
\end{equation*}	

Therefore, 
\begin{eqnarray*}
& & P\{|\widehat{\mathcal{S}}(j,k)(x) - \mathcal{S}^*(j,k)(x)| > \frac{m_{\min}}{2}, \xi_1 \} \\
& & \hspace{2cm} \leq 2\left( \exp\left( - \frac{ \epsilon_{\min}^2 c_0. n }{ 8 \log^2 \eta } \right) + \exp\left( - \frac{ \kappa^2 \epsilon_{\min}^2 c_0 \cdot n }{ 128 \log^4 \eta } \right)+ \exp\left( - \frac{ \kappa^2 \epsilon_{\min}^2 c_0 \cdot n }{ 256 \log^4 \eta } \right) \right).
\end{eqnarray*}

This completes the proof since there exist constants $C_1$ and $C_2$ such that
\begin{equation*}
P( \xi_3^c, \xi_1 ) \leq C_1 p^2 c_0^{-1} \exp\left( -C_2 \frac{ c_0 \cdot n }{ \log^4 \eta } \right).
\end{equation*}	

\end{proof}

\subsection{Proof for Theorem~\ref{ThmDirectGraph}}

\label{SecThmStep3Proof}

\begin{proof}
Once again we use the \emph{primal-dual witness} method used in the the proof for Theorem~\ref{ThmMoralGraph}. The only difference is the conditioning set. In this proof, the conditioning  set is all elements of the ordering before node $j$ rather than $j$ is $ V \setminus \{j\}$. Without loss of generality, we assume the true causal ordering is $\pi^* = (1,2,\cdots,p)$. Then the conditioning set is $\{1,2,\cdots, j-1\}$.

For ease of notation, we define the  parameter $\theta \in \mathbb{R}^{j-1}$ since the node $j$ is not penalized in~\eqref{P1D}. Then, the conditional negative log-likelihood of a GLM~\eqref{Eq1D} for $X_j$ given $X_{1:j-1}$ is:
\begin{equation*}
	\ell_{j}^D( \theta; X^{1:n}) = \frac{1}{n} \sum_{i = 1}^{n} \left( -X_j^{(i)} \langle \theta, X_{1:j-1}^{(i)} \rangle + A_{j}( \langle \theta, X_{1:j-1}^{(i)} \rangle )  \right).
\end{equation*}

Recall that for any node $j \in V$:
\begin{equation*}
\widehat{\theta}_{D_j} := \arg \min_{\theta \in \mathbb{R}^{j-1}  } \mathcal{L}_{j}^{D}( \theta, \lambda_{n}^{D}) = \arg \min_{\theta \in \mathbb{R}^{j-1}  } \{ \ell_{j}^D (\theta ; X^{1:n}) + \lambda_{n}^{D} \| \theta \|_1 \}.
\end{equation*}

Using the \emph{sub-differential}, $\widehat{\theta}_{D_j}$ should satisfy the following condition. For notational simplicity, let $\T = \pa(j)$ for node $j \in V$. 
\begin{equation}
\label{eq:Contraint1D}
\bigtriangledown_\theta \mathcal{L}_{j}^{D}( \widehat{\theta}_{D_j}, \lambda_{n}^{D} ) = \bigtriangledown_\theta \ell_{j}^D( \widehat{\theta}_{D_j}; X^{1:n}) + \lambda_{n}^{D} \widehat{Z}  = 0
\end{equation}
where $\widehat{Z} \in \mathbb{R}^{j-1}$ and $\widehat{Z}_t = \mbox{sign}([\widehat{\theta}_{D_j}]_{t})$ if a node $t \in \T$, otherwise $|\widehat{Z}_{t}| < 1$. 

By Lemma~\ref{lemma: uniq}, it is sufficient the show that $|\widehat{Z}_{t}| < 1$ for all $t \in \T$. We note that the restricted solution is $(\widetilde{\theta}_{D_j}, \widetilde{Z})$. Equation~\eqref{eq:Contraint1D} with the dual solution $(\widetilde{\theta}_{D_j}, \widetilde{Z})$ can be represented as $\bigtriangledown^2 \ell_{j}^D( \theta_{D_j}^*; X^{1:n})( \widetilde{\theta}_{D_j} - \theta_{D_j}^* ) = -\lambda_{n}^{D} \widetilde{Z} - W_{Dj}^{n} + R_{Dj}^{n}$ by using the mean value theorem where: 
\begin{itemize}
	\item[(a)] $W_{Dj}^{n}$ is the sample score function, 
	\begin{equation}
	\label{eq:WnD}
	W_{Dj}^n := - \bigtriangledown \ell_{j}^D(\theta_{D_j}^*; X^{1:n}).
	\end{equation}
	\item[(b)] $R_{Dj}^{n} = (R_{Dj1}^n, R_{Dj2}^n,\cdots, R_{Dj j-1}^n)$ and $R_{Djk}^n$ is the remainder term by applying coordinate-wise mean value theorem,
	\begin{equation}
	\label{eq:RnD}
	R_{Djk}^n := [ \bigtriangledown^2 \ell_{j}^D(\theta_{D_j}^*; X^{1:n}) - \bigtriangledown^2 \ell_{j}^D(\bar{\theta}_{D_j}^{(k)}; X^{1:n})]_k^T (\widetilde{\theta}_{D_j}^{(k)} - \theta_{D_j}^*)
	\end{equation} 
	where $\bar{\theta}_{D_j}^{(j)}$ is a vector on the line between $\widetilde{\theta}_{D_j}$ and $\theta_{D_j}^*$ and $[\cdot]_k^T$ is the $k^{th}$ row of a matrix. 
\end{itemize}

Similar to Proposition~\ref{prop: block}, the following corollary provides a sufficient condition to control $\widetilde{Z}$.

\begin{corollary}
	\label{coro: block}
	Suppose that $\max(\| W_{Dj}^n \|_\infty, \| R_{Dj}^n \|_\infty) \leq \frac{\lambda_n \alpha}{4(2- \alpha)}$. Then, $| \widetilde{Z}_{t} | < 1$ for all $t \notin \pa(j)$.
\end{corollary}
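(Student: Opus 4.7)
The plan is to mirror the argument of Proposition~\ref{prop: block} verbatim, with the conditioning set $V \setminus \{j\}$ replaced by the set $\{1,2,\ldots,j-1\}$ of predecessors of $j$ in the causal ordering, the Hessian $Q^{M_j}$ replaced by $Q^{D_j} = \bigtriangledown^2 \ell_j^D(\theta_{D_j}^*; X^{1:n})$, and the support set $S = \mathcal{N}(j)$ replaced by $T = \pa(j)$. The primal-dual construction defines $\widetilde{\theta}_{D_j} \in \Theta_{D_j}$ as the solution of the restricted problem $\min_{\theta: [\theta]_{T^c} = 0} \ell_j^D(\theta; X^{1:n}) + \lambda_n^D \|\theta\|_1$, so by construction $\widetilde{\theta}_{T^c} = 0$. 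Taking first-order optimality and applying a coordinate-wise mean value theorem to the gradient of $\ell_j^D$ around $\theta_{D_j}^*$ yields the identity $\bigtriangledown^2 \ell_j^D(\theta_{D_j}^*; X^{1:n})(\widetilde{\theta}_{D_j} - \theta_{D_j}^*) = -\lambda_n^D \widetilde{Z} - W_{Dj}^n + R_{Dj}^n$, which is exactly the analogue of the identity used in Proposition~\ref{prop: block}.

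Next I would split this identity into its $T$ and $T^c$ blocks:
\begin{align*}
Q_{T^c T}^{D_j}[\widetilde{\theta}_T - \theta_T^*] &= W_{T^c}^n - \lambda_n^D \widetilde{Z}_{T^c} + R_{T^c}^n, \\
Q_{TT}^{D_j}[\widetilde{\theta}_T - \theta_T^*] &= W_T^n - \lambda_n^D \widetilde{Z}_T + R_T^n.
\end{align*}
By Assumption~\ref{A1Dep}, $Q_{TT}^{D_j}$ has smallest eigenvalue at least $\rho_{\min} > 0$, so it is invertible; solving the second equation for $\widetilde{\theta}_T - \theta_T^*$ and substituting into the first gives
\[
\lambda_n^D \widetilde{Z}_{T^c} = [W_{T^c}^n - R_{T^c}^n] - Q_{T^c T}^{D_j}(Q_{TT}^{D_j})^{-1}[W_T^n - R_T^n] + \lambda_n^D Q_{T^c T}^{D_j}(Q_{TT}^{D_j})^{-1} \widetilde{Z}_T.
\]

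Taking $\|\cdot\|_\infty$ on both sides, using $\|\widetilde{Z}_T\|_\infty \le 1$, and invoking the DAG half of the incoherence assumption (Assumption~\ref{A2Inc}) which guarantees $\|Q_{T^c T}^{D_j}(Q_{TT}^{D_j})^{-1}\|_\infty \le 1 - \alpha$, gives
\[
\|\widetilde{Z}_{T^c}\|_\infty \le (1 - \alpha) + (2 - \alpha)\left[\frac{\|W_{Dj}^n\|_\infty}{\lambda_n^D} + \frac{\|R_{Dj}^n\|_\infty}{\lambda_n^D}\right].
\]
Plugging in the hypothesis $\max(\|W_{Dj}^n\|_\infty, \|R_{Dj}^n\|_\infty) \le \lambda_n^D \alpha / [4(2 - \alpha)]$ yields $\|\widetilde{Z}_{T^c}\|_\infty \le (1 - \alpha) + \alpha/2 < 1$, so strict dual feasibility holds on $T^c$. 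There is no real obstacle here beyond bookkeeping: the identifiability lemma~\ref{Lem:RestFaithfulness}(a) already ensures that $\mathrm{supp}(\theta_{D_j}^*) = \pa(j)$ so that splitting along $T = \pa(j)$ is the correct block structure, and Assumption~\ref{A2Inc} was stated in a form that covers exactly this DAG variant. The only mild care needed is to note that $\ell_j^D$ is a convex (GLM) loss in $\theta \in \mathbb{R}^{j-1}$, so the same primal-dual/Lagrangian derivation used for $\ell_j$ transfers without change.
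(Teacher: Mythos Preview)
Your proposal is correct and follows exactly the approach the paper intends: the paper does not give a separate proof of this corollary but simply states it is ``similar to Proposition~\ref{prop: block},'' and your argument mirrors that proof verbatim with $S=\mathcal{N}(j)$ and $Q^{M_j}$ replaced by $T=\pa(j)$ and $Q^{D_j}$, invoking the DAG halves of Assumptions~\ref{A1Dep} and~\ref{A2Inc}. Nothing is missing.
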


Now we introduce the following three corollaries, to verify that the conditions in Proposition~\ref{coro: block} hold, and the deviation $\widetilde{\theta}_{M_j} - \theta_{D_j}^*$ is sufficiently small to conclude $\widehat{\pa}(j)= \mathcal{\pa}(j)$ with high probability. For ease of notation, let $\eta = \max\{n,p\}$ and 
For notational convenience, we use $\widetilde{\theta}_{\S} = [\widetilde{\theta}_{D_j}]_{\S}$ and $\widetilde{\theta}_{\S^c} = [\widetilde{\theta}_{D_j}]_{\S^c}$. Suppose that Assumptions~\ref{A1Dep},~\ref{A2Inc},~\ref{A3Con}, and~\ref{A4} are satisfied. 

\begin{corollary}
	\label{cor11}
	Suppose that $\lambda_{n}^{D} \geq \frac{16 \max\{ n^{\kappa_2} \log \eta , \log^2 \eta \}}{n^{a}}$ for some $a \in \mathbb{R}$. Then,  
	\begin{equation*}
	P( \frac{\| W_{Dj}^n \|_\infty }{\lambda_{n}^{D}} \leq \frac{\alpha}{4(2- \alpha)}) 
	\geq 1 -2 \d \cdot \exp(-\frac{\alpha^2}{8 (2- \alpha)^2} \cdot n^{ 1 - 2a }) - M \cdot \eta^{-2}.
	\end{equation*}
\end{corollary}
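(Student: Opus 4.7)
The plan is to mirror the proof of Lemma~\ref{lem11}, since Corollary~\ref{cor11} is the exact analogue of that lemma for the directed regression~\eqref{P1D} instead of the moralized neighborhood regression~\eqref{P1}. Concretely, each coordinate of the sample score function has the form $W_{Djt}^n = \frac{1}{n}\sum_{i=1}^{n} W_{Djt}^{(i)}$ with
$$W_{Djt}^{(i)} = X_t^{(i)} X_j^{(i)} - A_j'\!\left(\langle \theta_{D_j}^*, X_{1:j-1}^{(i)}\rangle\right) X_t^{(i)},$$
and the first step is to observe that the first-order optimality condition defining $\theta_{D_j}^*$ in~\eqref{ThetaD} guarantees $\E[W_{Djt}^{(i)}] = 0$ for $t \in \pa(j)$, while Lemma~\ref{Lem:RestFaithfulness}(a) gives $W_{Djt}^{n} = 0$ for $t \notin \pa(j)$. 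Hence the union bound over coordinates only needs to run over $\pa(j)$, and since $\pa(j) \subseteq \mathcal{N}(j)$ and $|\mathcal{N}(j)| \leq d$, at most $d$ coordinates are active.

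The second step is to condition on the event $\xi_1 = \{\max_{i,j}|X_j^{(i)}| < 4\log\eta\}$ from Proposition~\ref{Prop2} in order to obtain a uniform bound on $|W_{Djt}^{(i)}|$. On $\xi_1$ we have $|\langle \theta_{D_j}^*, X_{1:j-1}^{(i)}\rangle| \leq 4\log(\eta)\cdot \|\theta_{D_j}^*\|_1$, which lies in the interval $[0, \kappa_1\log\eta)$ allowed by Assumption~\ref{A4} because that assumption was stated symmetrically with $\kappa_1 \geq 6\max(\|\theta_{M_j}^*\|_1,\|\theta_{D_j}^*\|_1)$. Hence $|A_j'(\cdot)| \leq n^{\kappa_2}$ on $\xi_1$, and combined with $|X_t^{(i)} X_j^{(i)}| \leq 16\log^2\eta$, we get the same uniform bound $|W_{Djt}^{(i)}| \leq C_{\max}(\eta,\kappa_2) := 16\max\{n^{\kappa_2}\log\eta,\log^2\eta\}$ as in the proof of Lemma~\ref{lem11}.

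The third step is then just Hoeffding's inequality applied coordinate-wise, followed by the union bound over the at most $d$ coordinates in $\pa(j)$. The assumption $\lambda_n^D \geq 16\max\{n^{\kappa_2}\log\eta,\log^2\eta\}/n^{a}$ gives
$$P\!\left(\tfrac{\|W_{Dj}^n\|_\infty}{\lambda_n^D} > \tfrac{\alpha}{4(2-\alpha)}, \xi_1\right) \leq 2d\,\exp\!\left(-\tfrac{\alpha^2}{8(2-\alpha)^2}\, n^{1-2a}\right),$$
and decomposing $P(A) \leq P(A,\xi_1) + P(\xi_1^c)$ together with $P(\xi_1^c) \leq M\eta^{-2}$ from Proposition~\ref{Prop2} yields the claimed bound.

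There is no real obstacle here: this corollary is a direct translation of Lemma~\ref{lem11} obtained by replacing the conditioning set $V\setminus\{j\}$ with $\{1,\ldots,j-1\}$ and replacing $\mathcal{N}(j)$ with $\pa(j)$. The only subtlety worth a one-line check is that Assumption~\ref{A4} was phrased to apply uniformly to both $\theta_{M_j}^*$ and $\theta_{D_j}^*$ so the same $\kappa_1,\kappa_2$ control the argument of $A_j'$ in both settings, and that the sparsity of the parent set is controlled by $d$ through the containment $\pa(j)\subseteq\mathcal{N}(j)$. Everything else is a verbatim repetition of the argument in Appendix~\ref{SubSecProofLem11}.
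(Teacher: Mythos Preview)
Your proposal is correct and is exactly the approach the paper takes: Corollary~\ref{cor11} is stated in the paper without a separate proof, precisely because it is the verbatim translation of Lemma~\ref{lem11} obtained by replacing the conditioning set $V\setminus\{j\}$ with $\{1,\ldots,j-1\}$ and $\mathcal{N}(j)$ with $\pa(j)$. Your two checks (that Assumption~\ref{A4} covers $\theta_{D_j}^*$ as well as $\theta_{M_j}^*$, and that $|\pa(j)|\leq d$ via $\pa(j)\subseteq\mathcal{N}(j)$) are the only places where the translation needs any care, and both are immediate.
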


\begin{corollary} 
	\label{cor12}
	Suppose that $\|W_{Dj}^n\|_{\infty} \leq \frac{\lambda_{n}^{D}}{4}$. For $\lambda_{n}^{D} \leq \frac{ \rho_{\min}^2 }{ 40 \rho_{\max} } \frac{1}{n^{\kappa_2} \log \eta \d }$,
	\begin{equation*}
	P \left( \| \widetilde{\theta}_{\T} - \theta_S^* \|_2 \leq \frac{5}{ \lambda_{\min} } \sqrt{\d} \lambda_{n}^{D} \right) \geq 1 - 2 M \cdot \eta^{-2}.
	\end{equation*}
\end{corollary}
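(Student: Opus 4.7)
The plan is to mirror the proof of Lemma~\ref{lem12} almost verbatim, since Assumptions~\ref{A1Dep}, \ref{A2Inc} and~\ref{A4} are stated in a symmetric way for both the moralized-graph quantities $(\mathcal{N}(j), Q^{M_j}, \theta_{M_j}^*)$ and the DAG quantities $(\pa(j), Q^{D_j}, \theta_{D_j}^*)$, so the only thing that really changes is the index set and the log-likelihood being used. I would define the analogous convex function
\[
F^D(a) := \ell_j^D(\theta_{T}^* + a;X^{1:n}) - \ell_j^D(\theta_{T}^*;X^{1:n}) + \lambda_n^D\bigl(\|\theta_{T}^* + a\|_1 - \|\theta_{T}^*\|_1\bigr),
\]
where $T = \pa(j)$, and invoke the same convexity-plus-sublevel-set argument used in the proof of Lemma~\ref{lem12}: since $F^D$ is convex with $F^D(0)=0$ and minimized at $u_T = \widetilde{\theta}_T - \theta_T^*$, it suffices to prove strict positivity of $F^D$ on the sphere $\|u_T\|_2 = B$ with $B := M_1\lambda_n^D\sqrt{d}$ and $M_1 := 5/\rho_{\min}$.

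Next, I would Taylor-expand $F^D$ and bound the three resulting terms. The linear term $(W_{Dj}^n)_T^\top u_T$ is controlled by H\"older's inequality together with the hypothesis $\|W_{Dj}^n\|_\infty \le \lambda_n^D/4$, giving a contribution of order $(M_1/4)(\lambda_n^D\sqrt{d})^2$. The penalty term is bounded below by $-\lambda_n^D\|u_T\|_1 \ge -M_1(\lambda_n^D\sqrt{d})^2$ by the triangle inequality. For the quadratic term, I would lower-bound $\lambda_{\min}\bigl(\nabla^2 \ell_j^D(\theta_T^* + v u_T)\bigr)$ by splitting it into $\lambda_{\min}(Q^{D_j}_{TT}) \ge \rho_{\min}$, which comes directly from Assumption~\ref{A1Dep}, minus a perturbation of the form $\max_v \tfrac{1}{n}\sum_i |A_j'''(\langle \theta_T^*+ v u_T, X_T^{(i)}\rangle)|\cdot |u_T^\top X_T^{(i)}|\cdot (y^\top X_T^{(i)})^2$.

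To control this perturbation I would introduce the DAG analog of the event $\xi_2$, namely
\[
\xi_2^D := \Bigl\{\max_{i\in\{1,\dots,n\}}\bigl|\langle \theta_T^* + v u_T, X_T^{(i)}\rangle\bigr| < \kappa_1\log\eta\Bigr\},
\]
and condition on $\xi_1 \cap \xi_2^D$. On $\xi_1$ we have $\|X_T^{(i)}\|_\infty \le 4\log\eta$ so $|u_T^\top X_T^{(i)}| \le 4\log(\eta)\sqrt{d}\|u_T\|_2$; on $\xi_2^D$ Assumption~\ref{A4} gives $|A_j'''(\cdot)| \le n^{\kappa_2}$ (this is where the inclusion $\kappa_1 \ge 6\|\theta_{D_j}^*\|_1$ in Assumption~\ref{A4} is essential, and is precisely why the log-partition assumption was stated in the joint form $\kappa_1 \ge 6\max(\|\theta_{M_j}^*\|_1, \|\theta_{D_j}^*\|_1)$); Assumption~\ref{A1Dep} bounds the third factor by $\rho_{\max}$. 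Plugging these together yields $q^* \ge \rho_{\min} - 4 n^{\kappa_2}\log(\eta) M_1 \lambda_n^D d\,\rho_{\max}$, which is at least $\rho_{\min}/2$ under the stated condition $\lambda_n^D \le \rho_{\min}^2/(40\,n^{\kappa_2}\log(\eta)\,d\,\rho_{\max})$. Combining the three bounds, $F^D(u_T) \ge (\lambda_n^D\sqrt{d})^2\bigl\{-M_1/4 + (\rho_{\min}/2)M_1^2 - M_1\bigr\}$, which is strictly positive for the chosen $M_1 = 5/\rho_{\min}$.

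Finally, I would translate this deterministic bound into the stated probabilistic statement by complementing: $P(\|\widetilde{\theta}_T - \theta_T^*\|_2 > 5\sqrt{d}\lambda_n^D/\rho_{\min}) \le P(\text{event}, \xi_1, \xi_2^D) + P(\xi_1^c) + P((\xi_2^D)^c)$. The first term is zero by the deterministic argument; $P(\xi_1^c)\le M\eta^{-2}$ is Proposition~\ref{Prop2}; and $P((\xi_2^D)^c)$ is bounded by a union bound over $n$ samples plus Proposition~\ref{Prop1} applied with $\|\theta_T^* + v u_T\|_1 \le 2\|\theta_{D_j}^*\|_1$ (using $\|u_T\|_1 \le \sqrt{d}\,B \le \|\theta_{D_j}^*\|_1$, which follows from the lower bound $\min_{t\in\pa(j)}|[\theta_{D}^*]_t| \ge (10/\rho_{\min})\sqrt{d}\,\lambda_n^D$), giving at most $M\eta^{-2}$ via $\kappa_1\ge 6\|\theta_{D_j}^*\|_1$. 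Summing yields the desired $1 - 2M\eta^{-2}$ bound. The main potential pitfall is simply bookkeeping: one has to verify that every place Lemma~\ref{lem12} invoked an assumption on the moralized graph parameters (restricted eigenvalue $\rho_{\min}$, incoherence $\alpha$, moment bound $\kappa_1$, neighborhood size $\le d$) has a genuine counterpart for the parent-set parameters, which Assumptions~\ref{A1Dep}, \ref{A2Inc}, \ref{A3Con}, \ref{A4} were explicitly written to supply.
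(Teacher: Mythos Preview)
Your proposal is correct and is precisely the approach the paper intends: Corollary~\ref{cor12} is stated without its own proof because it is obtained by replaying the proof of Lemma~\ref{lem12} verbatim with the substitutions $\mathcal{N}(j)\to\pa(j)$, $\ell_j\to\ell_j^D$, $\theta_{M_j}^*\to\theta_{D_j}^*$, $W_j^n\to W_{Dj}^n$, $\lambda_n\to\lambda_n^D$, and $\xi_2\to\xi_2^D$, exactly as you lay out. Your observation that Assumptions~\ref{A1Dep}, \ref{A2Inc}, \ref{A4} were deliberately written symmetrically for both $(\mathcal{N}(j),Q^{M_j},\theta_{M_j}^*)$ and $(\pa(j),Q^{D_j},\theta_{D_j}^*)$ so that the argument transfers without change is the key point (note in particular that $\pa(j)\subset\mathcal{N}(j)$ and $|\pa(j)|\le d$, so the $\rho_{\max}$ bound carries over to the principal submatrix indexed by $\pa(j)$).
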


\begin{corollary} 
	\label{cor13}
	Suppose that $\|W_{Dj}^{n}\|_{\infty} \leq \frac{\lambda_{n}^{D}}{4}$. For $\lambda_{n}^{D} \leq \frac{\alpha}{400(2-\alpha)}\frac{ \rho_{\min}^2 }{ \rho_{\max} } \frac{1}{n^{\kappa_2} \d \log \eta }$, 
	\begin{equation*}
	P \left( \| R_{Dj}^n \|_\infty \leq \frac{\alpha \lambda_{n}^{D}}{4(2 - \alpha)} \right) \geq 1 - 2 M \cdot \eta^{-2}.
	\end{equation*}
\end{corollary}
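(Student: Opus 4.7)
The plan is to mirror the proof of Lemma~\ref{lem13} almost verbatim, with the only change being that the conditioning set is now $\{1,\dots,j-1\}$ (the ancestor candidates in the causal ordering) rather than $V\setminus\{j\}$ (the candidate neighbors in the moralized graph). Concretely, starting from the definition~\eqref{eq:RnD}, I would apply the coordinate-wise mean value theorem a second time to $A_j''(\langle\theta, X_{1:j-1}^{(i)}\rangle) - A_j''(\langle\bar\theta_{D_j}^{(t)}, X_{1:j-1}^{(i)}\rangle)$, which produces for each $t \in \T = \pa(j)$ an expression
\begin{equation*}
R_{Djt}^n = \frac{1}{n}\sum_{i=1}^n A_j'''\bigl(\langle \bar{\bar\theta}_{D_j}^{(t)}, X_{1:j-1}^{(i)}\rangle\bigr)\, X_t^{(i)} \cdot v\,(\widetilde\theta_{D_j}-\theta_{D_j}^*)^T X_{1:j-1}^{(i)} (X_{1:j-1}^{(i)})^T (\widetilde\theta_{D_j}-\theta_{D_j}^*),
\end{equation*}
where $\bar{\bar\theta}_{D_j}^{(t)}$ lies on the line between $\theta_{D_j}^*$ and the intermediate point $\bar\theta_{D_j}^{(t)}$ and $v\in[0,1]$.

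Next I would condition on the two events used throughout Appendix~\ref{SecThmStep1Proof}: the boundedness event $\xi_1 = \{\max_{j,i}|X_j^{(i)}| < 4\log\eta\}$ (Proposition~\ref{Prop2}) and the derivative-control event $\xi_2 = \{\max_i \langle \bar{\bar\theta}_{D_j}^{(t)}, X_{1:j-1}^{(i)}\rangle < \kappa_1 \log\eta\}$. Under $\xi_2$, Assumption~\ref{A4} yields $|A_j'''(\cdot)| \le n^{\kappa_2}$; under $\xi_1$, $|X_t^{(i)}|\le 4\log\eta$ and the spectral norm of $\frac1n\sum_i X_{1:j-1}^{(i)}(X_{1:j-1}^{(i)})^T$ restricted to the support is bounded by $\rho_{\max}$ by Assumption~\ref{A1Dep}. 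Combining these gives the clean deterministic bound $\|R_{Dj}^n\|_\infty \le 4 n^{\kappa_2} \log(\eta)\,\rho_{\max}\,\|\widetilde\theta_\T - \theta_{\T}^*\|_2^2$ on $\xi_1\cap\xi_2$.

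At this point I plug in Corollary~\ref{cor12}, which under the hypothesis $\|W_{Dj}^n\|_\infty \le \lambda_n^D/4$ guarantees $\|\widetilde\theta_\T - \theta_{\T}^*\|_2 \le \frac{5}{\rho_{\min}}\sqrt{d}\,\lambda_n^D$ with probability at least $1 - 2M\eta^{-2}$. Substituting this yields
\begin{equation*}
\|R_{Dj}^n\|_\infty \le \frac{100\,\rho_{\max}}{\rho_{\min}^2}\, d\, n^{\kappa_2}\log(\eta)\,(\lambda_n^D)^2,
\end{equation*}
and the hypothesis $\lambda_n^D \le \frac{\alpha}{400(2-\alpha)}\frac{\rho_{\min}^2}{\rho_{\max}}\frac{1}{n^{\kappa_2} d \log\eta}$ is exactly what is needed to compress this to $\frac{\alpha\lambda_n^D}{4(2-\alpha)}$. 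Finally, using $P(A^c) \le P(A^c\cap \xi_1\cap\xi_2) + P(\xi_1^c) + P(\xi_2^c)$, I bound $P(\xi_1^c)$ by Proposition~\ref{Prop2} and $P(\xi_2^c)$ by the same H\"older/Chernoff calculation used in Section~\ref{SubSecProofLem12} (using Lemma~\ref{Lem:RestFaithfulness}(a) to ensure $\|\theta_{D_j}^*\|_1$ obeys the same bound as $\|\theta_{M_j}^*\|_1$ in Assumption~\ref{A4}). Adding together gives the $1 - 2M\eta^{-2}$ bound claimed.

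The main obstacle, if any, is verifying that $\xi_2$ really can be controlled at the $M\eta^{-2}$ rate in the directed setting: this hinges on having $\|\widetilde\theta_{\T} - \theta_{D_j}^*\|_1 \le \|\theta_{D_j}^*\|_1$ so that $\|\bar{\bar\theta}_{D_j}^{(t)}\|_1 \le 2\|\theta_{D_j}^*\|_1 \le \kappa_1/3$ by Assumption~\ref{A4}, and then applying Proposition~\ref{Prop1} with $\delta = \kappa_1$ and $c' = 2\|\theta_{D_j}^*\|_1$. The first condition follows from Corollary~\ref{cor12} combined with the minimum-signal condition $\min_{t\in\mathcal N(j)} |[\theta_D^*]_t| \ge \frac{10}{\rho_{\min}}\sqrt{d}\lambda_n$ assumed in Theorem~\ref{ThmDirectGraph}, so everything closes up and the $2M\eta^{-2}$ failure probability is achieved.
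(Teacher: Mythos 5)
Your proposal is correct and follows essentially the same route as the paper, which itself proves Corollary~\ref{cor13} only implicitly by noting that the directed-case corollaries repeat the arguments of Lemmas~\ref{lem11}--\ref{lem13} with the conditioning set changed from $V\setminus\{j\}$ to $\{1,\dots,j-1\}$. Your reconstruction of the Lemma~\ref{lem13} argument (second mean-value expansion, the events $\xi_1,\xi_2$ with Assumptions~\ref{A3Con} and~\ref{A4}, the plug-in of Corollary~\ref{cor12}, and the arithmetic turning the $\lambda_n^D$ upper bound into the $\frac{\alpha\lambda_n^D}{4(2-\alpha)}$ threshold) matches the intended proof.
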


Consider the choice of regularization parameter $\lambda_n^{D} = \frac{16 \max\{ n^{\kappa_2} \log \eta, \log^2 \eta \}}{n^{a}}$ where $a \in (2 \kappa_2, 1/2)$. Then, the condition for Corollary~\ref{cor11} is satisfied, and therefore $\|W_{Dj}^{n}\|_{\infty} \leq \frac{\lambda_n^D}{4}$. Moreover, the conditions for Corollaries~\ref{cor12} and~\ref{cor13} are satisfied for a sufficiently large sample size $n \geq D' \max\{ ( \d \log^2 \eta )^{ \frac{1}{  a - 2 \kappa_2 } }, ( \d \log^3 \eta )^{ \frac{1}{  a - \kappa_2 } } \}$ for a positive constant $D'$. Therefore, there exist some positive constants $D_1, D_2$ and $D_3$ such that
\begin{equation}
\| \widetilde{Z}_{\S^c}\|_\infty \leq ( 1- \alpha ) + ( 2- \alpha) \left[ \frac{ \| W_{Dj}^{n} \|_\infty }{ \lambda_{n}^{D}} + \frac{ \| R_{Dj}^n \|_\infty }{ \lambda_{n}^{D}} \right] \leq ( 1- \alpha ) + \frac{ \alpha }{4} + \frac{ \alpha }{4} < 1,
\end{equation}
with probability of at least $1 - D_1 \d \exp( - D_2 n^{ 1 - 2a } )- D_3 \eta^{-2}$.  

For sign consistency, it is sufficient to show that $\|\widehat{\theta}_{D_j}- \theta_{D_j}^* \|_{\infty} \leq \frac{ \|\theta_{D_j}^*\|_{\min} }{2}$. By Corollary~\ref{cor12}, we have $\|\widehat{\theta}_{D_j} - \theta_{D_j}^* \|_{\infty} \leq \|\widehat{\theta}_{D_j} - \theta_{D_j}^* \|_{2} \leq \frac{5}{\lambda_{\min}} \sqrt{\d}~\lambda_{n}^{D} \leq \frac{\|\theta_{D_j}^*\|_{\min} }{2}$ as long as $\|\theta_{D_j}^*\|_{\min} \geq \frac{10}{\lambda_{\min}} \sqrt{\d}~\lambda_{n}^{D}$. 

Lastly, Lemma~\ref{Lem:RestFaithfulness}(a) guarantees that $\ell_1$-penalized likelihood regression recovers the parent set for each node with high probability. Because we have $p$ regression problems if $n \geq D' \max\{ ( \d \log^2 \eta )^{ \frac{1}{  a - 2 \kappa_2 } }, ( \d \log^3 \eta )^{ \frac{1}{  a - \kappa_2 } } \}$, the full DAG model is recovered with high probability:
\begin{equation*}
P( \widehat{G} = G ) \geq 1 - D_1 \d \cdot p \cdot \exp( - D_2 n^{1- 2 a} ) - D_3 \eta^{-1}. 
\end{equation*}

\end{proof}

\end{document}